\def\eqref#1{equation~\ref{#1}}
\def\1{\bm{1}}
\DeclareMathAlphabet{\mathsfit}{\encodingdefault}{\sfdefault}{m}{sl}
\SetMathAlphabet{\mathsfit}{bold}{\encodingdefault}{\sfdefault}{bx}{n}
\DeclareMathOperator*{\argmin}{arg\,min}
\theoremstyle{plain}
\newtheorem{theorem}{Theorem}[section]
\newtheorem{proposition}[theorem]{Proposition}
\newtheorem{lemma}[theorem]{Lemma}
\theoremstyle{definition}
\newtheorem{definition}[theorem]{Definition}
\newtheorem{assumption}[theorem]{Assumption}
\theoremstyle{remark}
\newtheorem{remark}[theorem]{Remark}
\def\ci{\perp\!\!\!\perp}
\newcommand{\DIME}{\textbf{DIME}\xspace}
\newcommand{\GDFS}{\textbf{GDFS}\xspace}
\newcommand{\DQN}{\textbf{DQN}\xspace}
\newcommand{\LTM}{\textbf{L2M}\xspace}
\newcommand{\GP}{\textbf{GP}\xspace}
\newcommand{\BNN}{\textbf{BNN}\xspace}
\newcommand{\Mini}{\textbf{MiniBooNE}\xspace}
\newcommand{\Metabric}{\textbf{Metabric}\xspace}
\newcommand{\MNIST}
{\textbf{MNIST}\xspace}
\newcommand{\MIMIC}
{\textbf{MIMIC-IV}\xspace}
\icmltitlerunning{Learning-To-Measure: In-Context Active Feature Acquisition}
\begin{document}

\twocolumn[
  \icmltitle{Learning-To-Measure: In-Context Active Feature Acquisition}

  % It is OKAY to include author information, even for blind submissions: the
  % style file will automatically remove it for you unless you've provided
  % the [accepted] option to the icml2026 package.

  % List of affiliations: The first argument should be a (short) identifier you
  % will use later to specify author affiliations Academic affiliations
  % should list Department, University, City, Region, Country Industry
  % affiliations should list Company, City, Region, Country

  % You can specify symbols, otherwise they are numbered in order. Ideally, you
  % should not use this facility. Affiliations will be numbered in order of
  % appearance and this is the preferred way.
  \icmlsetsymbol{equal}{*}

  \begin{icmlauthorlist}
    \icmlauthor{Yuta Kobayashi}{equal,zzz}
    \icmlauthor{Zilin Jing}{zzz}
    \icmlauthor{Jiayu Yao}{zzz}
    \icmlauthor{Hongseok Namkoong}{zzz}
    \icmlauthor{Shalmali Joshi}{zzz}
    %\icmlauthor{}{sch}
    %\icmlauthor{}{sch}
  \end{icmlauthorlist}

  % \icmlaffiliation{yyy}{Department of Biomedical Informatics, Columbia University, NY, U.S.A}
  \icmlaffiliation{zzz}{Columbia University, NY, U.S.A}
  % \icmlaffiliation{sch}{School of ZZZ, Institute of WWW, Location, Country}

  \icmlcorrespondingauthor{Yuta Kobayashi, Shalmali Joshi}{yk3043@cumc.columbia.edu, shalmali.joshi@columbia.edu}

  % You may provide any keywords that you find helpful for describing your
  % paper; these are used to populate the "keywords" metadata in the PDF but
  % will not be shown in the document
  \icmlkeywords{Machine Learning, ICML}

  \vskip 0.3in
]

% this must go after the closing bracket ] following \twocolumn[ ...

% This command actually creates the footnote in the first column listing the
% affiliations and the copyright notice. The command takes one argument, which
% is text to display at the start of the footnote. The \icmlEqualContribution
% command is standard text for equal contribution. Remove it (just {}) if you
% do not need this facility.

% Use ONE of the following lines. DO NOT remove the command.
% If you have no special notice, KEEP empty braces:
\printAffiliationsAndNotice{}  % no special notice (required even if empty)
% Or, if applicable, use the standard equal contribution text:
% \printAffiliationsAndNotice{\icmlEqualContribution}

\begin{abstract}
Active feature acquisition (AFA) is a sequential decision-making problem where the goal is to improve model performance for test instances by adaptively selecting which features to acquire. In practice, AFA methods often learn from retrospective data with systematic missingness in the features and limited task-specific labels. To address this limitation, we introduce Learning-to-Measure (L2M), a meta-learning framework that consists of i) reliable uncertainty quantification over unseen tasks, and ii) an uncertainty-guided feature acquisition agent that maximizes conditional mutual information. We demonstrate an autoregressive pre-training approach that underpins reliable uncertainty quantification and feature acquisition across tasks with arbitrary missingness. L2M operates directly on datasets with retrospective missingness and performs the task in-context, eliminating per-task retraining. Across synthetic and real-world tabular benchmarks, L2M matches or surpasses task-specific baselines, particularly under scarce labels and high missingness rates.
\end{abstract}

\section{Introduction}

Machine learning (ML) methods typically operate under the assumption that all input features are available at inference time. However, this assumption does not hold in scenarios where acquiring certain features involves significant costs or risks, such as medical diagnostics \citep{erion2022cost}. For example, acquiring imaging data or invasive biopsies may incur substantial financial costs and pose potential risks to patient safety \citep{callender2021benefit}. In such cases, it becomes necessary to weigh the expected predictive benefit of each feature against its acquisition cost. 

Active feature acquisition (AFA) addresses this problem by learning an agent to adaptively select which features to acquire or observe for each sample \citep{ma2018eddi, shim2018joint, von2023evaluationstatic}. AFA is naturally a sequential decision-making problem, where past feature acquisitions informs future acquisition decisions. Prior AFA work uses either greedy acquisition strategies that maximize estimates of one-step expected information gain ~\citep{ma2018eddi,  gong2019icebreaker, covert2023learning, chattopadhyay2023variational, gadgil2024estimating}, or reinforcement learning approaches that learn value (or Q-) functions for multi-step feature acquisition \citep{shim2018joint, kachuee2019opportunistic, janisch2019classification, li2021active}.

\begin{figure*}[t]
\centering
\includegraphics[width=1\linewidth]{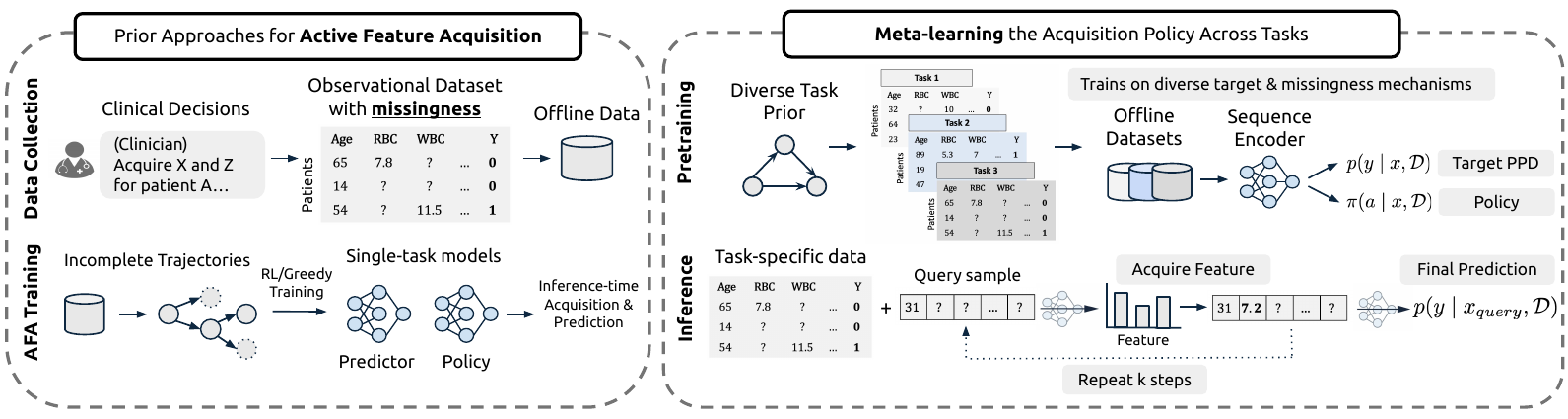}
\caption{Overview: (\emph{Left}) Prior AFA approaches train single-task models using greedy or RL methods on a fixed observational dataset, which may contain missing features due to data collection practices. To handle the incomplete trajectories that result from this retrospective missingness, these methods often rely on deterministic imputations. (\emph{Right}) Our approach pretrains a single sequence model with a target head and a policy head on a diverse task prior from which training datasets are sampled. The target head outputs a posterior predictive distribution, and the policy head selects features that minimize uncertainty in this distribution. By exposing the model to diverse target functions and missingness mechanisms during pretraining, the task prior yields calibrated uncertainty estimates across varying dataset sizes and levels of feature missingness.}
\label{fig:figure1_schematic}
\end{figure*}

%Yet, most existing AFA methods are trained for a single task on retrospective observational data. 
% \hn{We circumvent this by imposing a rather strong assumption that we have access to a huge dataset with MAR patterns. This should be said out loud early to set the right expectations on the scope of the work.}
 Most AFA methods suffer from several key limitations. First, historical training data often exhibits retrospective missingness, defined as the absence of features arising from prior data collection policies such as clinical protocols, resource constraints, and workflow decisions, and patient behavior. For example, in chest-pain triage, clinical guidelines prioritize first-line laboratory tests before ordering imaging or invasive studies, so missingness in the historical record depends directly on prior observations. The literature shows that methods which do not explicitly account for the missingness structure tend to inherit acquisition bias and produce poorly calibrated uncertainty estimates
  \citep{von2025evaluation, von2023evaluationstatic}. Common remedies are insufficient: feature imputation methods ignore feature informativeness, while models that ignore missingness without properly calibrating uncertainty tend to reproduce existing missingness patterns.
% JY: I restructured this paragraph (see above). the original text is below. 
% Most AFA methods suffer from several bottlenecks. 
% First, existing methods rely on historical datasets with structured missingness, which leads to acquisition bias and poor uncertainty quantification ~\citep{von2023evaluationdynamic, von2023evaluationstatic}. We refer to this inherent missingness in the historical data as ``retrospective missingness''. For example, clinical data are often incomplete due to clinical protocols, resource constraints, workflow decisions, and patient behavior. This leads to systematic missingness in features across subpopulations and limited task-specific data labels \citep{jeanselme2022imputation, chang2024racial, zink2024access}. Consider chest-pain triage in the emergency department, where clinical guidelines typically prioritize first-line laboratory tests, followed by additional invasive testing or X-ray studies, leading to missingness that depends on past observations. Agents trained on retrospective data may inherit the same acquisition bias. Feature imputation fails to account for feature informativeness, while models that ignore missingness without properly calibrating uncertainty tend to reproduce existing missingness patterns. 
% Imputing features will not calibrate on the informativeness of the feature; likewise, models that ignore missingness without modulating uncertainty will replicate missingness patterns. 

Second, existing AFA methods are designed for single, predetermined tasks rather than general-purpose capability aligned with the foundation model paradigm~\citep{bommasani2021opportunities}. Moreover, many prior approaches rely on complex latent-variable models with heuristic approximations to make generative modeling feasible. These methods typically obtain uncertainty through posterior sampling, which is often difficult to scale and unreliable, especially in high-dimensional settings \citep{ma2018eddi, li2021active, peis2022missing}.

To address these challenges, we introduce Learning-to-Measure (\LTM), an in-context  AFA approach that leverages the uncertainty quantification capabilities of pre-trained sequence models \citep{nguyen2022transformer, ye2024exchangeable, mittal2026architectural}. At its core, \LTM\ couples principled uncertainty estimation with a feature acquisition policy. 
\LTM\ operates directly on retrospectively missing data and solves the AFA task in-context. Under standard assumptions such as missing at random (MAR) and sufficient coverage of acquisition patterns, \LTM\ can be applied without task-specific retraining.
% , provided the missingness satisfies certain assumptions such as missing at random (MAR) and the data contains sufficient coverage of acquisitions. 

\LTM consists of two stages: (i) pretraining across tasks with diverse missingness mechanisms to quantify predictive uncertainty of a target variable given partially observed inputs, and (ii) meta-training a policy network to acquire features that reduce this predictive uncertainty. We implement the first stage using sequence modeling over data sequences to capture reliable beliefs under missingness. In the second stage, we optimize a policy end-to-end using a smooth, differentiable approximation of information gain. \LTM\ removes the need for latent-variable approximations,  performs calibrated and scalable uncertainty estimation via direct sequence prediction. This yields a principled approach to sequential information acquisition across tasks.   Figure~\ref{fig:figure1_schematic} depicts the schematic of the \LTM  framework at inference.

Our contributions are the following:
\begin{asparaenum}

\item \textbf{Meta-learning AFA across diverse tasks and missingness patterns:} We formalize the problem of meta-learning AFA policies across time-invariant tasks, where the feature values do not change over time, with diverse data distributions and retrospective missingness patterns.  

\item \textbf{Combining uncertainty estimation and decision-making via sequence modeling:} We propose \LTM, a scalable transformer-based approach for end-to-end sequential information maximization. The sequence model provides reliable uncertainty estimates for a target variable given partially observed inputs. We show how to leverage these estimates to acquire features that minimize expected loss under both greedy and lookahead acquisition strategies. To learn the acquisition policy, we design a smooth, differentiable approximation of the optimization objective, yielding a fully auto-differentiable training framework. To our knowledge, this is the first work to demonstrate that such a formulation enables meta-learning of both greedy and planning-based AFA policies that  generalize across diverse tasks.

\item \textbf{Robustness to limited labeled data and missingness:} We empirically show that our meta-learning-based approach, \LTM, outperforms task-specific baselines across datasets of varying sizes and degrees of missingness,
% settings in which meta-learning provides gains over task-specific methods, 
particularly when labeled data are scarce, and feature missingness is high.

%The rest of the paper is organized as follows. 
We first formalize the meta-AFA problem in Section~\ref{section:problem-setup}. We then extend meta-AFA to settings with missing data and establish identifiability conditions under which the resulting optimization problem can be solved using observational data (Section~\ref{section:afa-missingness}). Section~\ref{section:method} presents the core components of \LTM and our pre-training procedure. In Section~\ref{section:sequence-modeling}, we introduce the sequence modeling–based meta-learning framework for predicting the outcome. Section~\ref{section:policy-optimization} outlines our proposed policy optimization objective. Section~\ref{section:exps}  demonstrates the empirical utility of our method.

\end{asparaenum}

\section{Related Work}
\textbf{Active Feature Acquisition (AFA).} Time-invariant AFA methods fall into two main classes:

\textit{Greedy AFA policies:} These methods iteratively acquire features by greedily maximizing the expected information gain \citep{ma2018eddi, covert2023learning,chattopadhyay2023variational}. For example, \citet{ma2018eddi}, \citet{gong2019icebreaker} and \citet{chattopadhyay2022interpretable} use generative models to impute potential outcomes of all possible acquisitions and select the greedy action. \citet{covert2023learning} and \citet{ghosh2023difa} learn a policy network to directly predict the greedy action, guided by the loss of a separate prediction model. 
\citet{gadgil2024estimating} learns a value network to estimate the information gain directly. Theoretical work has shown that greedy policies achieve near-optimal performance compared to non-myopic ones under certain conditions~\citep{golovin2011adaptive, chen2015sequential}.

\textit{Planning-based policies.} An alternative view treats AFA as a sequential decision-making problem addressed using reinforcement learning (RL). Model-based approaches learn a generative transition model using synthetic rollouts for data-efficient policy learning \citep{zannone2019odin, li2021active}. Model-free approaches directly learn value or Q-functions from offline data, selecting features that maximize expected returns~\citep{shim2018joint, kachuee2019opportunistic, janisch2019classification}. However, these approaches are prone to model misspecification, given the challenges of offline value approximation and credit assignment over long acquisition trajectories \citep{erion2022cost}.

\textbf{AFA under retrospective missingness.} Few studies examine how retrospective missingness affects feature acquisitions~\citep{ma2021identifiable, von2023evaluationstatic}. Most prior work either assumes fully observed data or uses simple imputation strategies such as conditional mean imputation, inducing statistical bias in policy evaluation~\citep{von2023evaluationstatic}. Model-based approaches provide a principled alternative when missingness assumptions hold, but face practical limitations: task-specific generative models are hard to estimate with limited data, particularly in high dimensions \citep{zannone2019odin, li2021active, li2024distribution}.

\textbf{Meta-learning via sequence modeling.} Our proposed solution formulates feature acquisition as an in-context decision-making problem using sequence modeling, thereby avoiding explicit generative modeling assumptions. A growing body of work connects sequence models for meta-learning and in-context learning (ICL), to Bayesian inference \citep{muller2021transformers, nguyen2022transformer, ye2024exchangeable}, and extends this perspective to decision-making problems \citep{lee2023supervised, lin2024transformers, tianhui2024active, moeini2025survey}. In contrast, the AFA problem requires learning acquisition policies from offline data without directly observing reward-maximizing actions.

\section{The Meta-Active Feature Acquisition Problem}
\label{section:problem-setup}
We formalize the meta-AFA problem, where a single agent is trained across a distribution of tasks and generalizes to unseen tasks at inference time without per-task retraining.
Let $\mathcal{T}$ denote a family of supervised learning tasks. We assume an unknown probability measure $\mathbb{P}$ over $\mathcal{T}$, from which individual tasks $\tau \sim \mathbb{P}$ are sampled. For each task $\tau$, data pairs $(X, Y) \sim P_{\tau}(X, Y)$ are sampled from a task-specific distribution, where $X \in \mathbb{R}^{d_\tau}$ is a $d_{\tau}$-dimensional feature vector and $Y \in \mathcal{Y}$ is the target variable.
We assume the features are time-invariant, i.e., the values of $X$ do not evolve over time. Let $X_j$ denote the value of feature $j$ and let $X_0$ denote the baseline features that are always observed.
At each acquisition step $t \in \{1,\ldots, T\}$, the agent selects an action $A_t \in \{1,\ldots,d_{\tau}\}$, corresponding to the index of the next feature to acquire.  We denote by $\underline{X}_t = \{X_0,\ldots,X_{A_t}\}$ the set of acquired features up to and including step $t$.  For notational simplicity, we omit the task subscript $\tau$ and write $d$ when unambiguous, noting that the feature and action space are task-dependent. Throughout, random variables are written in uppercase and their realizations in lowercase.

Given a new task, 
\emph{meta-active feature acquisition (meta-AFA)} selects at each step $t$ the next feature to acquire based on the partially observed features $\underline{X}_t$, with the objective of reducing predictive uncertainty on the target variable $Y$. 
We consider a fixed budget $b < d$ and for simplicity, uniform feature costs.

One common approach to (task-specific) AFA is to acquire features \emph{greedily} based on the \emph{expected} reduction in uncertainty, an approach rooted in Bayesian experimental design \citep{bernardo1979expected}. At each step $t$, the method acquires the feature $X_j$ that 
maximizes the conditional mutual information (CMI) with the target:
\begin{equation}
    \begin{aligned}
        &I_{\tau}(Y; X_j \mid \underline{X}_t =\underline{x}_t) \triangleq  \\
        &\mathbb{E}_{X_j \mid \underline {x}_t} \left[ D_{\mathrm{KL}} \big( P_\tau(Y \mid X_{j} \cup \underline{x}_t) \,\big\|\, P_\tau(Y \mid \underline{x}_t) \big)  \right].
    \end{aligned}
    \label{eq:expected_delta_cmi}
\end{equation}
Computing this quantity requires modeling the one-step conditional distributions $P_\tau(X_j|\underline{X}_t)$, $P_\tau(Y|\underline{X}_t, X_j)$ and $P_\tau(Y|\underline{X}_t)$. Reliable estimation of these conditionals is challenging because retrospective missingness leads to non-uniform coverage of feature combinations in the training set, a challenge we formalize in the following section.

\subsection{Tasks with Retrospective Missingness} 
\label{section:afa-missingness}
In practice, datasets often exhibit missingness with task-dependent mechanisms and rates, which can significantly affect the performance of AFA methods. To address this, we extend the meta-AFA problem to settings with retrospective missingness. In this setting, CMI (\Cref{eq:expected_delta_cmi}) is only identifiable and estimable under specific conditions.  We formalize these conditions using a causal identifiability framework based on potential outcomes~\citep{rubin1976inference}, 
% \hn{Openreview matches based on citations. Cite potential outcome textbooks so that you don't get DAG reviewers. } 
establishing when the distributions required for CMI estimation can be recovered from retrospectively missing data. 

Let $R \in \{0, 1\}^d$ denote the binary indicators of feature observability~\citep{nabi2020full}. $X{(1)}$ is the ``potential outcome'' of $X$, had $R=1$ been true, i.e., the value that would have been observed had the feature been measured. For a given $X_j{(1)} \in X{(1)}$ and $R_j \in R$, the realized feature value is generated by the following deterministic feature revelation mechanism:
\begin{align*}
    X_{j} = 
    \left\{ \begin{array}{ll}
            X_j{(1)} & \text{if } R_j=1 \\[0.2em]
            ``?" & \text{if } R_j=0
    \end{array} \right.
\end{align*}
%We now provide the conditions required for the optimization problem. 
Accordingly, the CMI estimand (Equation~\ref{eq:expected_delta_cmi}) can be equivalently written as:
\begin{equation}
    \begin{aligned}
        %J(j, \underline{x}_t) = 
        I_{\tau}(Y; X_j \mid \underline{X}_t =\underline{x}_t) \equiv  I_{\tau}(Y;X_{j}{(1)} |\underline{X}_t= \underline{x}_t),
    \end{aligned}
\end{equation}
% Note here that $X_{j}{(1)}$ is a potential outcome of selecting a particular feature acquisition action $j\in \mathcal{A}$ for step $t+1$, and also denotes the random variable of $X_{j}$ under the \textit{reference distribution}. 
and must be estimated under the joint distribution in the absence of missingness, $P_\tau(X{(1)}, Y)$, which we refer to as the \textit{reference distribution}. Identifying the reference distribution, $P_\tau(X{(1)}, Y)$, from retrospectively missing data requires assumptions on the missingness mechanism, which are closely analogous to standard conditions used in off-policy evaluation. We formalize them as follows:
% Throughout this work, we assume a  \textit{missing-at-random} (MAR) mechanism for any task. Further, we assume that the underlying observational data distribution satisfies standard positivity assumptions: 

\begin{assumption} (\textit{Missing at Random or MAR})
    $R_{j} \ci X_{j}{(1)} \mid \underline{X}_t$.
\label{assumption:mar}
\end{assumption}
\begin{assumption} (\textit{No Direct Effect})
    $R_{j} \ci Y \mid X_{j}{(1)}, \underline X_t$.
\label{assumption:nde}
\end{assumption}
\begin{assumption} (\textit{Positivity})
$P_\tau(R_j=1 \mid \underline{X}_t=\underline{x}_t) > 0$ for all values $\underline x_t$ and $j \in \{1, ...,d\}$
\label{assumption:positivity}
\end{assumption}

% Without these assumptions, the information gain from counterfactually acquiring feature $j$ is not identifiable. 
Intuitively, Assumption \ref{assumption:mar} posits that any systematic differences between observed and missing data can be fully explained by the observed features, rather than by unobserved confounders.
Assumption \ref{assumption:nde} states that measuring a feature does not directly affect the target variable. Assumption \ref{assumption:positivity} requires sufficient data coverage of each feature acquisition action. Together, these assumptions yield the following identification result.

% \hn{The gist of the above assumptions should be clear to the reader before they reach this section, especially MAR and the positivity. Otherwise, reviewers will be taken by surprise at the restrictive nature of the assumptions. }

\begin{theorem}\label{thm:greedy_cmi}
    (Identification of CMI with retrospective missingness) The CMI for any subset $\underline X_t \subseteq X$ given by $I_\tau(Y;X_{j}{(1)} |\underline{X}_t= \underline{x}_t)$ is identified when $P_\tau(X{(1)}, Y)$ is identified. Under Assumption ~\ref{assumption:mar} (MAR), ~\ref{assumption:nde} (No Direct Effect), and ~\ref{assumption:positivity} (Positivity), the CMI can be estimated by 
    \begin{align*}
    %J(j, \underline{x}_t) = 
    I_\tau(Y;X_{j}{(1)} |\underline{X}_t= \underline{x}_t) = 
    I_\tau(Y;X_{j} |\underline{X}_t= \underline{x}_t, R_{j}=1)
    \end{align*}
\end{theorem}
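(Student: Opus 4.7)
The strategy is to show that every conditional distribution appearing in the CMI functional on the reference distribution $p_{\mathcal{T}}(X(1), Y)$ equals an observable counterpart on $p_{\mathcal{T}}(X, Y, R)$ obtained by conditioning on $R_j = 1$. Since CMI is a functional of conditional distributions, pointwise equality of those conditionals yields identification. I will work with the decomposition
\begin{equation*}
    I_{\mathcal{T}}(Y; X_j(1) \mid \underline{X}_t = \underline{x}_t) = \int p_{\mathcal{T}}(x_j(1) \mid \underline{x}_t)\, D_{\mathrm{KL}}\!\left( p_{\mathcal{T}}(Y \mid x_j(1), \underline{x}_t) \,\big\|\, p_{\mathcal{T}}(Y \mid \underline{x}_t) \right) dx_j(1),
\end{equation*}
so it suffices to identify the three conditionals $p_{\mathcal{T}}(X_j(1) \mid \underline{X}_t)$, $p_{\mathcal{T}}(Y \mid X_j(1), \underline{X}_t)$, and $p_{\mathcal{T}}(Y \mid \underline{X}_t)$.

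First I would handle the propensity/feature distribution. By Assumption~\ref{assumption:mar} (MAR), $R_j \ci X_j(1) \mid \underline{X}_t$, so $p_{\mathcal{T}}(X_j(1) \mid \underline{X}_t) = p_{\mathcal{T}}(X_j(1) \mid \underline{X}_t, R_j = 1)$. Combined with the deterministic revelation rule (on the event $R_j = 1$, we have $X_j = X_j(1)$), this gives $p_{\mathcal{T}}(X_j(1) \mid \underline{X}_t) = p_{\mathcal{T}}(X_j \mid \underline{X}_t, R_j = 1)$, which is an observable quantity whenever $p(R_j = 1 \mid \underline{X}_t = \underline{x}_t) > 0$; this is precisely what Assumption~\ref{assumption:positivity} guarantees for all $\underline{x}_t$ in the support. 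Next, for the outcome model, Assumption~\ref{assumption:nde} (exclusion restriction) gives $p_{\mathcal{T}}(Y \mid X_j(1), \underline{X}_t) = p_{\mathcal{T}}(Y \mid X_j(1), \underline{X}_t, R_j = 1)$, and again using $X_j = X_j(1)$ on $\{R_j = 1\}$ we obtain $p_{\mathcal{T}}(Y \mid X_j(1), \underline{X}_t) = p_{\mathcal{T}}(Y \mid X_j, \underline{X}_t, R_j = 1)$.

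For the marginal predictive $p_{\mathcal{T}}(Y \mid \underline{X}_t)$, I would marginalize over $X_j(1)$ using the identified conditionals above, which shows it equals $\int p_{\mathcal{T}}(Y \mid X_j = x_j, \underline{X}_t, R_j = 1)\, p_{\mathcal{T}}(X_j = x_j \mid \underline{X}_t, R_j = 1)\, dx_j$, an observable mixture. Plugging these identified conditionals back into the CMI decomposition yields exactly $I_{\mathcal{T}}(Y; X_j \mid \underline{X}_t, R_j = 1)$, completing the proof.

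The main obstacle is ensuring that the exclusion restriction is used in a symbolically clean way: one must verify that conditioning on $R_j = 1$ \emph{simultaneously} on both sides of the KL divergence is valid, which relies on $R_j$ being conditionally independent of $Y$ given $(X_j(1), \underline{X}_t)$ \emph{and} on the positivity in Assumption~\ref{assumption:positivity} so that all conditional densities are well-defined. A subtle point worth spelling out is that $\underline{X}_t$ itself already encodes previously acquired features, so the conditional independence statements in Assumptions~\ref{assumption:mar}--\ref{assumption:nde} are applied relative to the history $\underline{X}_t$; this is the analogue of sequential ignorability in off-policy evaluation and is the technical hinge of the argument.
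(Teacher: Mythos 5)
Your proposal is correct and follows essentially the same route as the paper's proof: both expand the CMI into the conditionals $p_{\mathcal{T}}(X_j(1)\mid \underline{X}_t)$, $p_{\mathcal{T}}(Y\mid X_j(1),\underline{X}_t)$, and $p_{\mathcal{T}}(Y\mid \underline{X}_t)$, insert $R_j=1$ into the conditioning sets via MAR and the exclusion restriction respectively, invoke positivity for well-definedness, and use the deterministic revelation rule to replace $X_j(1)$ with $X_j$ on $\{R_j=1\}$. Your explicit marginalization argument for $p_{\mathcal{T}}(Y\mid\underline{X}_t)$ is a small additional verification the paper leaves implicit, but it does not change the argument.
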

The proof is provided in Appendix~\ref{app:proof_cmi_identification}. Intuitively, once the reference distribution, $P_\tau(X{(1)}, Y)$, is identified, any functional of this distribution is also identified. However, estimating these functionals \emph{directly from complete cases} (\(R_j=1\)) at step \(t\) is valid only under the stated assumptions. This restriction arises from requiring pointwise identification of the CMI-optimal acquisition action for every $\underline{x}_t$. In practice, such generality is often unnecessary, since many states are never encountered.

\section{Method}
\label{section:method}
We now present \LTM, an end-to-end AFA framework that uses amortized optimization to meta-learn an acquisition policy with a transformer-based sequence model \citep{vaswani2017attention}. We begin by introducing our proposed Bayesian analogue of the CMI objective in Theorem \ref{thm:greedy_cmi}, using sequence models. Since the CMI objective is generally intractable, we formulate a tractable surrogate optimization problem that approximates the CMI objective. Finally, we relax the discrete action-selection problem with a smooth, differentiable approximation, enabling direct policy learning via gradient-based optimization.

\subsection{Test-Time Feature Acquisition via Meta-Learned Sequence Models}
\label{section:sequence-modeling}
At test time, we observe a context datase $\mathcal{D}_{\tau}^{\text{Context}} = \{(X_i, R_i, Y_i)\}_{i=1}^m$ consisting of $m \in \{1,..,N\}$ context samples drawn from an unknown task-specific distribution $P_\tau(X, R, Y)$, where $N$ is the maximum context length. Alongside this task-specific context, we are also given $M-m$ query samples, indexed by $q \in \{m+1, \dots, M\}$, from the same task distribution,  whose
  targets $Y^{(q)}$ are unobserved and must be predicted. For each query instance $q$ at acquisition step $t$, the features are only partially observed, denoted by $\underline{X}_{t}^{(q)}$. During test-time, the objective is to sequentially acquire features for each query instance to infer its target, utilizing both its currently observed features $\underline{X}_{t}^{(q)}$ and the task-specific context provided by $\mathcal{D}_{\tau}^{\text{Context}}$.

The key challenge of meta-AFA is to model the joint predictive distribution over the unobserved query targets:
\begin{align*}
&P(\text{Query Targets} \mid \text{Query States}, \text{Context Samples})
&\\
&\triangleq P\!\left(Y^{m+1:M} \mid \underline{X}_t^{m+1:M}, \mathcal{D}_{\tau}^{\text{Context}}\right)
\end{align*}
with sufficient flexibility while providing principled uncertainty estimates to guide feature acquisition. Sequence modeling offers a compelling solution: instead of explicitly modeling latent variables, autoregressive training over \emph{data sequences}, together with invariance-inducing inductive biases (Definitions~\ref{def:contxt_inv}, \ref{def:tar_inv}), enables practical approximation of pointwise posterior predictive distributions (PPD) directly from observations. This perspective builds on prior works that formalize the connection between sequence models and Bayesian inference~\citep{nguyen2022transformer, ye2024exchangeable, mittal2026architectural}. We note that the sequence model can be meta-learned using either synthetically generated tasks \citep{muller2021transformers} or real-world data \citep{gardner2024large}. 

We assume that all historical and query samples belonging to a given task are drawn independent and identically distributed (i.i.d.) from the underlying task distribution $P_\tau$. Consequently, the query instances are conditionally independent given the task identity. Under this assumption, sequence modeling decomposes the joint predictive distribution over query samples into a product of one-step conditional probabilities,
\begin{equation}
\label{Eq:Pointwise_PPD}
\begin{split}
&P(Y^{m+1:M} \mid \underline{X}_t^{m+1:M}, \mathcal{D}_{\tau}^{\text{Context}}) \\
&\quad = \prod_{q=m+1}^M P(Y^{(q)} \mid \underline{X}_t^{(q)},\mathcal{D}_{\tau}^{\text{Context}})
\end{split}
\end{equation}
where we note that $P(Y^{(q)} \mid \underline{X}_t^{(q)},\mathcal{D}_{\tau}^{\text{Context}})$ are pointwise PPDs. Intuitively, conditioning on a variable-length context of historical data allows the sequence model to infer the task-specific data-generating mechanism and amortize uncertainty estimation across partially observed queries. Once the joint predictive distribution is recovered via these one-step conditional probabilities, the naive strategy is to acquire the features with the maximum CMI given by 
\begin{align}
I(Y^{(q)}; X_j^{(q)} \mid \underline{X}_t^{(q)}=\underline{x}_t,  R_j^{(q)}=1, \mathcal{D}_{\tau}^{\text{Context}})
\label{eq:expected_delta_cmi_seq}
\end{align}
However, directly maximizing this CMI is impractical because it requires access to the true step-wise conditional distributions and expectations over all candidate feature $X_j^{(q)}$. In the following section,  we describe our methodology for constructing a tractable surrogate optimization problem using learned approximations.

\subsection{Meta-Learning the Predictor and Acquisition Policy}
\label{section:policy-optimization}

Rather than computing the CMI exactly, we adopt the practical approximation of \citet[Prop.~2]{covert2023learning}: we optimize the \emph{one-step-ahead} predictive loss of a predictor $f_\phi$ for $Y$ after acquiring a candidate feature $X_j$. However, instead of a standard predictor, $f_\phi$ is a meta-predictor that outputs an approximation of the PPDs in Equation~\ref{Eq:Pointwise_PPD}.  A meta-policy $\pi_\theta$ is then trained to directly minimize this one-step loss, providing a tractable surrogate for the CMI objective. We now describe how both the predictor and the acquisition policy are meta-learned from training data.

To enable gradient-based optimization, we model the acquisition policy as a stochastic categorical distribution $\pi_\theta\big(\cdot \mid \underline X_t^{(q)}, \mathcal{D}_{\tau}\big) \in \Delta^{d-1}$, where $\Delta^{d-1}$ denotes the probability simplex over the feature acquisition action space and $\mathcal{D}{\tau}$ denotes the pretraining dataset. 
During training, we further restrict the feature acquisition policy to ``blocked policies", which prevents the acquisition of features that are unavailable in the retrospectively observed training data. 
\begin{definition}[Blocked Policy]\label{def:blocked_policy}
A blocked policy \(\tilde{\pi}_\theta\) is a stochastic feature acquisition policy that, at each step \(t\), assigns non-zero probability only to features that have support in retrospectively observed data, i.e.,
\[
\tilde \pi_\theta(j \mid \cdot) = 0 \quad \text{if} \quad R_j = 0,
\]
where \(R_j = 1\) indicates that feature \(j\) is available.
\end{definition}
{\bf{Greedy policy optimization}}  Restricting attention to blocked policies, we define a surrogate objective for jointly training the policy and predictor. Denote the state for the $q$-th sample in the sequence as $S_t^{(q)} = (\underline{X}_t^{(q)},\, \underline{A}_{t-1}^{(q)})$, and $A_t$ is an action sampled from $\tilde \pi_\theta(S_t^{(q)}, \mathcal{D}_\tau^{1:m})$, where $\mathcal{D}_\tau^{1:m}$ denotes the first $m$ samples of pretraining data used as context. Let the per-action expected loss for a given query state $s_t^{(q)}$ be defined as:
\begin{align*}
&J(a_t; s_t^{(q)}, \mathcal{D}_\tau^{1:m}) =
&\\ 
&\mathbb{E}_{\substack{X_{a_t}^{(q)},\, Y^{(q)} \mid \underline s_t^{(q)}, \\ \mathcal{D}_\tau^{1:m},\, R_{a_t}=1}}
\Bigl[\ell\bigl(f_\phi(\cdot \mid s_t^{(q)}\!\cup\! X_{a_t}^{(q)},\, \mathcal{D}_\tau^{1:m}),\, Y^{(q)}\bigr)\Bigr]
\end{align*}
where we use $\ell$ to denote the loss (e.g., negative log-likelihood) for evaluating the predictor. This yields the overall sequence prediction objective. To compute the training objective, we sample a task, $\tau$, draw a fixed $N$-length dataset $\mathcal{D}_\tau^N$ from that task. We then compute the cumulative sequence loss by iterating over context sizes $m \in \{1, \ldots, N-1\}$, treating the first $m$ samples as context $\mathcal{D}_\tau^{1:m}$ and the $(m+1)$-th sample as the query. 
\begin{align}
&\mathcal{L}(f_\phi,\tilde \pi_\theta) = \nonumber \\
&\mathbb{E}_{\substack{\tau \sim \mathcal{T} \\ \mathcal{D}_\tau^N \sim P_\tau}}\! \Bigg[
\sum_{m=1}^{N-1} \mathbb{E}_{S_t^{(q)},R}\! \bigg[
\mathbb{E}_{A_{t} \sim \tilde{\pi}}\! \big[J(A_{t}; s_t^{(q)},\! \mathcal{D}_\tau^{1:m})\big]
\bigg] \Bigg]
\label{eq:seq_loss_policy}
\end{align}
The query states $S_t^{(q)}$ and corresponding feature availability $R$ are jointly sampled from $P_{\tau}$. In practice, we sample $S_t^{(q)}$ uniformly from the set of available transitions within the $(m+1)$-th sample in $\mathcal D_{\tau}^N$. Our main theorem shows that minimizing the above sequence loss recovers the CMI maximizing actions.
\begin{theorem}(Sketch)
\label{thm:seq_cmi}
Consider the sequence objective in Eq.~\ref{eq:seq_loss_policy} with cross-entropy loss and task-specific
context $\mathcal{D}_\tau^{1:m}=(X^{1:m},R^{1:m},Y^{1:m})$.
Any joint minimizer $(\theta^\star,\phi^\star)$ satisfies:
\begin{enumerate}[leftmargin=*, nosep]
\item \textbf{Bayes-optimal prediction:}\;
$f_{\phi^\star}(\cdot \mid \underline X_t^{(q)}, \mathcal{D}_\tau^{1:m})
= p(Y^{(q)} \mid \underline X_t^{(q)}, \mathcal{D}_\tau^{1:m})$ for all $q$.
\item \textbf{CMI optimal acquisition:}\;
$\pi_{\theta^\star}(\cdot \mid \underline x_t^{(q)}, \mathcal{D}_\tau^{1:m})$ assigns probability only to
\[
\arg\max_{a:\,R_a^{(q)}=1}\;
I\!\left(Y^{(q)}; X_a^{(q)} \mid \underline x_t^{(q)},\, \mathcal{D}_\tau^{1:m}\right),
\]
and is a point mass when the maximizer is unique.
\end{enumerate}
\end{theorem}
\vspace{-0.25em}
\noindent\textit{Full theorem and proof are in Appendix~\ref{app:proof_seq_cmi}.} 

\begin{figure*}[!ht]
    \centering
    \includegraphics[width=0.95\linewidth]{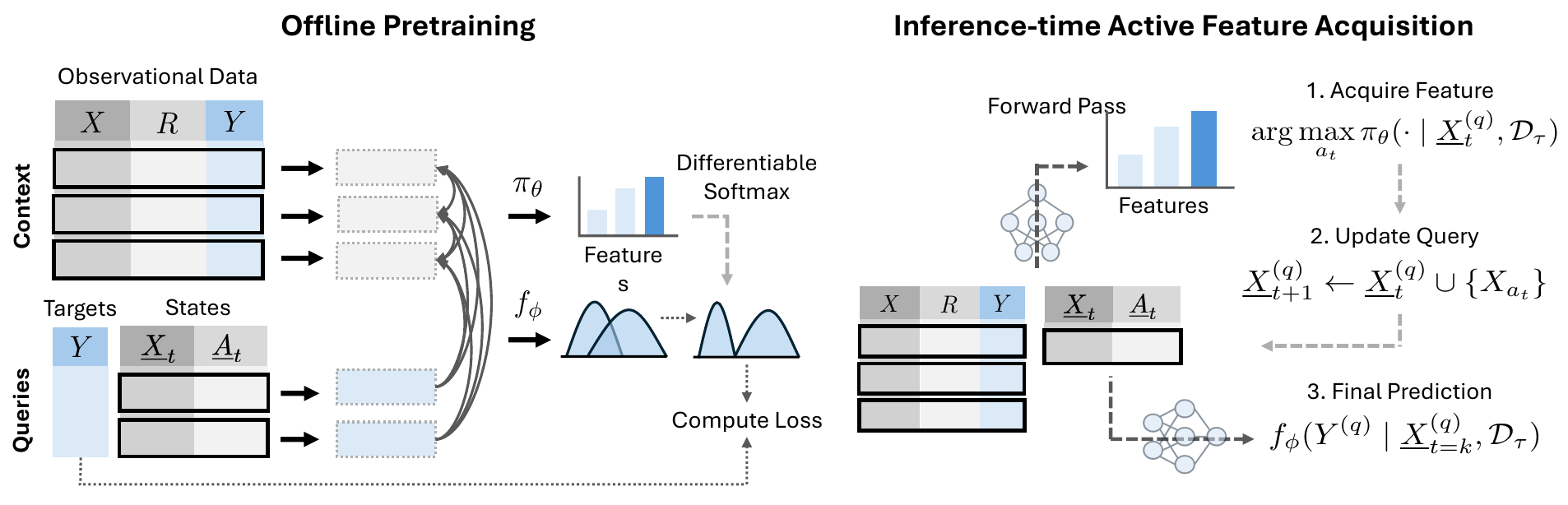}
    \caption{\emph{Left:} Pretraining procedure for the predictor $f_\phi$ and policy $\pi_\theta$. From the pretraining task prior, we sample the data-generating mechanisms for the features, labels, and missingness, and sample a dataset. A Transformer model takes the dataset and partially observed query states $(\underline X_t^{(q)}, \underline A_t^{(q)})$ as input and predicts the corresponding targets $Y^{(q)}$ and the optimal action $A^{(q)}$. We train end-to-end with an autoregressive sequence loss over queries, where the loss is backpropagated through the policy's differentiable sampling operation. \emph{Right:} The trained Transformer model is able to predict optimal acquisition actions for unseen query tasks in-context. We perform $k$ forward passes of the model, updating the state each time after a feature is acquired online. The attention mask to enable efficient autoregressive training under permutation invariance is given in Appendix~\ref{app:architecture}.}
    \label{fig:Pretraining}
\end{figure*}

Direct optimization of Equation~\ref{eq:seq_loss_policy} is non-differentiable due to the discrete nature of the action $A_t$, which is sampled from a categorical distribution.
To obtain gradients with respect to the policy parameters, we use the straight-through Gumbel–Softmax relaxation of the sampling operation  $A_t \sim \tilde{\pi}_\theta$. This relaxation replaces the non-differentiable categorical sample with a differentiable reparameterization, yielding a biased but low-variance gradient estimator that is more stable than zeroth-order methods such as REINFORCE \citep{williams1992simple, maddison2016concrete, mohamed2020monte}.

Specifically, at each step $t$, we draw Gumbel noise $\epsilon_t \sim \text{Gumbel}(0, 1)$ and form a relaxed action vector $\hat a_t=\mathrm{softmax}\!\left(\frac{\log \tilde \pi_\theta+\epsilon_t}{\eta}\right)$, where $\eta$ is the temperature parameter. In the forward pass, we obtain a discrete action $\tilde{A}_t$ by taking the hard (one-hot) sample of $\hat{a}_t$. We denote this combined operation by
$\tilde A_t = g_\theta(\epsilon_t;\,S_t^{(q)}, \mathcal D^{1:m})$. In the backward pass, gradients are propagated through the continuous relaxation $\hat{a}_t$.  Using this reparameterization, the gradient of Equation \ref{eq:seq_loss_policy} admits the following pathwise approximation:
\begin{align}
&\nabla_\theta \mathcal{L}(f_\phi,\tilde \pi_\theta) =
\notag \\
&
\mathbb{E}_{\substack{\tau \sim \mathcal{T} \\ \mathcal{D}_{\tau}^N \sim P_\tau}}\! \Bigg[ \sum_{m=1}^{N-1}
\mathbb{E}_{S_t^{{(q)}},R}
\Big[\mathbb{E}_{\epsilon_t}\big[\nabla_\theta
J(\tilde A_{t}; s_t^{(q)}, \mathcal{D}_\tau^{1:m})\big]
\Big]\Bigg]
\label{eq:pathwise_grad}
\end{align}
where the inner expectation is taken over the Gumbel noise. In practice, we approximate this expectation using Monte Carlo sampling by drawing $N_{\mathrm{MC}}$ independent noise samples
$\{\epsilon_{t}^{(i)}\}_{i=1}^{N_{\mathrm{MC}}}$ and form the corresponding reparameterized action trajectories
$\tilde A_{t}^{(i)} = g_\theta(\epsilon_{t}^{(i)};\, S_t^{(q)}, \mathcal{D}_\tau^{1:m})$.
The resulting pathwise gradient estimator is
\begin{align}
&\widehat{\nabla_\theta J} = \frac{1}{N_{\text{MC}}}\sum_{i=1}^{N_{\text{MC}}}
\nabla_\theta
J\big(\tilde A^{(i)}_{t}; s_t^{(q)}, \mathcal D_\tau^{1:m} \big)
\label{eq:pathwise_klookahead_grad_mc}
\end{align}

{\bf{Planning-based policy optimization}} In Appendix~\ref{app:lookahead_theory}, we extend the surrogate objective and the optimization procedure to $k$-step lookahead policies that optimize for expected loss after $k+1$ steps. In our experiments, we train one-step lookahead policies ($k=1$) and $N_{MC}=4$.

\subsection{Model Architecture and Training}
{\bf{Model Architecture.}} While our conceptual framework and policy optimization are agnostic to the specific parameterization of sequence models, we propose a joint Transformer encoder with a separate target predictor and acquisition policy head as a proof of concept. Relative to a standard Transformer, we make the following key changes to model the posterior predictive distributions in Equation~\ref{eq:seq_loss_policy}. (i) To represent missing features, each sample $i$ in a sequence is represented as the concatenation of its masked features, missingness indicator, and label: $[\mathbf{x}^i \odot \mathbf{r}^i, \mathbf{r}^i, y^i]$. The encoding strategy is standard in prior AFA work~\citep{covert2023learning, gadgil2024estimating, norcliffe2025stochastic}. (ii) To make the model invariant to the ordering of samples in the sequence, we remove positional embeddings and replace causal masking with an alternative attention masking structure during both training and inference for permutation invariance, consistent with related work \citep{nguyen2022transformer, ye2024exchangeable}. Additional details of our architecture are given in Appendix~\ref{app:architecture}. 

{\bf{Training.}} In practice, our pretraining consists of two stages. 
In the first stage, we pretrain $f_\phi$
on query points with random feature subsets $X_o$ for any $o \subset [d]$, exploiting the fact that the optimal predictor is independent of the policy
$\pi_\theta$.. In the second stage, we train $\pi_\theta$ with $f_\phi$ held fixed. At each training step, we sample a batch of tasks with different label distributions and missingness mechanisms from $\mathbb{P}_\mathcal{T}$, draw data for each task, and present the samples as a randomly permuted sequence to the model. We implement blocked policies (\Cref{def:blocked_policy}) by masking unavailable actions to ensure that missing features (with $R_j=0$) are not acquired by the policy during training. Details of the training procedure are provided in Algorithm~\ref{alg:gsm_training} and Algorithm~\ref{alg:gsm_training_policy} (Appendix~\ref{app:architecture}). Figure~\ref{fig:Pretraining} (left) summarizes the procedure for a single task. 

\section{Experiments}
\label{section:exps}
We aim to demonstrate the feasibility of our \LTM framework across multiple tasks and diverse applications. While we provide comprehensive experimental details in the Appendix~\ref{app:hyperparameters}, we provide a brief overview in this section.

{\bf{Simulated tasks.}} First, we evaluate our framework on fully synthetic classification and regression tasks to validate uncertainty quantification and planning behavior across diverse prior specifications and varying feature dimensions. Following the pretraining procedure in Figure~\ref{fig:Pretraining}, we specify the full data-generating process for each synthetic task by sampling its feature distribution, conditional label model, and missingness mechanism. 

Classification tasks are sampled from Bayesian neural network priors \citep{muller2021transformers}. To evaluate the benefit of lookahead loss, we additionally consider tasks with sparse pairwise feature synergies (\textbf{BNN-Plan}). Specifically, for selected pairs $(i,j)$, an interaction signal proportional to $x_i x_j$ becomes informative when $x_i x_j > \tau$. This thresholding induces interactions that can violate submodularity (diminishing returns), making greedy acquisition suboptimal.

For the regression setting, training tasks are sampled from a Gaussian Process (\GP) prior with a zero-mean $m(x)=0$ and randomized RBF kernels, denoted as $\tau_i \sim \mathcal{GP}(m, k)$. To access generalization, we sample evaluation GP tasks from both familiar RBF kernels, and unseen Matern kernels.

{\bf{Real-world tasks.}} Next, we evaluate \LTM on realistic tasks derived from real-world tabular datasets: \Metabric~\citep{curtis2012genomic}, \Mini~\citep{roe2005boosted}, and \MIMIC~\citep{johnson2023mimic}. During pre-training, we construct semi-synthetic classification tasks by sampling labels from \BNN without additional interactions. The feature distributions are obtained by sampling real instances and introducing a known synthetic missingness mechanism. We then evaluate model performance on tasks with varying sample sizes and degrees of synthetic missingness, using labels from the original datasets. This setup preserves realistic feature distributions while enabling a controlled assessment of model generalization to real-world tasks.

To further demonstrate the generality of our framework, we evaluate \LTM on \MNIST, where training task labels are drawn from real binary digit-pair tasks. Images are divided into $d=20$ candidate pixel blocks for acquisition. At each training step, we sample a binary classification task between two randomly chosen digits, training the model to adaptively differentiate between images of two digits in-context. We evaluate the performance on datasets with varying missingness and sample sizes, using unseen queries.

{\bf{Baselines.}}
Because prior AFA work is task-specific, we compare against AFA methods that train a separate model for each task. We focus on two greedy, CMI-based approaches: gradient dynamic feature selection (\GDFS \citep{covert2023learning}), which uses MLPs instead of sequence modeling, and discriminative mutual information estimation (\DIME \citep{gadgil2024estimating}), which trains an MLP as a value network to estimate CMI directly. To ensure fair evaluation, both \LTM and task-specific methods are evaluated on the same held-out tasks and query sets. We also include an RL  baseline using Deep Q-learning (\DQN) \citep{shim2018joint, kachuee2019opportunistic, janisch2019classification}, but exclude it from synthetic tasks due to the prohibitive computational cost of learning hundreds of test tasks. To ensure comparability with greedy strategies, we define the per-step reward as the log likelihood under the current predictor, and each trajectory terminates when all features are acquired. Additionally, we include meta-learning ablations in Appendix~\ref{fig:meta-ablations} where we evaluate conditional neural processes that learns a fixed task representation, and additional gradient-based task-specific fine-tuning.
\begin{figure}[!ht]
    \centering
    \includegraphics[width=\linewidth]{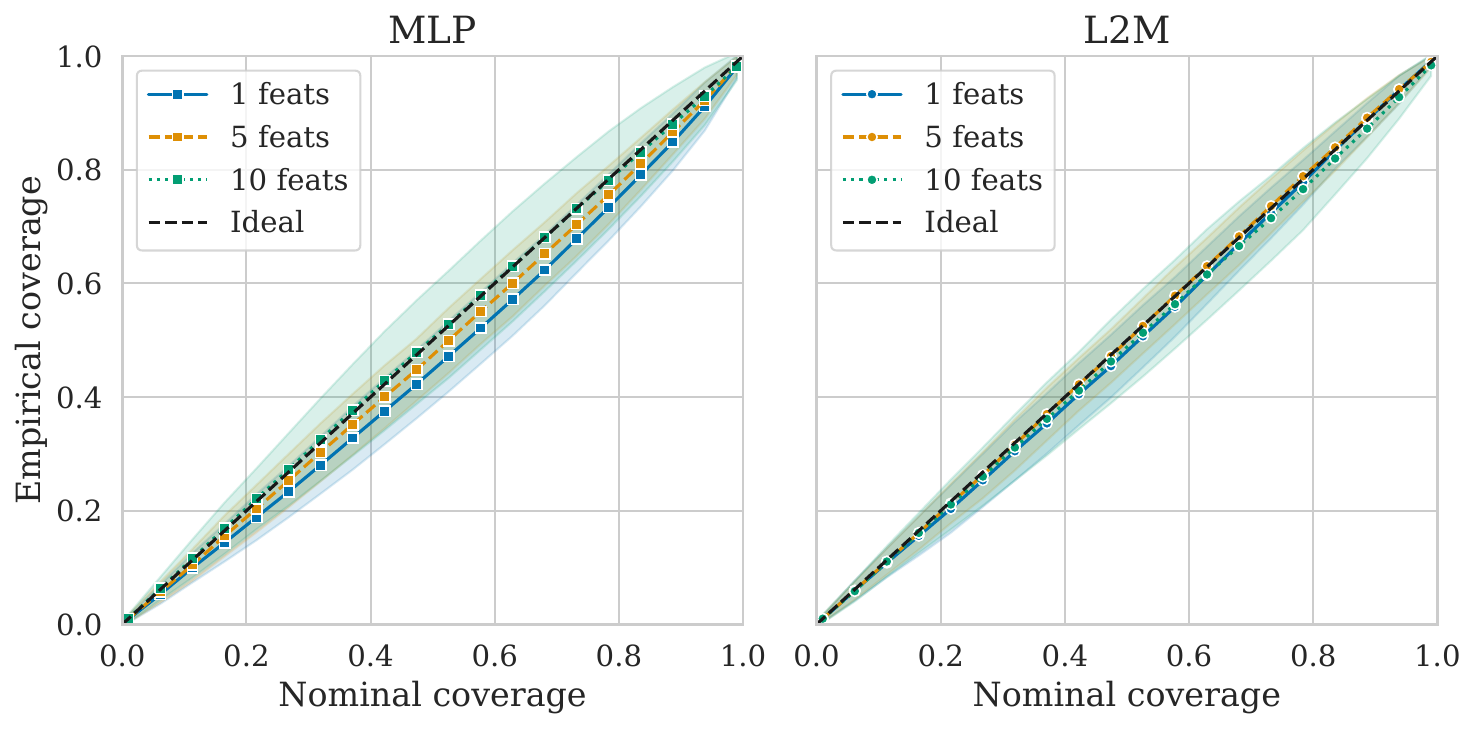}
    \caption{Coverage plots comparing MLP and \LTM at various acquisition steps. \LTM shows robust coverage compared to the MLP. The dashed diagonal line denotes perfect calibration. Across the acquisition trajectory, \LTM remains closer to the ideal line than the MLP baseline. Shaded regions indicate standard error across 200 sampled evaluation tasks. To ensure a fair comparison, both methods are evaluated on the same random acquisition trajectories using consistent evaluation tasks and samples.}
    \label{fig:UQ_GP}
\end{figure}

\begin{figure*}[!ht]
    \centering\includegraphics[width=0.9\linewidth]{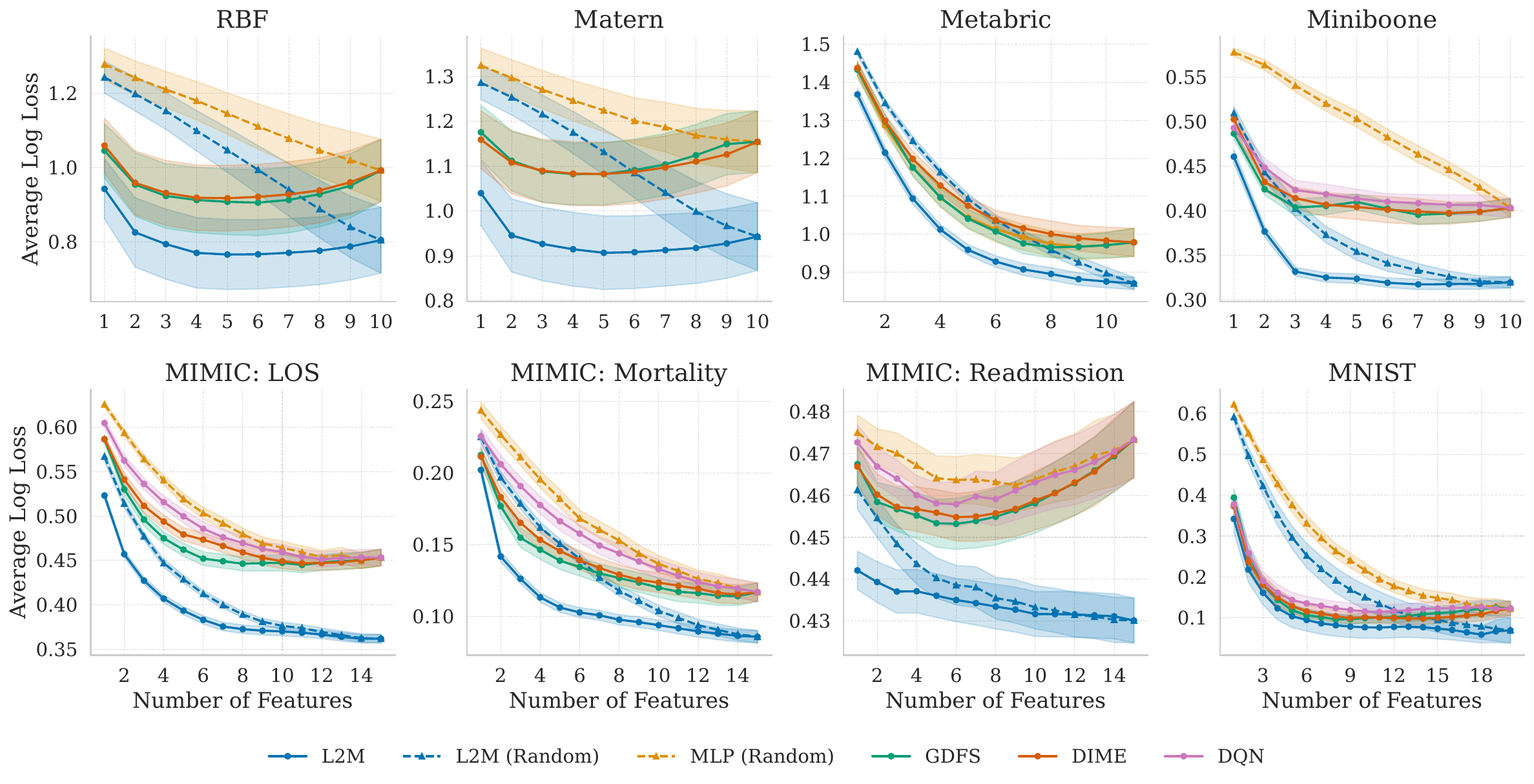}
    \caption{Acquisition performance measured by log loss (lower is better), averaged over tasks derived from various synthetic and real-world datasets. RBF and Mat\'ern results demonstrate that our meta-learning approach generalizes to both in-distribution and out-of-distribution tasks, respectively. MIMIC-IV results show that a single pretrained L2M model generalizes to diverse tasks with real, unseen labels. In many cases, acquisition performance also surpasses task-specific greedy and RL approaches. Additional metrics are provided in Appendix Figure~\ref{fig:AFA_auroc}.}
    \label{fig:AFA_performance}
\end{figure*}

\subsection{Results}
{\bf{L2M achieves more reliable uncertainty quantification of target variable prediction than task-specific models.}} We show that meta-learning across diverse tasks enhances uncertainty quantification relative to task-specific baselines. We evaluate uncertainty quality using three complementary metrics: empirical vs. nominal coverage \citep{kompa2021empirical}, log loss, and mean squared error (MSE) \citep{nguyen2022transformer}. Figure~\ref{fig:UQ_GP} shows that \LTM outperforms task-specific MLP baselines in average task-level coverage for synthetic GP regression tasks across an acquisition trajectory of 1 to 10 features. Additional analysis provided in Appendix Figure~\ref{fig:UQ_GP_improvements} shows that \LTM also achieves lower log loss and MSE. Additional results on semi-synthetic and real-world classification tasks are reported in Appendix Figure~\ref{fig:triple_uq_classification} and ~\ref{fig:triple_uq_classification_bnn}. We attribute this improvement in uncertainty quantification to \LTM leveraging its diverse learned task prior to mitigate the effects of reduced coverage. This improvement in uncertainty quantification is critical, as it directly translates to stronger downstream acquisition performance for both the RBF and Matérn kernel tasks, as shown in Figure~\ref{fig:AFA_performance}.

% \hn{Maybe I missed this. I'd highlight what the differentiability buys you and the impact of using suboptimal probabilistic models. }
{\bf{\LTM pretraining induces adaptive acquisition behavior across evaluation tasks.}} Next, we demonstrate empirical evidence that our proposed pretraining approach allows \LTM to learn feature acquisition policies that generalize to a diverse set of tasks. We use log loss at each feature acquisition step to evaluate whether the policy is successfully reducing uncertainty about the target variable. Figure~\ref{fig:AFA_performance} demonstrates that (1) \LTM achieves lower log loss compared to random acquisition from the same model and (2) \LTM consistently achieves lower log loss than all task-specific baselines on evaluation tasks across datasets (additional metrics are shown in Appendix Figure~\ref{fig:AFA_auroc}). The magnitude of these gains varies by dataset, depending on how well the synthetic or semi-synthetic pretraining task prior aligns with the downstream task, and the inherent task complexity of the real-world tasks. Nonetheless, \LTM’s advantage over task-specific AFA methods is consistent, highlighting the value of reliable uncertainty quantification learned through meta-training on diverse tasks. In contrast, task-specific AFA baselines degrade as more features are acquired, due to the difficulties of jointly learning predictors and acquisition policies from limited data. A key finding is that \LTM delivers the \emph{largest benefits in regimes with shorter contexts and higher rates of retrospective missingness} compared to task-specific models, as shown in Appendix Figure~\ref{fig:insights}. Autoregressive meta-training enables reliable uncertainty propagation across different context lengths, yielding robustness under varying acquisition budgets. Furthermore, we hypothesize that robust performance across varying rates of retrospective missingness arises from our sequence modeling framework, which explicitly propagates the uncertainty induced by missing historical data rather than collapsing it through deterministic point imputations.These properties are especially useful in healthcare, where labeled data are scarce and missingness is prevalent.

{\bf{\LTM can be trained using lookahead signals for multi-step planning.}} We demonstrate that our lookahead variant meta-learns policies that can match or potentially improve over greedy acquisition under a diverse fully-synthetic task priors (Figure~\ref{fig:Planning_results}). We further show that our pathwise estimation framework leads to improved sample efficiency and lower error in gradient estimation for greedy and lookahead objectives when meta-training on diverse tasks (Appendix Figures~\ref{fig:Gradient_bias}). 

\begin{figure}[!ht]
    \centering
    \begin{subfigure}{\columnwidth}
        \centering
        \includegraphics[width=\linewidth]{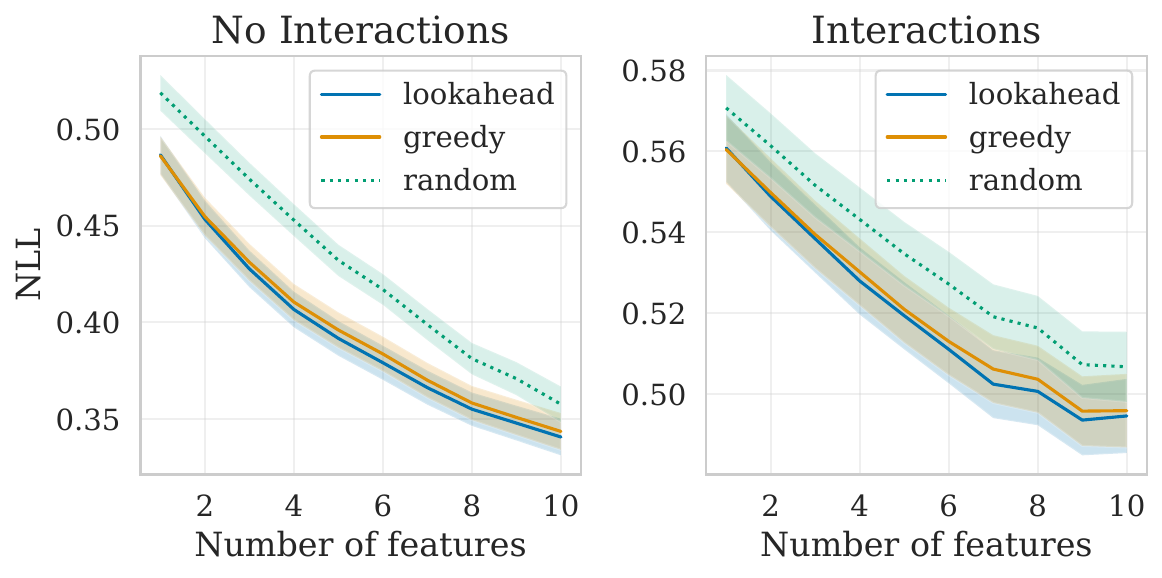}
        \caption{NLL across acquisition steps}
        \label{fig:Context Length}
    \end{subfigure}
    \caption{We compare \LTM trained on synthetic tasks with a greedy objective to a one-step lookahead variant. (a) Evaluation is on 200 synthetic tasks where (i) greedy acquisition is sufficient and (ii) pairwise feature synergies make lookahead beneficial. The lookahead variant outperforms greedy in both settings, with the largest gains emerging at later acquisition steps.}
    \label{fig:Planning_results}
\end{figure}

{\bf{\LTM pretrained on fully synthetic tasks generalizes to unseen real-world tasks.}} We provide additional results demonstrating that our \LTM framework can be trained on a fully synthetic task prior with synthetically generated covariate distributions, and still generalize beyond the training distribution. We show that under this challenging setting, the model (1) learns to acquire features effectively on in-distribution synthetic tasks, demonstrating comparable performance to task-specific models (Appendix Figure~\ref{fig:sim_recall}) and (2) learns policies that outperform random acquisition when performing in-context learning on unseen real-world data distributions (Appendix Figure~\ref{fig:zero-shot-l2m}). We attribute this generalization to the ability of in-context learning to identify the closest mechanism from its pretraining prior when faced with unseen tasks \citep{nagler2023statistical}. We futher show that task-specific finetuning improves performance and mitigates prior mismatch for the fully synthetic variant (Appendix Figure~\ref{fig:zero-shot-l2m}). Finally, we provide stress tests using unseen real-world missingness patterns in \MIMIC, showing that synthetic pretraining can still identify policies that improve over random acquisition on out-of-distribution real-world tasks (Appendix Figure~\ref{fig:real_missingness_mimic}).

\section{Discussion}
{\bf{Conclusion.}} In this work, we introduced the meta-AFA problem and presented \LTM, an end-to-end differentiable, uncertainty-driven framework for feature acquisition that performs in-context learning across tasks. Our contributions are threefold. First, we formalized the meta-AFA problem and established the theoretical assumptions under which meta-learning acquisition policies is feasible. Second, we developed an algorithm that enables in-context active feature acquisition, leveraging meta-learned approximations of target conditional distributions for AFA policy optimization. Third, we empirically validated our framework across synthetic and real-world settings, demonstrating reliable uncertainty estimation and principled acquisition behavior under both in-distribution and out-of-distribution conditions compared to task-specific baselines. Notably, \LTM exhibits robust performance under limited labeled data and significant retrospective missingness, conditions that pose substantial challenges for existing approaches. We believe that the meta-AFA formulation opens promising avenues for future research, including extending the framework to broader data modalities, and scaling pretraining to richer task distributions and missingness patterns.

{\bf{Limitations}} \textbf{\emph{(i)}} Our approach relies on sufficient offline action coverage and MAR, an untestable but realistic assumption about the missingness mechanism. Future work will relax these assumptions and investigate whether our uncertainty estimates can serve as informative bounds or diagnostics for violations of positivity or MAR \citep{jesson2020identifying}. \textbf{\emph{(ii)}} Empirically, we demonstrate the utility with tabular models pretrained from scratch on simple synthetic task priors. Scaling to diverse, large-scale real datasets across domains, while plausible, may require principled prior-specification procedures. Leveraging priors encoded in pretrained language models is another promising direction. \textbf{\emph{(iii)}}  We do not consider solutions for the full cost-sensitive MDP formulation, which requires multi-step planning for long-term reward. We discuss limitations of short-horizon (i.e., greedy) acquisition in Appendix~\ref{app:greedy_discussion}. \emph{(iv)} While we demonstrate proof-of-concept on medium-scale tabular datasets, extension to large tabular datasets is plausible using larger pretraining architectures, such as using TabPFN~\citep{grinsztajn2025tabpfn,hollmann2025tabpfn,hollmann2023tabpfn} with additional compute resources. \textbf{\emph{(v)}} Finally, we restrict attention to time-invariant settings; extending to time-varying dynamics is a crucial aspect of future work.
% Acknowledgements should only appear in the accepted version.
% \section*{Acknowledgements}

\section*{Impact Statement}
This paper introduces a meta-learning framework for sequentially acquiring information under uncertainty, allowing models to learn acquisition strategies that generalize across a distribution of tasks. The approach can improve the efficiency of prediction systems by reducing unnecessary measurements while maintaining performance, which is relevant for domains like clinical triage. Our work is a proof of concept, and we do not claim deployment readiness: real-world use would require prospective validation, careful specification of costs and utilities, and human oversight.

\section*{Acknowledgements}

YK acknowledges partial support from the Precision Psychiatry and Mental Health Pilot Award at Columbia. YK, ZJ, and SJ acknowledge partial support from 5R01MH137679-02. ZJ and SJ acknowledge partial support from the Google Research Scholar Award. JY acknowledges support from the Columbia University Data Science Institute. Any opinions, findings, conclusions, or recommendations in this manuscript are those of the authors and do not reflect the views, policies, endorsements, expressed or implied, of any aforementioned funding agencies/institutions.

%\bibliography{example_paper}
\bibliography{references}

@article{von2025evaluation,
  title={Evaluation of active feature acquisition methods for time-varying feature settings},
  author={von Kleist, Henrik and Zamanian, Alireza and Shpitser, Ilya and Ahmidi, Narges},
  journal={Journal of Machine Learning Research},
  volume={26},
  number={60},
  pages={1--84},
  year={2025}
}

@inproceedings{nagler2023statistical,
  title={Statistical foundations of prior-data fitted networks},
  author={Nagler, Thomas},
  booktitle={International Conference on Machine Learning},
  pages={25660--25676},
  year={2023},
  organization={PMLR}
}

@article{kompa2021empirical,
  title={Empirical frequentist coverage of deep learning uncertainty quantification procedures},
  author={Kompa, Benjamin and Snoek, Jasper and Beam, Andrew L},
  journal={Entropy},
  volume={23},
  number={12},
  pages={1608},
  year={2021},
  publisher={MDPI}
}

@article{bommasani2021opportunities,
  title={On the opportunities and risks of foundation models},
  author={Bommasani, Rishi and Hudson, Drew A and Adeli, Ehsan and Altman, Russ and Arora, Simran and von Arx, Sydney and Bernstein, Michael S and Bohg, Jeannette and Bosselut, Antoine and Brunskill, Emma and others},
  journal={arXiv preprint arXiv:2108.07258},
  year={2021}
}

@inproceedings{mcdermott2025meds,
  title={Meds: Building models and tools in a reproducible health ai ecosystem},
  author={McDermott, Matthew BA and Xu, Justin and Bergamaschi, Teya S and Jeong, Hyewon and Lee, Simon A and Oufattole, Nassim and Rockenschaub, Patrick and Stankevi{\v{c}}i{\=u}t{\.e}, Kamil{\.e} and Steinberg, Ethan and Sun, Jimeng and others},
  booktitle={Proceedings of the 31st ACM SIGKDD Conference on Knowledge Discovery and Data Mining V. 2},
  pages={6243--6244},
  year={2025}
}

@article{vaswani2017attention,
  title={Attention is all you need},
  author={Vaswani, Ashish and Shazeer, Noam and Parmar, Niki and Uszkoreit, Jakob and Jones, Llion and Gomez, Aidan N and Kaiser, {\L}ukasz and Polosukhin, Illia},
  journal={Advances in neural information processing systems},
  volume={30},
  year={2017}
}

@article{von2023evaluationstatic,
  title={Evaluation of Active Feature Acquisition Methods for Static Feature Settings},
  author={von Kleist, Henrik and Zamanian, Alireza and Shpitser, Ilya and Ahmidi, Narges},
  journal={arXiv preprint arXiv:2312.03619},
  year={2023}
}

@article{chattopadhyay2023variational,
  title={Variational information pursuit for interpretable predictions},
  author={Chattopadhyay, Aditya and Chan, Kwan Ho Ryan and Haeffele, Benjamin D and Geman, Donald and Vidal, Ren{\'e}},
  journal={arXiv preprint arXiv:2302.02876},
  year={2023}
}

@inproceedings{covert2023learning,
  title={Learning to maximize mutual information for dynamic feature selection},
  author={Covert, Ian Connick and Qiu, Wei and Lu, Mingyu and Kim, Na Yoon and White, Nathan J and Lee, Su-In},
  booktitle={International Conference on Machine Learning},
  pages={6424--6447},
  year={2023},
  organization={PMLR}
}

@article{ma2018eddi,
  title={Eddi: Efficient dynamic discovery of high-value information with partial vae},
  author={Ma, Chao and Tschiatschek, Sebastian and Palla, Konstantina and Hern{\'a}ndez-Lobato, Jos{\'e} Miguel and Nowozin, Sebastian and Zhang, Cheng},
  journal={arXiv preprint arXiv:1809.11142},
  year={2018}
}

@inproceedings{li2021active,
  title={Active feature acquisition with generative surrogate models},
  author={Li, Yang and Oliva, Junier},
  booktitle={International conference on machine learning},
  pages={6450--6459},
  year={2021},
  organization={PMLR}
}

@inproceedings{zannone2019odin,
  title={Odin: Optimal discovery of high-value information using model-based deep reinforcement learning},
  author={Zannone, Sara and Hern{\'a}ndez-Lobato, Jos{\'e} Miguel and Zhang, Cheng and Palla, Konstantina},
  booktitle={ICML Real-world Sequential Decision Making Workshop},
  year={2019}
}

@article{kachuee2019opportunistic,
  title={Opportunistic learning: Budgeted cost-sensitive learning from data streams},
  author={Kachuee, Mohammad and Goldstein, Orpaz and Karkkainen, Kimmo and Darabi, Sajad and Sarrafzadeh, Majid},
  journal={arXiv preprint arXiv:1901.00243},
  year={2019}
}

@article{ma2021identifiable,
  title={Identifiable generative models for missing not at random data imputation},
  author={Ma, Chao and Zhang, Cheng},
  journal={Advances in Neural Information Processing Systems},
  volume={34},
  pages={27645--27658},
  year={2021}
}

@article{shim2018joint,
  title={Joint active feature acquisition and classification with variable-size set encoding},
  author={Shim, Hajin and Hwang, Sung Ju and Yang, Eunho},
  journal={Advances in neural information processing systems},
  volume={31},
  year={2018}
}

@inproceedings{janisch2019classification,
  title={Classification with costly features using deep reinforcement learning},
  author={Janisch, Jarom{\'\i}r and Pevn{\`y}, Tom{\'a}{\v{s}} and Lis{\`y}, Viliam},
  booktitle={Proceedings of the AAAI Conference on Artificial Intelligence},
  volume={33},
  pages={3959--3966},
  year={2019}
}

@article{li2024distribution,
  title={Distribution Guided Active Feature Acquisition},
  author={Li, Yang and Oliva, Junier},
  journal={arXiv preprint arXiv:2410.03915},
  year={2024}
}

@article{bernardo1979expected,
  title={Expected information as expected utility},
  author={Bernardo, Jos{\'e} M},
  journal={the Annals of Statistics},
  pages={686--690},
  year={1979},
  publisher={JSTOR}
}

@misc{grinsztajn2025tabpfn,
  title={TabPFN-2.5: Advancing the State of the Art in Tabular Foundation Models},
  author={Léo Grinsztajn and Klemens Flöge and Oscar Key and Felix Birkel and Philipp Jund and Brendan Roof and
          Benjamin Jäger and Dominik Safaric and Simone Alessi and Adrian Hayler and Mihir Manium and Rosen Yu and
          Felix Jablonski and Shi Bin Hoo and Anurag Garg and Jake Robertson and Magnus Bühler and Vladyslav Moroshan and
          Lennart Purucker and Clara Cornu and Lilly Charlotte Wehrhahn and Alessandro Bonetto and
          Bernhard Schölkopf and Sauraj Gambhir and Noah Hollmann and Frank Hutter},
  year={2025},
  eprint={2511.08667},
  archivePrefix={arXiv},
  url={https://arxiv.org/abs/2511.08667},
}

@article{hollmann2025tabpfn,
 title={Accurate predictions on small data with a tabular foundation model},
 author={Hollmann, Noah and M{\"u}ller, Samuel and Purucker, Lennart and
         Krishnakumar, Arjun and K{\"o}rfer, Max and Hoo, Shi Bin and
         Schirrmeister, Robin Tibor and Hutter, Frank},
 journal={Nature},
 year={2025},
 month={01},
 day={09},
 doi={10.1038/s41586-024-08328-6},
 publisher={Springer Nature},
 url={https://www.nature.com/articles/s41586-024-08328-6},
}

@article{mohamed2020monte,
  title={Monte carlo gradient estimation in machine learning},
  author={Mohamed, Shakir and Rosca, Mihaela and Figurnov, Michael and Mnih, Andriy},
  journal={Journal of Machine Learning Research},
  volume={21},
  number={132},
  pages={1--62},
  year={2020}
}

@inproceedings{ghosh2023difa,
  title={Difa: Differentiable feature acquisition},
  author={Ghosh, Aritra and Lan, Andrew},
  booktitle={Proceedings of the AAAI Conference on Artificial Intelligence},
  volume={37},
  pages={7705--7713},
  year={2023}
}

@inproceedings{hollmann2023tabpfn,
  title={TabPFN: A transformer that solves small tabular classification problems in a second},
  author={Hollmann, Noah and M{\"u}ller, Samuel and Eggensperger, Katharina and Hutter, Frank},
  booktitle={International Conference on Learning Representations 2023},
  year={2023}
}

@article{muller2021transformers,
  title={Transformers can do bayesian inference},
  author={M{\"u}ller, Samuel and Hollmann, Noah and Arango, Sebastian Pineda and Grabocka, Josif and Hutter, Frank},
  journal={arXiv preprint arXiv:2112.10510},
  year={2021}
}

@inproceedings{nguyen2022transformer,
  title={Transformer Neural Processes: Uncertainty-Aware Meta Learning Via Sequence Modeling},
  author={Nguyen, Tung and Grover, Aditya},
  booktitle={International Conference on Machine Learning},
  pages={16569--16594},
  year={2022},
  organization={PMLR}
}

@article{erion2022cost,
  title={A cost-aware framework for the development of AI models for healthcare applications},
  author={Erion, Gabriel and Janizek, Joseph D and Hudelson, Carly and Utarnachitt, Richard B and McCoy, Andrew M and Sayre, Michael R and White, Nathan J and Lee, Su-In},
  journal={Nature Biomedical Engineering},
  volume={6},
  number={12},
  pages={1384--1398},
  year={2022},
  publisher={Nature Publishing Group UK London}
}

@article{callender2021benefit,
  title={Benefit, harm, and cost-effectiveness associated with magnetic resonance imaging before biopsy in age-based and risk-stratified screening for prostate cancer},
  author={Callender, Thomas and Emberton, Mark and Morris, Stephen and Pharoah, Paul DP and Pashayan, Nora},
  journal={JAMA Network Open},
  volume={4},
  number={3},
  pages={e2037657--e2037657},
  year={2021},
  publisher={American Medical Association}
}

@article{chattopadhyay2022interpretable,
  title={Interpretable by design: Learning predictors by composing interpretable queries},
  author={Chattopadhyay, Aditya and Slocum, Stewart and Haeffele, Benjamin D and Vidal, Ren{\'e} and Geman, Donald},
  journal={IEEE Transactions on Pattern Analysis and Machine Intelligence},
  volume={45},
  number={6},
  pages={7430--7443},
  year={2022},
  publisher={IEEE}
}

@article{peis2022missing,
  title={Missing data imputation and acquisition with deep hierarchical models and hamiltonian monte carlo},
  author={Peis, Ignacio and Ma, Chao and Hern{\'a}ndez-Lobato, Jos{\'e} Miguel},
  journal={Advances in Neural Information Processing Systems},
  volume={35},
  pages={35839--35851},
  year={2022}
}

@article{ye2024exchangeable,
  title={Exchangeable Sequence Models Quantify Uncertainty Over Latent Concepts},
  author={Ye, Naimeng and Namkoong, Hongseok},
  journal={arXiv preprint arXiv:2408.03307},
  year={2024}
}

@article{mittal2026architectural,
  title={Architectural and Inferential Inductive Biases for Exchangeable Sequence Modeling},
  author={Mittal, Daksh and Li, Leon and Yen, Thomson and Guetta, C and Namkoong, Hongseok},
  journal={Advances in Neural Information Processing Systems},
  volume={38},
  pages={154422--154455},
  year={2026}
}

@inproceedings{nabi2020full,
  title={Full law identification in graphical models of missing data: Completeness results},
  author={Nabi, Razieh and Bhattacharya, Rohit and Shpitser, Ilya},
  booktitle={International conference on machine learning},
  pages={7153--7163},
  year={2020},
  organization={PMLR}
}

@inproceedings{gadgil2024estimating,
  title={Estimating conditional mutual information for dynamic feature selection},
  author={Gadgil, Soham and Covert, Ian and Lee, Su-In},
  booktitle={International Conference on Learning Representations},
  volume={2024},
  pages={34979--35010},
  year={2024}
}

@article{gong2019icebreaker,
  title={Icebreaker: Element-wise efficient information acquisition with a bayesian deep latent gaussian model},
  author={Gong, Wenbo and Tschiatschek, Sebastian and Nowozin, Sebastian and Turner, Richard E and Hern{\'a}ndez-Lobato, Jos{\'e} Miguel and Zhang, Cheng},
  journal={Advances in neural information processing systems},
  volume={32},
  year={2019}
}

@inproceedings{chen2015sequential,
  title={Sequential information maximization: When is greedy near-optimal?},
  author={Chen, Yuxin and Hassani, S Hamed and Karbasi, Amin and Krause, Andreas},
  booktitle={Conference on Learning Theory},
  pages={338--363},
  year={2015},
  organization={PMLR}
}

@article{williams1992simple,
  title={Simple statistical gradient-following algorithms for connectionist reinforcement learning},
  author={Williams, Ronald J},
  journal={Machine learning},
  volume={8},
  number={3},
  pages={229--256},
  year={1992},
  publisher={Springer}
}

@article{garg2025real,
  title={Real-tabpfn: Improving tabular foundation models via continued pre-training with real-world data},
  author={Garg, Anurag and Ali, Muhammad and Hollmann, Noah and Purucker, Lennart and M{\"u}ller, Samuel and Hutter, Frank},
  journal={arXiv preprint arXiv:2507.03971},
  year={2025}
}

@article{golovin2011adaptive,
  title={Adaptive submodularity: Theory and applications in active learning and stochastic optimization},
  author={Golovin, Daniel and Krause, Andreas},
  journal={Journal of Artificial Intelligence Research},
  volume={42},
  pages={427--486},
  year={2011}
}

@article{tianhui2024active,
  title={Active Exploration via Autoregressive Generation of Missing Data},
  author={Tianhui Cai, Tiffany and Namkoong, Hongseok and Russo, Daniel and Zhang, Kelly W},
  journal={arXiv e-prints},
  pages={arXiv--2405},
  year={2024}
}

@inproceedings{lin2024transformers,
  title={Transformers as decision makers: Provable in-context reinforcement learning via supervised pretraining},
  author={Lin, Licong and Bai, Yu and Mei, Song},
  booktitle={International Conference on Learning Representations},
  volume={2024},
  pages={31176--31233},
  year={2024}
}

@article{rubin1976inference,
  title={Inference and missing data},
  author={Rubin, Donald B},
  journal={Biometrika},
  volume={63},
  number={3},
  pages={581--592},
  year={1976},
  publisher={Oxford University Press}
}

@article{moeini2025survey,
  title={A survey of in-context reinforcement learning},
  author={Moeini, Amir and Wang, Jiuqi and Beck, Jacob and Blaser, Ethan and Whiteson, Shimon and Chandra, Rohan and Zhang, Shangtong},
  journal={arXiv preprint arXiv:2502.07978},
  year={2025}
}

@article{norcliffe2025stochastic,
  title={Stochastic Encodings for Active Feature Acquisition},
  author={Norcliffe, Alexander and Lee, Changhee and Imrie, Fergus and Van Der Schaar, Mihaela and Lio, Pietro},
  journal={arXiv preprint arXiv:2508.01957},
  year={2025}
}

@article{wang2024meta,
  title={Meta-DT: Offline meta-RL as conditional sequence modeling with world model disentanglement},
  author={Wang, Zhi and Zhang, Li and Wu, Wenhao and Zhu, Yuanheng and Zhao, Dongbin and Chen, Chunlin},
  journal={Advances in Neural Information Processing Systems},
  volume={37},
  pages={44845--44870},
  year={2024}
}

@inproceedings{finn2017model,
  title={Model-agnostic meta-learning for fast adaptation of deep networks},
  author={Finn, Chelsea and Abbeel, Pieter and Levine, Sergey},
  booktitle={International conference on machine learning},
  pages={1126--1135},
  year={2017},
  organization={PMLR}
}

@inproceedings{rakelly2019efficient,
  title={Efficient off-policy meta-reinforcement learning via probabilistic context variables},
  author={Rakelly, Kate and Zhou, Aurick and Finn, Chelsea and Levine, Sergey and Quillen, Deirdre},
  booktitle={International conference on machine learning},
  pages={5331--5340},
  year={2019},
  organization={PMLR}
}

@inproceedings{garnelo2018conditional,
  title={Conditional neural processes},
  author={Garnelo, Marta and Rosenbaum, Dan and Maddison, Christopher and Ramalho, Tiago and Saxton, David and Shanahan, Murray and Teh, Yee Whye and Rezende, Danilo and Eslami, SM Ali},
  booktitle={International conference on machine learning},
  pages={1704--1713},
  year={2018},
  organization={PMLR}
}

@article{jesson2020identifying,
  title={Identifying causal-effect inference failure with uncertainty-aware models},
  author={Jesson, Andrew and Mindermann, S{\"o}ren and Shalit, Uri and Gal, Yarin},
  journal={Advances in Neural Information Processing Systems},
  volume={33},
  pages={11637--11649},
  year={2020}
}

@article{lee2023supervised,
  title={Supervised pretraining can learn in-context reinforcement learning},
  author={Lee, Jonathan and Xie, Annie and Pacchiano, Aldo and Chandak, Yash and Finn, Chelsea and Nachum, Ofir and Brunskill, Emma},
  journal={Advances in Neural Information Processing Systems},
  volume={36},
  pages={43057--43083},
  year={2023}
}

@article{gardner2024large,
  title={Large scale transfer learning for tabular data via language modeling},
  author={Gardner, Josh and Perdomo, Juan C and Schmidt, Ludwig},
  journal={Advances in Neural Information Processing Systems},
  volume={37},
  pages={45155--45205},
  year={2024}
}

@inproceedings{wang2016dueling,
  title={Dueling network architectures for deep reinforcement learning},
  author={Wang, Ziyu and Schaul, Tom and Hessel, Matteo and Hasselt, Hado and Lanctot, Marc and Freitas, Nando},
  booktitle={International conference on machine learning},
  pages={1995--2003},
  year={2016},
  organization={PMLR}
}

@article{johnson2023mimic,
  title={MIMIC-IV, a freely accessible electronic health record dataset},
  author={Johnson, Alistair EW and Bulgarelli, Lucas and Shen, Lu and Gayles, Alvin and Shammout, Ayad and Horng, Steven and Pollard, Tom J and Hao, Sicheng and Moody, Benjamin and Gow, Brian and others},
  journal={Scientific data},
  volume={10},
  number={1},
  pages={1},
  year={2023},
  publisher={Nature Publishing Group UK London}
}

@article{roe2005boosted,
  title={Boosted decision trees as an alternative to artificial neural networks for particle identification},
  author={Roe, Byron P and Yang, Hai-Jun and Zhu, Ji and Liu, Yong and Stancu, Ion and McGregor, Gordon},
  journal={Nuclear Instruments and Methods in Physics Research Section A: Accelerators, Spectrometers, Detectors and Associated Equipment},
  volume={543},
  number={2-3},
  pages={577--584},
  year={2005},
  publisher={Elsevier}
}

@article{curtis2012genomic,
  title={The genomic and transcriptomic architecture of 2,000 breast tumours reveals novel subgroups},
  author={Curtis, Christina and Shah, Sohrab P and Chin, Suet-Feung and Turashvili, Gulisa and Rueda, Oscar M and Dunning, Mark J and Speed, Doug and Lynch, Andy G and Samarajiwa, Shamith and Yuan, Yinyin and others},
  journal={Nature},
  volume={486},
  number={7403},
  pages={346--352},
  year={2012},
  publisher={Nature Publishing Group UK London}
}

@article{maddison2016concrete,
  title={The concrete distribution: A continuous relaxation of discrete random variables},
  author={Maddison, Chris J and Mnih, Andriy and Teh, Yee Whye},
  journal={arXiv preprint arXiv:1611.00712},
  year={2016}
}
\bibliographystyle{icml2026}

%%%%%%%%%%%%%%%%%%%%%%%%%%%%%%%%%%%%%%%%%%%%%%%%%%%%%%%%%%%%%%%%%%%%%%%%%%%%%%%
%%%%%%%%%%%%%%%%%%%%%%%%%%%%%%%%%%%%%%%%%%%%%%%%%%%%%%%%%%%%%%%%%%%%%%%%%%%%%%%
% APPENDIX
%%%%%%%%%%%%%%%%%%%%%%%%%%%%%%%%%%%%%%%%%%%%%%%%%%%%%%%%%%%%%%%%%%%%%%%%%%%%%%%
%%%%%%%%%%%%%%%%%%%%%%%%%%%%%%%%%%%%%%%%%%%%%%%%%%%%%%%%%%%%%%%%%%%%%%%%%%%%%%%
\newpage
\appendix
\onecolumn
\section{Appendix}

\subsection{Proof of Theorem \ref{thm:greedy_cmi}}
\label{app:proof_cmi_identification}
\textbf{Theorem \ref{thm:greedy_cmi}}
The CMI objective under missing data for a candidate feature $j$ given the current set of observed features $\underline x_t$ is given by the following:
\begin{align*}
& I(Y; X_{j}{(1)} \mid \underline{X}_t = \underline x_t) \\
  &= \sum_{y, x_{j}{(1)}} p(y, x_{j}{(1)} \mid \underline{x}_t) 
     \log \frac{p(y, x_{j}{(1)} \mid \underline{x}_t)}
              {p(y \mid \underline{x}_t) p(x_{j}{(1)} \mid \underline{x}_t)} \\
  &=\sum_{y, x_{j}{(1)}} p(y\mid x_{j}{(1)}, \underline{x}_t) p(x_{j}{(1)}|\underline{x}_t)
     \log \frac{p(y\mid x_{j}{(1)}, \underline{x}_t) p(x_{j}{(1)}|\underline{x}_t)}
              {p(y \mid \underline{x}_t) p(x_{j}{(1)} \mid \underline{x}_t)} \\
  &=\sum_{y, x_{j}{(1)}} p(y\mid x_{j}{(1)}, \underline{x}_t, R_{j}=1) p(x_{j}{(1)}|\underline{x}_t, R_{j}=1)
     \log \frac{p(y\mid x_{j}{(1)}, \underline{x}_t, R_{j}=1) p(x_{j}{(1)}|\underline{x}_t, R_{j}=1)}
              {p(y \mid \underline{x}_t) p(x_{j}{(1)} \mid \underline{x}_t, R_j=1)} \\
  &=\sum_{y, x_{j}} p(y\mid x_{j}, \underline{x}_t, R_{j}=1) p(x_{j}|\underline{x}_t, R_{j}=1)
     \log \frac{p(y\mid x_{j}, \underline{x}_t, R_{j}=1) p(x_{j}|\underline{x}_t, R_{j}=1)}
              {p(y \mid \underline{x}_t) p(x_{j} \mid \underline{x}_t, R_j=1)} \\
  &= \sum_{y, x_{j}} p(y, x_{j} \mid \underline{x}_t, R_{j}=1) 
     \log \frac{p(y, x_{j} \mid \underline{x}_t, R_{j}=1)}
              {p(y \mid \underline{x}_t) p(x_{j} \mid \underline{x}_t, R_{j}=1)} \\
  &= I(Y; X_{j} \mid \underline{X}_t=\underline x_t, R_{j}=1),
\end{align*}

The third equality holds due to the MAR assumption $R_{j} \ci X_{j}{(1)} | \underline{X}_t$ and no direct measurement effect $R_{j} \ci Y | X_{j}{(1)}, \underline{X}_t$. Positivity ensures all conditional
distributions are well-defined.  

\subsection{Proof of Proposition \ref{prop:surrogate-equals-cmi}}
We begin by showing that minimizing the surrogate one-step loss given in Equation~\ref{eq:loss} for a single task recovers the greedy CMI actions. Our result is a slight modification from the result shown in \citep{covert2023learning}. 

The surrogate loss for a single task $\tau\sim \mathbb{P}_{\mathcal{T}}$ is given by:
\begin{equation}
    \begin{aligned}
    \mathcal{L}(\theta,\phi)
= \mathbb{E}_{S_t, R \sim P_\tau} \bigg[ \mathbb{E}_{A_t \sim \tilde{\pi}(\cdot \mid S_t)} \Big[
  \mathbb{E}_{X_{A_t}, Y\,\mid\, \underline X_t, R_{A_t}=1}
\big[\,\ell(f_\phi(\cdot \mid \underline X_t\cup X_{A_t}),Y)\,\big] \Big] \bigg]
    \end{aligned}
\label{eq:loss}
\end{equation}
where states are denoted as $S_t = (\underline X_t, \underline A_t)$ and the acquisition action is sampled from the blocked policy $A_t \sim \tilde{\pi}_\theta(\cdot \mid S_t)$. The availability of actions for the blocked policy is dictated by the missingness indicator $R$, where $(S_t, R)$ are jointly drawn from the task distribution $P_\tau$.

\begin{remark}
   For this theorem, the loss can average over the state distribution at step \(t\),
denoted \(d_t^{\tilde\pi_\theta}(\cdot \mid \tau)\), obtained by either sampling uniformly over available states, or
rolling out the (blocked) policy \(\tilde\pi_\theta\) for \(t\) steps from
the initial law \(d_0(\cdot \mid \tau)\) induced by \(P_\tau(X,R,Y)\).
The choice of \(d_t\) is flexible (with caveats) and should have sufficient overlap with the intended deployment state distribution. Note that the per-state argmax (the optimal acquisition action at each step) does not change regardless of the outer state distribution. 
\end{remark}

\begin{proposition}
\label{prop:surrogate-equals-cmi}(Surrogate optimality for greedy CMI) For a given task $\tau$, consider the population objective (Eq.~\ref{eq:loss} with cross-entropy loss). 
Then any joint minimizer $(\theta^\star,\phi^\star)$ of $\mathcal{L}$ satisfies:
\begin{enumerate}
  \item $\phi^\star$ is Bayes-optimal: $f_{\phi^\star}(\cdot \mid \underline{X}_t)=P_{\tau}(Y\mid \underline{X}_t)$;
  \item $\pi_{\theta^\star}$ places all its mass on actions that maximize the conditional mutual information
  \[
  j \in \arg\max_{j: R_j=1} I_{\tau}\big(Y; X_j \mid \underline{X}_t=\underline{x}_t, R_j=1\big),
  \]
  i.e. $\pi_{\theta^\star}(j\mid \underline{x}_t)>0$ only if $j$ is a greedy-CMI maximizer. If the maximizer is unique, $\pi_{\theta^\star}$ is a point mass on that action.
\end{enumerate}
\end{proposition}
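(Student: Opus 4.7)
The plan is to exploit the fact that the surrogate loss decouples naturally: the predictor $f_\phi$ only enters through the inner cross-entropy, while the policy $\tilde\pi_\theta$ only controls the outer mixture weights over actions. So I would argue in two stages, first optimizing $\phi$ for any fixed policy, and then substituting the Bayes-optimal predictor back into the loss to reduce the remaining policy-side problem to a familiar information-theoretic one.

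For the first stage, I would fix an arbitrary blocked policy $\tilde\pi_\theta$ and use the strict propriety of the log loss: for each state $\underline x_t \cup x_{a_t}$ with $R_{a_t}=1$ that is visited with positive measure under the induced state-action distribution, the inner expected cross-entropy is uniquely minimized by $f_{\phi^\star}(\cdot \mid \underline x_t \cup x_{a_t}) = p_{\mathcal{T}}(Y \mid \underline x_t, X_{a_t}, R_{a_t}=1)$, with the gap to any other predictor equal to a KL divergence. Invoking MAR (Assumption~\ref{assumption:mar}) and the exclusion restriction (Assumption~\ref{assumption:nde}) then lets me drop the $R_{a_t}=1$ condition on the right-hand side, yielding the claimed Bayes-optimality of $f_{\phi^\star}$ against the reference-distribution conditional.

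For the second stage, substituting this predictor collapses the inner expectation into the conditional entropy $H(Y \mid \underline x_t, X_{a_t}, R_{a_t}=1)$. I would then invoke the identity
\[
H(Y \mid \underline x_t, X_{a_t}, R_{a_t}=1) \;=\; H(Y \mid \underline x_t, R_{a_t}=1) \;-\; I_{\mathcal{T}}(Y; X_{a_t} \mid \underline x_t, R_{a_t}=1),
\]
and use MAR together with exclusion restriction once more to note that $H(Y \mid \underline x_t, R_{a_t}=1) = H(Y \mid \underline x_t)$, so the first term does not depend on the acquisition action $a_t$. This reduces the outer objective to a negatively-weighted sum of CMIs over the simplex of action distributions, which is pointwise minimized by concentrating $\tilde\pi_\theta(\cdot \mid \underline x_t)$ on actions maximizing $I_{\mathcal{T}}(Y; X_{a_t} \mid \underline x_t, R_{a_t}=1)$ within $\{j : R_j=1\}$, and on a point mass when the argmax is unique.

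The main obstacle is the interplay between joint optimization and the support of the state distribution: the predictor-minimization step only pins $f_{\phi^\star}$ down on states reachable under $\tilde\pi_{\theta^\star}$, so the Bayes-optimality claim must be interpreted on that support; positivity (Assumption~\ref{assumption:positivity}) is precisely what ensures the relevant conditionals are well-defined there. Tie-breaking among multiple CMI-maximizing actions and the remark's requirement that the rollout law $d_t^{\tilde\pi_\theta}$ cover the deployment distribution are the remaining bookkeeping items but do not alter the core two-stage argument.
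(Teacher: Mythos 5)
Your proposal is correct and follows essentially the same route as the paper's proof: strict propriety of the log loss gives Bayes-optimality of $f_{\phi^\star}$ (the entropy-plus-KL decomposition), substituting the optimal predictor collapses the inner risk to an expected conditional entropy, and linearity of the objective over the action simplex together with the identity $I_{\mathcal{T}}(Y;X_{a_t}\mid \underline x_t, R_{a_t}=1)=H_{\mathcal{T}}(Y\mid \underline x_t)-\mathbb{E}_{X_{a_t}\mid \underline x_t, R_{a_t}=1}[H_{\mathcal{T}}(Y\mid \underline x_t, X_{a_t})]$ yields the CMI-greedy conclusion. Your explicit use of MAR and the exclusion restriction to justify that $H(Y\mid \underline x_t, R_{a_t}=1)=H(Y\mid \underline x_t)$ is action-independent, and your caveat that Bayes-optimality only holds on the support of the induced state distribution, are slightly more careful than the paper's write-up but do not change the argument.
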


\begin{proof}
Part I - Proof of Bayes-optimality:

We fix $\theta$ and consider the predictor. We begin with a standard fact: under cross-entropy loss for a discrete binary target, the conditional risk is minimized by the true conditional. In other words, to minimize expected loss the model $f_\phi$ needs to closely approximate the true distribution $P$. We show that this holds agnostic to the choice of $\tau$.

\begin{lemma}[Bayes optimality under cross-entropy]
\label{lem:bayes}
Let $\ell(q,y)=-\log q(y)$.
Then the minimizer $\phi^\star$ satisfies
\[
 f_{\phi^\star}(\underline X_t) = \arg\min_{f_\phi(\cdot)} \mathbb{E}_{Y|\underline X_t}\!\left[\ell\!\left(f_\phi(\cdot \mid \underline X_t),Y\right)\right] = P(Y\mid \underline X_t).
\]
Furthermore, this minimizer does not depend on $\theta$. In particular, any $f_{\phi^\star}$ that matches the true conditional for all such $(\underline x_t,x_j)$ is a global minimizer for every policy.
\end{lemma}

\begin{proof}
We denote $p(i \mid \underline X_t)=p_i$ as the conditional class probabilities and $f_i = f_\phi(i \mid \underline X_t)$ as the learner's predicted class probabilities, where $i\in\{0, 1\}$ are the binary class labels. The conditional risk decomposes as
\[
\mathbb{E}_{Y|\underline X_t}\!\left[\ell\!\left(f_\phi(\cdot \mid \underline X_t),Y\right)\right]
= -\sum_{i=0}^1 p_i \log f_i
\]
\[
= -\sum_{i=0}^1 p_i \log p_i +\sum_{i=0}^1 p_i \log \frac{p_i}{f_i}
= \underbrace{H\!\left(Y \mid \underline X_t \right)}_{\text{Constant}} + \mathrm{KL}\!\left(P(Y|\underline X_t) \,\|\, f_\phi(Y|\underline X_t)\right).
\]
\end{proof}

Part II - Proof of maximizer equivalence

Once we have Bayes optimality with the learner at $\phi^\star$, we can rewrite the inner risk as the expected conditional entropy using the following lemma: 

\begin{lemma}[Risk reduces to (expected) conditional entropy at $\phi^\star$]
\label{lemma:entropy}
With $\ell(q,y)=-\log q(y)$ and $f_{\phi^\star}$ as in Lemma~\ref{lem:bayes}, for any task $\tau$ and step $t$, any history $\underline x_t$ and retrospective feature availability $r_j$
\[
\mathbb{E}_{Y, X_j | \underline x_t, R_j=r_j }\!\left[\ell\!\left(f_{\phi^\star}(\cdot \mid \underline x_t\cup X_j),Y\right)\right]
= \mathbb{E}_{X_j\mid \underline x_t, R_j=r_j}
\big[\,H_{\tau}\!\left(Y \mid \underline x_t, X_j\right)\,\big].
\]
Consequently, the policy-evaluated inner term in \eqref{eq:loss} is
\[
\mathbb{E}_{A_t\sim \tilde{\pi}_\theta(\cdot\mid s_t)}\,
\mathbb{E}_{X_{a_t}\mid \underline x_t, R_{a_t}=1}
\big[\,H_{\tau}\!\left(Y \mid \underline x_t, X_{a_t}\right)\,\big].
\]
\end{lemma}

\begin{proof}
Performing the similar decomposition as in Lemma~\ref{lem:bayes}, 
\begin{align*}
&\mathbb{E}_{Y,X_j \mid \underline x_t, r_j}\!\big[\ell\!\left(f_{\phi^\star}(\cdot \mid\underline x_t\!\cup\! X_j),Y\right)\big] \\
&= \mathbb{E}_{X_j \mid \underline x_t, r_j}\Big[\,
      \mathbb{E}_{Y \mid \underline x_t,X_j}\big[-\log f_{\phi^\star}(Y\mid \underline x_t,X_j)\big]\Big] \\
&= \mathbb{E}_{X_j \mid \underline x_t, r_j}\Big[
      H_{\tau}\!\left(Y \mid \underline x_t,X_j\right)
      + \mathrm{KL}\big(P_{\tau}(Y\mid \underline x_t,X_j)\,\|\,f_{\phi^\star}(Y\mid \underline x_t,X_j)\big)
    \Big]\\
&= \mathbb{E}_{X_j \mid \underline x_t, r_j}\big[\,H_{\tau}\!\left(Y \mid \underline x_t,X_j\right)\big]
\end{align*}
Where the first equality follows from iterated expectations. In the last equality, the KL term vanishes at $\phi^\star$.
\end{proof}

Now, we consider the loss in \eqref{eq:loss}. Plugging $\phi^\star$ and using Lemma~\ref{lemma:entropy} for the given choice of task $\tau$,
\begin{align*}
\mathcal L(\theta,\phi^\star)
&= \mathbb{E}_{S_t, R \sim P_\tau} \bigg[\mathbb{E}_{A_t\sim \tilde{\pi}_\theta(\cdot\mid s_t)}
    \;
    \mathbb{E}_{X_{a_t} \mid \underline x_t, R_{a_t}=1}
    \big[ H_{\tau}\!\left(Y\mid \underline x_t, X_{a_t}\right) \big]
\bigg]\Bigg] \\
&\overset{\text{blocked}}{=}
\mathbb{E}_{S_t, R \sim P_\tau} \bigg[ 
    \sum_{\{a_t:R_{a_t}=1\}} \tilde{\pi}_\theta(a_t\mid s_t)\;
    \mathbb{E}_{X_{a_t} \mid \underline x_t,\,R_{a_t}=1}
    \big[ H_{\tau}\!\left(Y\mid \underline x_t, X_{a_t}\right) \big]
\bigg] \Bigg]
\end{align*}
where $\{a_t:R_{a_t}=1\}$ is the set of available features at step $t$.
For fixed $\underline x_t$,
\[\sum_{a_t\in \{a_t:R_{a_t}=1\}} \pi(a_t|s_t)\;
\mathbb{E}_{X_{a_t} \mid \underline x_t,\,R_{a_t}=1}\!\left[ H_{\tau}\!\left(Y\mid \underline x_t, X_{a_t}\right) \right]
\]
is linear over the simplex on $\{a_t:R_{a_t}=1\}$ and is therefore minimized by placing all mass on
\[
\arg\min_{a_t\in\{a_t:R_{a_t}=1\}}\;
\mathbb{E}_{X_{a_t} \mid \underline x_t,\,R_{a_t}=1}\!\left[ H_{\tau}\!\left(Y\mid \underline x_t, X_{a_t}\right) \right].
\]
Since $H(Y\mid \underline x_t)$ does not depend on $a_t$, we can obtain the optimal action among the available actions
\[
a_t^* = \arg\min_{a_t\in \{a_t:R_{a_t}=1\})}
\mathbb{E}_{X_{a_t} \mid \underline x_t,\,R_{a_t}=1}\!\left[ H_{\tau}\!\left(Y\mid \underline x_t, X_{a_t}\right) \right]
=
\arg\max_{a_t\in \{a_t:R_{a_t}=1\})}
I_{\tau}\!\left(Y; X_{a_t} \mid \underline x_t, R_{a_t}=1\right),
\]
because
\[
I_{\tau}\!\left(Y; X_{a_t} \mid \underline x_t, R_{a_t}=1\right)
= H_{\tau}(Y\mid \underline x_t)
- \mathbb{E}_{X_{a_t} \mid \underline x_t,\,R_{a_t}=1}\!\left[ H_{\tau}\!\left(Y\mid \underline x_t, X_{a_t}\right) \right].
\]
We define the above $a_t^*$ as the locally optimal action. We can define the full set of all possible actions (features) as $\mathcal{A}_{\text{all}}$. Then, we define the globally optimal action $a^\star_{\text{global}}$, which is the action that maximizes the conditional mutual information if all actions were available: $$a^\star_{\text{global}} = \arg\max_{a \in \mathcal{A}_{\text{all}}} I_{\tau}\!\left(Y; X_{a_t} \mid \underline x_t, R_{a_t}=1 \right)$$

Because the expectation is taken over the distribution $P_\tau(S_t, R)$, minimizing the expected loss requires minimizing the inner sum for every specific realization of $R$ that has non-zero probability given the state, $P_\tau(R \mid S_t)P_\tau(S_t)$ . Let $\mathcal{R}$ be the set of all possible missingness indicator vectors. We partition $\mathcal{R}$ into two disjoint sets based on the availability of the global optimum, where we have $\mathcal{R}_{\text{opt}} = \{ R \in \mathcal{R} : R_{a^\star_{\text{global}}} = 1 \}$ is the set of missingness indicators where the globally optimal action is available, and $\mathcal{R}_{\text{sub}} = \{ R \in \mathcal{R} : R_{a^\star_{\text{global}}} = 0 \}$ is the set where the globally optimal action is missing.

Consider any realization of the missingness indicator $R \in \mathcal{R}_{\text{opt}}$. By definition, $R_{a^\star_{\text{global}}} = 1$, meaning $a^\star_{\text{global}}$ is in the set of available actions. Because the conditional entropy ranking (or $I_\tau$) of actions is globally consistent and invariant to the realization of $R$, the local minimizer over the available set is strictly the global minimizer $a^\star_{\text{global}}$. For realizations $R \in \mathcal{R}_{\text{sub}}$, the mass is similarly placed on the next available feature with the lowest conditional entropy.
\end{proof}

\subsection{Proof of Theorem \ref{thm:seq_cmi}}
\label{app:proof_seq_cmi}
Our main theorem extends Proposition~\ref{prop:surrogate-equals-cmi} to conditional distributions that incorporate task-specific context. We leverage the following conditional independence assumption, which improves tractability by removing the need to fully model the joint via an autoregressive factorization. 

\begin{assumption}[Conditional independence across queries]
\label{assump:ci-queries}
Given the context $\mathcal D_\tau^{1:m}$ and the per-query partial inputs
$\underline X_t^{m+1:N}$, the query targets are conditionally independent:
\[
p\!\left(Y^{m+1:N}\mid \underline X_t^{m+1:N},\, \mathcal D_\tau^{1:m}\right)
= \prod_{q=m+1}^{N}
   p\!\left(Y^{(q)}\mid \underline X_t^{(q)},\, \mathcal D_\tau^{1:m}\right).
\]
\end{assumption}

\textbf{Theorem \ref{thm:seq_cmi}} [Surrogate optimality for greedy CMI with context]
Consider the sequence modeling objective in Eq.~\ref{eq:seq_loss_policy_klookahead} with cross-entropy loss.
Let $\mathcal D_\tau^{1:m}=(X^{1:m},R^{1:m},Y^{1:m})$ be the task-specific context and
$\underline X_t^{m+1:N}$ the partially observed features for the $N-m$ query points at step $t$.
Then any joint minimizer $(\theta^\star,\phi^\star)$ of $\mathcal L$ satisfies:
\begin{enumerate}
\item[1.] \textbf{Per-query Bayes-optimality.}
For each $q\in\{m{+}1,\dots,N\}$,
\[
f_{\phi^\star}\big(\,\cdot \mid \underline X_t^{(q)},\, \mathcal D_\tau^{1:m}\big)
= p\big(Y^{(q)} \mid \underline X_t^{(q)},\, \mathcal D_\tau^{1:m}\big).
\]
Consequently by assumption~\ref{assump:ci-queries} ,
\begin{align*}
f_{\phi^\star}\big(\,\cdot \mid \underline X_t^{m+1:N}, \mathcal D_\tau^{1:m}\big)
&= p\big(Y^{m+1:N} \mid \underline X_t^{m+1:N}, \mathcal D_\tau^{1:m}\big)
\end{align*}

\item[2.] \textbf{Step-wise CMI-optimal acquisition for every query.}
For each query index $q \in \{m{+}1,\dots,N\}$ and for  $(\underline x_t^{(q)}, \mathcal D_\tau^{1:m})$,
the policy places mass only on actions that maximize CMI:
\[
j \in \arg\max_{a_t: R_{a_t}=1}
I\!\left(Y^{(q)} ; X_{a_t}^{(q)} \,\middle|\, \underline X_t^{(q)}=\underline x_t^{(q)},\, R_{a_t}^{(q)}=1,\, \mathcal D_\tau^{1:m}\right).
\]
If the maximizer is unique, $\pi_{\theta^\star}(\cdot \mid \underline x_t^{(q)}, \mathcal D_\tau^{1:m})$ is a point mass on that action.
\end{enumerate}

Part I - Proof of Bayes-optimality:

We first fix $\theta$ and consider the predictor. The loss is given by the following, where again the query states and corresponding action availability $(S_t, R)$ are jointly sampled from $P_{\tau}$:
\begin{align*}
\mathcal{L}(f_\phi,\pi_\theta)
&= \mathbb{E}_{\substack{\tau \sim \mathcal{T} \\ \mathcal{D}_{\tau}^N \sim P_\tau}}\! \Bigg[  \sum_{m=1}^{N-1}
\mathbb{E}_{\substack{S_t^{(q)},\,R \sim P_{\tau}}}\bigg[\mathbb{E}_{A_t \sim \tilde{\pi}(\cdot \mid S_t^{(q)}, \mathcal D_\tau^{1:m})}
[J(A_t; \underline X_t^{(q)}, \mathcal D_\tau^{1:m})]\bigg]\Bigg]
\end{align*}

where 
\[J(a_t; \underline x_t^{(q)}, \mathcal D_\tau^{1:m}) = \mathbb{E}_{\substack{X_{a_t}^{(q)},\, Y^{(q)} \mid \underline x_t^{(q)}, \\ \mathcal D_\tau^{1:m},\, R_{a_t}=1}}
\Bigl[\ell\bigl(f_\phi(\cdot \mid \underline x_t^{(q)}\!\cup\! X_{a_t}^{(q)},\, \mathcal D_\tau^{1:m}),\, Y^{(q)}\bigr)\Bigr]
\]
is the per-action expected loss for a given query feature set $x_t^{(q)}$ and length-$m$ context $\mathcal D_\tau^{1:m}$.

We show Bayes-optimality by showing that to minimize expected loss, $f_\phi$ needs to closely approximate the true distribution, analogous to Lemma~\ref{lem:bayes}.

\begin{proof}
Without loss of generality, we fix the context length $m$. We consider the minimizer $\phi^\star$ of the loss summed over each query $q \in \{m+1,...,N\}$. Lemma~\ref{lem:bayes} applied to each query shows that this loss recovers the per-query conditional i.e.
\[
f_{\phi^\star}\big(\,\cdot \mid \underline X_t^{(q)},\, \mathcal D_\tau^{1:m}\big)
= P\big(Y^{(q)} \mid \underline X_t^{(q)},\, \mathcal D_\tau^{1:m}\big).
\]

We now show that the loss minimizer also recovers the joint conditional
\begin{align*}
& \sum_{q=m+1}^{N} \mathbb{E}_{Y^{(q)} \mid \underline X_t^{(q)},\, \mathcal D_\tau^{1:m}}
  \Big[\ell\big(f_{\phi^\star}(\cdot \mid \underline X_t^{(q)}, \mathcal D_\tau^{1:m}), Y^{(q)}\big)\Big] \\
&\overset{\text{change of measure}}{=} \sum_{q=m+1}^{N} \mathbb{E}_{Y^{(q)} \mid \underline X_t^{m+1:N},\, \mathcal D_\tau^{1:m}}
  \Big[-\frac{p\big(Y^{(q)} \mid \underline X_t^{(q)}, \mathcal D_\tau^{1:m}\big)}
       {p\big(Y^{(q)} \mid \underline X_t^{m+1:N}, \mathcal D_\tau^{1:m}\big)} \log f_{\phi^\star}(Y^{(q)} \mid \underline X_t^{(q)}, \mathcal D_\tau^{1:m})\Big] \\
&\overset{\text{linearity of }\mathbb{E}}{=}
  \mathbb{E}_{Y^{m+1:N} \mid \underline X_t^{m+1:N}, \mathcal D_\tau^{1:m}}
  \Bigg[-\sum_{q=m+1}^{N}
  \frac{p\big(Y^{(q)} \mid \underline X_t^{(q)}, \mathcal D_\tau^{1:m}\big)}
       {p\big(Y^{(q)} \mid \underline X_t^{m+1:N}, \mathcal D_\tau^{1:m}\big)}\;
  \log f_{\phi^\star}(Y^{(q)} \mid \underline X_t^{(q)}, \mathcal D_\tau^{1:m})
  \Bigg] \\
&\overset{\text{CI assumption~\ref{assump:ci-queries}}}{=} \mathbb{E}_{Y^{m+1:N} \mid \underline X_t^{m+1:N}, \mathcal D_\tau^{1:m} }\Bigg[-\sum_{q=m+1}^{N} 
    \log f_{\phi^\star}\big(Y^{(q)} \mid \underline X_t^{(q)}, \mathcal D_\tau^{1:m}\big)\Bigg] \\
&\overset{\log \text{manipulation}}{=} \mathbb{E}_{Y^{m+1:N} \mid \underline X_t^{m+1:N}, \mathcal D_\tau^{1:m}}\Bigg[-\log \prod_{q=m+1}^{N} 
    f_{\phi^\star}\big(Y^{(q)} \mid \underline X_t^{(q)}, \mathcal D_\tau^{1:m}\big)\Bigg] \\
&\overset{\text{factorized predictor}}{=} \mathbb{E}\Bigg[-\log
    f_{\phi^\star}\big(Y^{m+1:N} \mid \underline X_t^{m+1:N}, \mathcal D_\tau^{1:m}\big)\Bigg] \\
&= H\!\big(Y^{m+1:N}\mid \underline X_t^{m+1:N}, \mathcal D_\tau^{1:m}\big)
   + \mathrm{KL}\Big(
     p\big(Y^{m+1:N}\mid \underline X_t^{m+1:N}, \mathcal D_\tau^{1:m}\big)
     \,\big\|\, 
     f_\phi\big(Y^{m+1:N} \mid \underline X_t^{m+1:N}, \mathcal D_\tau^{1:m}\big)
   \Big)
\end{align*}

\end{proof}

Part II - Proof of CMI-optimal acquisition

\begin{proof}
We consider the loss in equation~\ref{eq:seq_loss_policy_klookahead}, and plug $\phi^\star$ in and use Lemma~\ref{lemma:entropy} for the sequence scenario. 

We first rewrite the per-action expected loss in terms of the conditional entropy.
\begin{align*}
J(a_t; \underline x_t^{(q)}, \mathcal D_\tau^{1:m}, \phi^\star) 
& = \mathbb{E}_{\substack{X_{a_t}^{(q)},\, Y^{(q)} \mid \underline x_t^{(q)}, \\ \mathcal D_\tau^{1:m},\, R_{a_t}=1}}
\Bigl[\ell\bigl(f_{\phi^\star}(\cdot \mid \underline x_t^{(q)}\!\cup\! X_{a_t}^{(q)},\, \mathcal D_\tau^{1:m}),\, Y^{(q)}\bigr)\Bigr] \\
&= \mathbb{E}_{\substack{X_{a_t}^{(q)}, \mid \underline x_t^{(q)}, \\ \mathcal D_\tau^{1:m},\, R_{a_t}=1}}
\Bigl[H(Y^{(q)} \mid X_{a_t}^{(q)}, x_t^{(q)}, \mathcal D_\tau^{1:m})\Bigr]
\end{align*}

Now we use the same logic as in Theorem~\ref{prop:surrogate-equals-cmi}. Plugging the per-action expected loss back into the total loss:

\begin{align*}
\mathcal{L}(f_\phi,\pi_\theta)
&= \mathbb{E}_{\substack{\tau \sim \mathcal{T} \\ \mathcal{D}_{\tau}^N \sim P_\tau}}\! \Bigg[  \sum_{m=1}^{N-1}
\mathbb{E}_{\substack{S_t^{(q)},\,R}}\bigg[\mathbb{E}_{A_t \sim \tilde{\pi}_\theta(\cdot \mid S_t^{(q)}, \mathcal D_\tau^{1:m})}
[J(A_t; \underline X_t^{(q)}, \mathcal D_\tau^{1:m}, \phi^\star)]\bigg]\Bigg] \\
&= \mathbb{E}_{\substack{\tau \sim \mathcal{T} \\ \mathcal{D}_{\tau}^N \sim P_\tau}}\! \Bigg[  \sum_{m=1}^{N-1}
\mathbb{E}_{\substack{S_t^{(q)},\,R}}\bigg[\sum_{a_t^m: r_{a_t}=1} \tilde{\pi}_\theta(a_t^m \mid s_t^{(q)}, \mathcal D_\tau^{1:m})
J(a_t^m; \underline x_t^{(q)}, \mathcal D_\tau^{1:m}, \phi^\star)\bigg]\Bigg]
\end{align*}

Therefore, for each tuple $(s_t^{(q)}, \mathcal D_\tau^{1:m}, r)$, the inner summation 
\[
\sum_{a_t^m: r_{a_t}=1} \tilde{\pi}_\theta(a_t^m \mid s_t^{(q)}, \mathcal D_\tau^{1:m})
J(a_t^m; \underline x_t^{(q)}, \mathcal D_\tau^{1:m}, \phi^\star)
\]
is linear over the simplex on $\{a_t^m: r_{a_t}=1\}$ and is therefore minimized when we select the acquisition action $\argmin_{a_t^m}J(a_t^m; \underline x_t^{(q)}, \mathcal D_\tau^{1:m}, \phi^\star)$. Therefore, the loss minimizer $\tilde \pi_{\theta^\star}$ places all mass on the action that minimizes the conditional entropy among the available actions, which equivalently maximizes $I(Y;X_{a_t} \mid \underline x_t^{(q)}, R_{a_t}=1, \mathcal D_\tau^{1:m})$. 

\end{proof}

\begin{remark}
    While Equation \ref{eq:seq_loss_policy} defines the sequence loss via an expectation over a single query state for theoretical clarity, in our implementation we estimate this expectation by evaluating and averaging the loss over all valid acquisition steps $t \in \{0, \dots, T\}$ within each sampled trajectory, where $T$ is the number of available features within each sample determined by $R$. This procedure is described in Algorithm~\ref{alg:gsm_training_policy}.
\end{remark}

\subsection{Discussion on Greedy Acquisition}
\label{app:greedy_discussion}

We provide additional discussion on greedy acquisition using the concept of adaptive submodularity \citep{golovin2011adaptive}. 
Let there be \(d\) available features with index set \([d] \coloneqq \{1,\dots,d\}\). 
A full realization is \(x \in \mathcal{X}^d\), where the value of feature \(j\) is \(x_j \in \mathcal{X}\).
We aim to maximize a nonnegative utility 
\(f: 2^{[d]} \times \mathcal{X}^d \to \mathbb{R}_{\ge 0}\),
where \(g(S,x)\) evaluates the utility of acquiring the subset \(S \subseteq [d]\) under realization \(x\).

We begin by recalling submodularity.

\begin{definition}
A set function \(g: 2^{[d]} \to \mathbb{R}\) is \emph{submodular} if for all \(A \subseteq B \subseteq [d]\) and every \(j \in [d]\setminus B\),
\[
g(A \cup \{j\}) - g(A) \;\ge\; g(B \cup \{j\}) - g(B).
\]
\end{definition}
Intuitively, the property of submodularity implies diminishing returns. We now recall the definition of adaptive submodularity for feature acquisition
\begin{definition}
\label{def-adaptive_sub}
    Let \(X=(X_1,\dots,X_d)\in\mathcal X^d\) be random. A utility \(f:2^{[d]}\times\mathcal X^d\to\mathbb R_{\ge 0}\) is \emph{adaptively submodular} if for all sets \(S\subseteq S'\subseteq [d]\), for all indices \(j\in [d]\setminus S'\), and for all partial realizations \(x_S\) and \(x_{S'}\), the conditional expected marginal benefit does not increase as more outcomes are observed:
\begin{align*}
&\Delta(j \mid x_S) =
\mathbb E_{X\mid x_s}\left[\, f(S\,\cup\,\{j\}, X) - f(S, X)  \right] \\
& \ge
\mathbb E_{X\mid x_{s'}}\left[\, f(S'\,\cup\,\{j\}, X) - f(S', X)\right]
= \Delta(j \mid x_{S'}).
\end{align*}
\end{definition}

The theoretical result in \citep{golovin2011adaptive} shows that for a fixed budget $k$, the greedy policy for a distribution that satisfies definition~\ref{def-adaptive_sub} (and adaptive monotone) achieves an $(1-e^{-1})$ approximation to the expected reward of the best policy, following from the result that the optimality gap shrinks by an $(1-k^{-1})$ factor at each step. However, greedy CMI-based acquisition is not adaptively submodular for problems where features are jointly informative. For example, feature $j$ (a chest X-ray) may only be informative after a different feature has been observed due to synergistic information (an electrocardiogram), but uninformative on its own. \citep{norcliffe2025stochastic} provides an intuitive example using an indicator variable that determines which features are informative, and also discusses limitations of CMI if the objective is 0-1 loss minimization.

\subsection{Lookahead Variant}
\label{app:lookahead_theory}
Given the limitations of greedy acquisition, we additional consider a variant where we include a "planning" signal by maximizing a lookahead objective. 

Denote the state for the $m$-th sample in the sequence as $S_t^m = (\underline{X}_t^m,\, \underline{A}_{t-1}^{m})$, and $A_t$ is an action sampled from $\tilde \pi_\theta(S_t^m, \mathcal D_\tau^{1:m})$. Recall the per-action expected loss is defined as the deterministic function given action $a_t$, starting query state $s_t^{(q)}$ and context $\mathcal D_\tau^{1:m}$
\begin{align*}
&J(a_t; \underline s_t^{(q)}, \mathcal D_\tau^{1:m}) =
\mathbb{E}_{\substack{X_{a_t}^{(q)},\, Y^{(q)} \mid \underline s_t^{(q)}, \\ \mathcal D_\tau^{1:m},\, R_{a_t}=1}}
\Bigl[\ell\bigl(f_\phi(\cdot \mid \underline s_t^{(q)}\!\cup\! X_{a_t}^{(q)},\, \mathcal D_\tau^{1:m}),\, Y^{(q)}\bigr)\Bigr],
\end{align*}
where we use $\ell$ to denote the loss (ex. negative log likelihood) for evaluating the predictor. 

For a general $k$-step lookahead objective ($k=0$ recovers the greedy objective), let $\xi = \{S_{t:t+k}^{(q)}, A_{t:t+k}\}$ denote a length $k+1$ rollout generated from $A_{t} \sim \tilde\pi_\theta(S_{t}^{(q)}, \mathcal D_\tau^{1:m})$. We define $J^{(k)}(A_{t:t+k};\,\underline s_t^{(q)}, \mathcal D_\tau^{1:m})$ as the expected loss under the random trajectory $\mathbb{E}_{\xi}\!\left[
J\!\left(A_{t+k};\,\underline S_{t+k}^{(q)}, \mathcal D_\tau^{1:m}\right) \mid  \underline s_t^{(q)}\right]$. This leads to the overall sequence prediction objective:
\begin{align}
&\mathcal{L}^{(k)}(f_\phi,\pi_\theta)= 
 \mathbb{E}_{\substack{\tau \sim \mathcal{T} \\ \mathcal{D}_{\tau}^N \sim P_\tau}}\! \Bigg[ \sum_{m=1}^{N-1}
\mathbb{E}_{\substack{S_t^{(q)},\,R}}
\big[J^{(k)}(A_{t:t+k}; \underline s_t^{(q)}, \mathcal D_\tau^{1:m})
\big]\Bigg]
\label{eq:seq_loss_policy_klookahead}
\end{align}

We optimize the above objective with a pathwise estimator using the same smooth relaxation approach. We sample $\epsilon_t \sim \text{Gumbel}(0, 1)$ from the Gumbel distribution and denote the sampled index as $\tilde A_t = g_\theta(\epsilon_t;\,S_t^{(q)}, \mathcal D_\tau^{1:m})$, leading to the following gradient approximation of Equation \ref{eq:seq_loss_policy_klookahead}:
\begin{align}
&\nabla_\theta \mathcal{L}^{(k)}(f_\phi,\pi_\theta) =
\mathbb{E}_{\substack{\tau \sim \mathcal{T} \\ \mathcal{D}_{\tau}^N \sim P_\tau}}\! \Bigg[
\sum_{m=1}^{N-1}
\mathbb{E}_{\substack{S_t^{(q)},R,\,\epsilon_{t:t+k-1}}}
\Big[\nabla_\theta
J^{(k)}(\tilde A_{t:t+k-1}; \underline s_t^{(q)}, \mathcal D_\tau^{1:m})\big)
\Big] \Bigg]
\label{eq:pathwise_klookahead_grad}
\end{align}
where $g_\theta(\epsilon_t)$ denotes the straight-through Gumbel--Softmax estimator with temperature $\eta$, i.e., we sample
$\hat a_t=\mathrm{softmax}\!\left(\frac{\log \pi_\theta+\epsilon_t}{\eta}\right)$
and set $\tilde A_t$ with the hard sample in the forward pass while backpropagating through $\hat a_t$. We use standard Monte Carlo estimation and draw $N_{\mathrm{MC}}$ independent noise sequences
$\{\epsilon_{t:t+k-1}^{(i)}\}_{i=1}^{N_{\mathrm{MC}}}$ and form the corresponding reparameterized action trajectories
$\tilde A_{t:t+k-1}^{(i)} = g_\theta(\epsilon_{t:t+k-1}^{(i)};\, S_t^{(q)}, \mathcal D_\tau^{1:m})$.
The resulting pathwise gradient estimator is
\begin{align}
&\widehat{\nabla_\theta J^{(k)}} = \frac{1}{N_{\text{MC}}}\sum_{i=1}^{N_{\text{MC}}}
\nabla_\theta
J^{(k)}\!\big(\tilde A^{(i)}_{t:t+k-1}; \underline s_t^{(q)}, \mathcal D_\tau^{1:m}\big)
\label{eq:pathwise_klookahead_grad_mc}
\end{align}

We demonstrate empirically as a proof of concept that using this approach, lookahead policies can be learned on a diverse task prior. 

\subsection{Model Architecture and Training Details}
\label{app:architecture}

The goal is to model the one-step predictive distributions $p(Y^{(q)} \mid \underline X_t^{(q)}=\underline x_t^{(q)},\, \mathcal D_\tau^{1:m})$, also referred to as posterior predictive distributions (PPD). We refer to various previous works for formalizing the connection between sequence modeling and Bayesian inference \citep{muller2021transformers, nguyen2022transformer, ye2024exchangeable}. We leverage the insight that the sequence model for performing explicit Bayesian inference must satisfy the following inductive invariances \citep{nguyen2022transformer, ye2024exchangeable}. For this section, we denote each tuple in $\mathcal D_\tau^{1:m}$ as $Z^i = (X^i,R^i,Y^i)$.

\subsubsection{Model Architecture} 

\begin{definition}\label{def:contxt_inv}
   \textbf{Context Invariance.} A model $f_\phi$ is context invariant if for any choice of permutation function $\pi$ and $m\in[1, N-1]$, $f_\phi(Y^{m+1:N}|\underline{X}_t^{m+1:N}, Z^{1:m}) = f_\phi(Y^{m+1:N}|\underline{X}_t^{m+1:N}, Z^{\pi(1):\pi(m)})$ 
\end{definition}
\begin{definition}\label{def:tar_inv}
    \textbf{Target Equivariance.} A model $f_\phi$ is target equivariant if for any choice of permutation function $\pi$ and $m\in[1, N-1]$, $f_{\phi}(Y^{m+1:N} |\underline{X}_t^{m+1:N}, Z^{1:m}) = f_\phi(Y^{\pi(m+1):\pi(N)} |\underline{X}_t^{\pi(m+1):\pi(N)}, Z^{1:m})$
\end{definition}

We approximate these invariances using a Transformer model \citep{vaswani2017attention} with several modifications. 
For each input query sample $i$, the sufficient statistics for the state $s_t^i$ are the partially observed feature values 
$\underline{x}_t^i \in \mathbb{R}^d$ together with the acquisition mask 
$\underline{a}_{t-1}^i \in \{0,1\}^d$, which records which features have been acquired so far. The state is encoded by applying the mask to the feature vector, $x_t^i \odot \underline{a}_{t-1}^i$, 
and concatenating this with the mask itself. Finally, we append a zero vector of length $c$ to represent the unobserved target $Y$. 
The resulting input representation for a query sample is
\[
    z^i_{\text{qry}}=\big[\,x_t^i \odot \underline{a}_{t-1}^i, \ \underline{a}_{t-1}^i, \ \mathbf{0}^c\,\big] 
    \in \mathbb{R}^{2d+c}.
\]

For each context sample $i$, the sufficient statistics consist of the partially observed feature values $x^i \in \mathbb{R}^d$ together with the retrospective missingness mask 
$r^i \in \{0,1\}^d$, which indicates which features were collected in the past. 
We append the observed target $y^i$ to form the encoded representation.
\[
    z^i_{\text{ctx}} = \big[\,x^i \odot r^i, \ r^i, \ y^i\,\big] \in \mathbb{R}^{2d+c}.
\]

Next, we remove standard positional embeddings and replace the usual causal attention 
mask with a custom design, since causal masking does not satisfy the invariances 
in Definition~\ref{def:contxt_inv}. 
To enable efficient computation of the autoregressive loss, we also introduce 
\emph{target points} into the sequence during training. 

Each input sequence for autoregressive loss computation has length $2N - m$ and is ordered as
\[
    \{\,z_{\text{ctx}}^1,\ldots,z_{\text{ctx}}^m,\;
       z_{\text{tar}}^{m+1},\ldots,z_{\text{tar}}^N,\;
       z_{\text{qry}}^{m+1},\ldots,z_{\text{qry}}^N\,\}.
\]
Each target point $z_{\text{tar}}^i$ shares the same underlying feature vector $x^i$ as its 
corresponding query $z_{\text{qry}}^i$, but encodes the retrospective mask $r^i$ and observed target variable $y^i$ in place of zero-padding.

The attention mask \ref{fig:att_mask} enforces the following structure:
\begin{itemize}
\item Context points can attend freely to one another.  
\item Each target point can attend to all context points and all preceding target points.  
\item Each query point $z_{\text{qry}}^i$ can attend to context points and preceding target points, but not to other queries.
\end{itemize}

\begin{figure}
    \centering
    \includegraphics[width=0.4\linewidth]{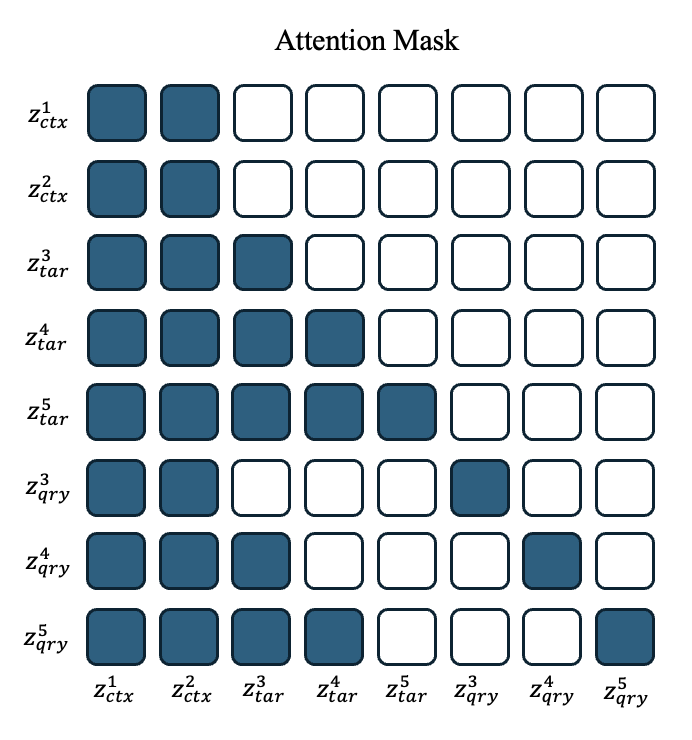}
    \caption{Attention mask used during training with $2$ context samples and $3$ query samples. 
Each query has a paired target sample that shares the same features but includes the 
observed target variable $y$ and retrospective mask $r$ instead of zero-padding.}
    \label{fig:att_mask}
\end{figure}

\newpage
\subsubsection{Training}
We provide the algorithm for pretraining the predictor in Algorithm~\ref{alg:gsm_training} and pretraining the policy in Algorithm~\ref{alg:gsm_training_policy}. The inference procedure is provided in  Algorithm~\ref{alg:gsm_test}.
\begin{algorithm}[H]
   \caption{Autoregressive training for sequence model $f_\phi$ given $\mathbb{P}_\mathcal{T}$}
   \label{alg:gsm_training}
   \textbf{Input:} Predictor $f_\phi$
   \textbf{Require}: Sequence length $N$, batch size $J$
\begin{algorithmic}[1]
   \FOR{\textbf{until convergence}}
       \FOR{each task $\mathcal{D}_{\tau} = \{X^{1:N}, Y^{1:N}, R^{1:N}\}$ in mini-batch} 
            \STATE Initialize the set of observed indices $\underline{A}_0^{1:N}, \underline{A}_0 \subseteq [d]$ with always available feature indices 
           \FOR{$t\in \{1$,...,$d-1$\}}
               \FOR{$m \in \{1, ..., N-1\}$}
                   \STATE Predict next label using the sequence model:
                   \[
                       \hat{Y}^{m+1} \sim f_\phi(\cdot \mid \underline{X}_t^{m+1}, X^{1:m}, R^{1:m}, Y^{1:m})
                   \]
                   \STATE Sample a random feature index $j: R_j^{m+1} = 1$ to acquire, so $\underline{A}_{t}^{m+1} \leftarrow \underline{A}_{t-1}^{m+1} \cup \{j\}$
               \ENDFOR
           \ENDFOR
       \ENDFOR
       \STATE Compute mini-batch loss $\hat{l}_{\phi}$ and update parameters $\phi \gets \phi - \eta \nabla_\phi \hat{l}_\phi$
   \ENDFOR
   \STATE \textbf{return} trained model $\hat{f}_\phi$
\end{algorithmic}
\end{algorithm}

\begin{algorithm}[H]
   \caption{Autoregressive training for sequence model $\pi_\theta$ given  $\mathbb{P}_\mathcal{T}$}
   \label{alg:gsm_training_policy}
   \textbf{Input:} Policy $\pi_\theta$, predictor $f_\phi$
   \textbf{Require}: Sequence length $N$, batch size $J$
\begin{algorithmic}[1]
   \FOR{\textbf{until convergence}}
       \FOR{each task $\mathcal{D}_{\tau} = \{X^{1:N}, Y^{1:N}, R^{1:N}\}$ in mini-batch} 
            \STATE Initialize the set of observed indices $\underline{A}_0^{1:N}, \underline{A}_0 \subseteq [d]$ with always available feature indices 
           \FOR{$t \in \{1, ...,d-1$\}}
               \FOR{$m \in \{1, ..., N-1\}$}
                   \STATE Given the state $S_t^{m+1} = (\underline{X}_t^{m+1}, \underline{A}_{t-1}^{m+1})$, output action distribution using policy:
                   \[
                       \hat{A}^{m+1} \sim \pi_\theta(\cdot \mid S_t^{m+1}, X^{1:m}, R^{1:m}, Y^{1:m})
                   \]
                   \STATE Approximate argmax using straight through gumbel-softmax: $\tilde{A}^{m+1}$
                   \STATE Compute and accumulate one-step loss using the predictor:
                   \[
                       \ell\big(f_\phi(\cdot \mid \underline{X}_t^{m+1} \cup X_{\tilde{A}},\, X^{1:m}, R^{1:m}, Y^{1:m}), Y^{m+1}\big)
                   \]
                    \STATE Sample a random feature index $j: R_j^{m+1} = 1$ to acquire, so $\underline{A}_{t}^{m+1} \leftarrow \underline{A}_{t-1}^{m+1} \cup \{j\}$
               \ENDFOR
           \ENDFOR
       \ENDFOR
       \STATE Compute mini-batch loss $\hat{l}_{\theta}$ and update parameters $\theta \gets \theta - \eta \nabla_\theta \hat{l}_\theta$
   \ENDFOR
   \STATE \textbf{return} trained model $\hat{\pi}_\theta$
\end{algorithmic}
\end{algorithm}

\begin{algorithm}[H]
   \caption{Test-time inference procedure for solving an AFA task $\tau$}
   \label{alg:gsm_test} 
    \textbf{Require:} Pretrained sequence models $f_\phi$, $\pi_\theta$ \;
   \textbf{Input:} Samples from $\mathcal{D}_{\tau} = \{X^{1:m}, R^{1:m}, Y^{1:m}\}$, and query samples $\underline{X}_0^{m+1:N}$ feature budget $k \leq d$
\begin{algorithmic}[1] 
    \FOR{$t \in \{1, \dots, k\}$}
        \FOR{$q \in \{m+1, \dots, N\}$}
            \STATE Compute $\pi_\theta(A_t^{(q)} \mid \underline{X}_{t}^{(q)}, \underline{A}_{t-1}^{(q)}, \mathcal{D}_{\tau})$ and select action 
            \[
                a_t^i = \arg\max_{a} \; \pi_\theta(a \mid \underline{X}_{t}^{(q)}, \underline{A}_{t-1}^{(q)},\mathcal{D}_{\tau})
            \]
        \STATE Update $\underline{X}_{t+1}^{(q)} \leftarrow \underline{X}_{t}^{(q)} \cup X_{a_t}$  with chosen action
        \ENDFOR
    \ENDFOR
    \STATE \textbf{Return:} Predictions for all test samples in task $\tau$
    \[
        \hat{Y}^i \sim f_\phi\!\left( \cdot \mid \underline{X}_k^{i}, \mathcal{D}_{\tau}\right), 
        \quad \forall i \in \{m+1, \dots, N\}
    \]
\end{algorithmic}
\end{algorithm}

\subsection{Experiment Details}
\label{app:hyperparameters}

\subsubsection{Datasets}
For each dataset, we construct two disjoint splits: a training set of size 
$n_{\text{train}}$ and a test pool of size $n_{\text{test}}$. 
For each dataset, we specify three components:

\begin{itemize}
    \item \textbf{Baseline features ($X_0$):} features that are always observed at the start of an acquisition trajectory.
    \item \textbf{Acquirable features ($X_m$):} candidate features available for sequential acquisition.
    \item \textbf{Label space ($Y$):} the target variable or class labels used for evaluation.
\end{itemize}

\begin{asparaenum}
\item \Metabric\, ($n_{\text{train}}=1{,}000$, $n_{\text{test}}=898$): 
    \[
    \begin{aligned}
    X_m = \{&\texttt{ccnb1}, \ \texttt{cdk1}, \ \texttt{e2f2}, \ \texttt{e2f7}, \ \texttt{stat5b}, \ \texttt{notch1}, \\
            &\texttt{rbpj}, \ \texttt{bcl2}, \ \texttt{egfr}, \ \texttt{erbb2}, \ \texttt{erbb3} \}.
    \end{aligned}
    \]
    \[
    \begin{aligned}
Y \in \{\texttt{Luminal A}, \ \texttt{Luminal B}, \ \texttt{HER2-enriched}, \\
          \texttt{Basal-like}, \ \texttt{Normal-like}, \ \texttt{Claudin-low}\}.
    \end{aligned}
\]
\item \Mini\, ($n_{\text{train}}=5{,}000$, $n_{\text{test}}=10{,}000$): 
    \[
    \begin{aligned}
    X_m = \{&\texttt{Feature 1}, \ \texttt{Feature 17}, \ \texttt{Feature 23}, \ \texttt{Feature 32}, \ \texttt{Feature 3}, \\
            &\texttt{Feature 27}, \ \texttt{Feature 12}, \ \texttt{Feature 4}, \ \texttt{Feature 25}, \ \texttt{Feature 2}\}.
    \end{aligned}
    \]
        \[
    \begin{aligned}
Y \in \{\texttt{0}, \texttt{1}\}.
    \end{aligned}
\]
   \item \MNIST\, ($n_{\text{train}}=30{,}000$, $n_{\text{test}}=30{,}000$): 
    \[
    \begin{aligned}
    X_m = \{&\texttt{block\_0\_4}, \ \texttt{block\_1\_6}, \ \texttt{block\_2\_2}, \ \texttt{block\_3\_5}, \ \texttt{block\_0\_3}, \\
    &\texttt{block\_0\_2}, \ \texttt{block\_5\_2}, \ \texttt{block\_4\_6}, \ \texttt{block\_5\_0}, \ \texttt{block\_4\_2}, \\
    &\texttt{block\_4\_3}, \ \texttt{block\_5\_6}, \ \texttt{block\_6\_3}, \ \texttt{block\_3\_3}, \ \texttt{block\_1\_3}, \\
    &\texttt{block\_5\_1}, \ \texttt{block\_4\_4}, \ \texttt{block\_3\_2}, \ \texttt{block\_5\_5}, \ \texttt{block\_2\_4}\}.
    \end{aligned}
    \]
\[
    \begin{aligned}
Y \in \{\texttt{0}, \texttt{1}, \texttt{2}, \texttt{3}, \texttt{4}, \texttt{5}, \texttt{6}, \texttt{7}, \texttt{8}, \texttt{9}\}.
    \end{aligned}
\]
\item \MIMIC\, \citep{johnson2023mimic} ($n_{\text{train}}=5{,}000$, $n_{\text{test}}=3{,}000$): 
\[
\begin{aligned}
X_0 = \{\texttt{Age}, \ \texttt{Gender}, \ \texttt{ICU}, \ \texttt{Diabetes}, \ \texttt{CPD} \}, \quad 
X_m = \{\texttt{Hemoglobin}, \ \texttt{Platelet}, \ \texttt{RBC}, \ \texttt{WBC}, \\ \texttt{Bicarbonate}, \ \texttt{BUN}, \ \texttt{Calcium}, \ \texttt{Chloride}, \texttt{Creatinine}, \ \texttt{Glucose}, \ \texttt{RDW}, \ \texttt{INR},\\  \texttt{PT}, \ \texttt{lymphocytes}, \ \texttt{monocytes}\}.
\end{aligned}
\]
\[
\begin{aligned}
Y_{\text{Length of stay}} &\in \{\texttt{0}, \texttt{1}\}, \\
Y_{\text{Mortality}} &\in \{\texttt{0}, \texttt{1}\}, \\
Y_{\text{Readmission}} &\in \{\texttt{0}, \texttt{1}\}.
\end{aligned}
\]
\end{asparaenum}

\begin{table}[t]
  \centering
  \caption{Label prevalences (\%) for final evaluation datasets.}
  \label{tab:prevalences-binary}
  \begin{tabular}{lc}
    \toprule
    Dataset & {\% Positive} \\
    \midrule
    MiniBooNE & 72.16 \\
    MIMIC-LOS & 44.90 \\
    MIMIC-Readmission & 16.52 \\
    MIMIC-Mortality & 8.67 \\
    \bottomrule
  \end{tabular}
\end{table}

\paragraph{Preprocessing} We define three binary classification tasks on \MIMIC. For each unique patient, we retain a single admission and set the prediction time to 48 hours after admission. Patients with an admission shorter than 48 hours are excluded. For each feature, we use the most recent measurement recorded before the prediction time; if no measurement is available from admission up to prediction time, the feature is treated as missing. To ensure that all ground-truth measurements are available for evaluation, we also exclude patients with missing values in any of the selected features in $X_m$. The tasks are defined as follows:

\begin{itemize}
    \item \textbf{Length of stay (LOS):} whether the hospital stay extends at least 7 days beyond the prediction time.
    \item \textbf{Mortality:} whether the patient dies during the same hospital admission.
    \item \textbf{Readmission:} whether the patient is readmitted to the hospital within 30 days of discharge.
\end{itemize}

The tasks are generated using MEDS to facilitate reproducibility \citep{mcdermott2025meds}. 

\subsubsection{Pretraining Task Prior}
Here we describe the synthetic task prior used for pretraining our \LTM models.

\textbf{GP.} We define a Gaussian process task prior with an RBF kernel to generate synthetic regression tasks. 
For each sampled task, we randomly select a subset of the input dimensions to be informative, 
while the remaining dimensions are treated as noise features. 
The kernel is parameterized with batch-specific lengthscales and output scales: 
lengthscales are drawn uniformly from the interval $[0.1, 5.0]$ for each dimension, 
and output scales are drawn uniformly from $[0.5, 2.0]$. 
Non-informative features are assigned a large lengthscale, effectively removing their contribution. 
An observation noise term $\sigma_\epsilon^2 I$ with $\sigma_\epsilon = 2\times 10^{-2}$ is added for numerical stability.

\textbf{BNN.} We define a Bayesian neural network (BNN) task prior for classification tasks that generates synthetic 
labeling functions over feature inputs. For each sampled task, we proceed as follows:

\begin{enumerate}
    \item \textbf{Selection of informative features.} For each batch, we randomly select 
    a subset of features between \([{\tt min\_feats}, {\tt max\_feats}]\). Data points are grouped into $1$--$3$ clusters by generating cluster centers sampled from a Gaussian distribution. Each datapoint $x \in \mathbb{R}^d$ is assigned a cluster label via its closest cluster center 
    according to Euclidean distance. Then each cluster is assigned its own 
    subset of informative features. Features not selected are masked out and do not influence the label.
    
    \item \textbf{Random BNN weights.} A two-layer feedforward neural network with 
    hidden dimension $H=8$ and $\tanh$ nonlinearity is constructed. Weights and biases are drawn from Gaussian 
    distributions and scaled by random importance weights and scale factors sampled uniformly from given ranges.
    The masked input features are passed through the random network, producing output logits.
    
    \item \textbf{Task-specific adjustments.} Logits are rescaled by a random temperature parameter sampled from a uniform range. For binary tasks, 
    a random bias shift is applied to match a target label prevalence 
    $p \in [0.05, 0.95]$.
    
    \item \textbf{Label generation.} The final logits are passed through a sigmoid 
    to produce probabilities, from which labels $Y$ are sampled as Bernoulli 
    (for binary classification) or categorical (for multi-class) random variables.
\end{enumerate}

This procedure defines a flexible family of tasks where both the informative feature subsets and the underlying labeling functions vary across tasks, simulating heterogeneity in feature importance for AFA.

\textbf{BNN-Plan.} We additional define a Bayesian neural network (BNN) task prior for classification tasks for evaluating planning capabilities of the 1-step lookahead variant:

\begin{enumerate}
    \item \textbf{Selection of informative features.} For each batch, we randomly select 
    a subset of features between \([{\tt min\_feats}, {\tt max\_feats}]\). Features not selected are masked out and do not influence the label.
    
    \item \textbf{Random BNN weights.} We use the same network sampling procedure as \BNN to generate a main effect term.
    
    \item \textbf{Additional interactions.} Out of the informative features, we randomly select pairs of features to form pairwise interactions. If the element-wise product of a pair $x_ix_j$ is above a threshold $\tau$, the label receives additional signal proportional to a randomly weighted sum $wx_ix_j$ of the products. This signal is added to the main effect term.
    
    \item \textbf{Label generation.} After the same task-specific adjustments as \BNN (logits scaling and shift), the final logits are passed through a sigmoid 
    to produce probabilities, from which labels $Y$ are sampled as Bernoulli 
    (for binary classification) or categorical (for multi-class) random variables.
\end{enumerate}

\subsubsection{Additional Training Details}
At each training step, we sample sequences of length $N$ from the pretraining pool. 
Feature values are normalized using the mean and variance of each feature within the task sequence. Missingness is introduced either by randomly dropping features (MCAR) or by sampling feature-specific missingness mechanisms from the BNN prior (MAR) that depend only on the baseline covariates $X_0$. The missingness rates vary by feature, and we set the maximum probability of missingness $p(R_j=0|X_0) \leq 0.5$. 
A summary of the experimental design is provided in Table~\ref{tab:training_design}.

\begin{table}[t]
\centering
\caption{Experimental setup across datasets. Each training step samples sequences of length $N$ from the pretraining pool. Feature values are normalized within each sequence.}
\label{tab:exp_setup}
\begin{tabular}{lccc}
\toprule
\textbf{Dataset} & \textbf{Sequence length $N$} & \textbf{Missingness} & \textbf{Task} \\
\midrule
\GP & 500 & MCAR & Regression \\
\Mini            & 1000 & MCAR & Binary Classification \\
\MNIST & 1000 & MCAR & Binary Classification \\
\Metabric        & 500  & MCAR & Multi-class Classification \\
\MIMIC           & 1000 & MAR & Binary Classification \\
\bottomrule
\end{tabular}
\label{tab:training_design}
\end{table}

\subsubsection{Compute Details}
All experiments were run on a server with 4 NVIDIA H100 NVL GPUs, 2 Intel(R) Xeon(R) Platinum 8480+ CPUs (56 cores each) with 2Tb of memory.

\subsubsection{Runtimes}
The pretraining procedure for both the predictor and policy using the GP prior (sequence length of 500, 10 features, 100000 training steps) takes approximately 10 hours total on a single GPU. 

We additionally compute runtimes for our baseline methods:
\begin{table}[htbp]
\centering
\caption{Training runtimes across 50 tasks.}
\label{tab:training_runtimes}
\begin{tabular}{l c c r r}
\toprule
\textbf{Method} & \textbf{Tasks ($n$)} & \textbf{Epochs} & \textbf{Median Time (s)} & \textbf{Total Time (s)} \\
\midrule
Baseline Classifier & 50 & 300 & 11.65 & 599.85 \\
DIME                & 50 & 50  & 37.89 & 1804.40 \\
GDFS                & 50 & 50  & 52.91 & 2549.27 \\
\bottomrule
\end{tabular}
\end{table}

\subsubsection{Hyperparameters}
For our \LTM model, we use the same hyperparameter configurations for all our experiments as shown in \ref{tab:pfn_hyperparameters}. Unless specified otherwise, we use the greedy variant with the following hyperparameters. For pretraining the predictor, all models are trained for 100000 steps with a batch size of 8 tasks (with the exception of \MNIST, which was trained for 50000 steps). The predictor is trained with the Adam optimizer with the default optimizer parameters and with linear decay. We evaluate validation loss over a fixed set of tasks at every 500 steps and save the model with the best validation loss. 

For the policy, we use a fixed temperature of 0.1 and a batch size of 8 for a total of 50000 training steps, with no learning rate decay. The transformer backbone and predictor weights are also jointly updated, with a lower learning rate of $1 \times 10^{-5}$.

\begin{table}[!ht]
    \centering
    \begin{tabular}{ll}
        \toprule
        \textbf{Hyperparameter} & \textbf{Value} \\
        \midrule
        Hidden Layer Size         & 512        \\
        Model Dimension         & 256        \\
        Number of Layers         & 6        \\
        Attention Heads         & 4        \\
        Embedding Depth         & 4        \\
        Dropout         &  0       
        \\
        Predictor Learning Rate        & $1 \times 10^{-4}$ \\
        Policy Learning Rate        & $1 \times 10^{-4}$ \\
        Warmup steps        &  500      
        \\
        \bottomrule
    \end{tabular}
    \caption{Transformer Model Hyperparameters}
    \label{tab:pfn_hyperparameters}
\end{table}
\FloatBarrier

\paragraph{Planning experiments} For evaluating the 1-step lookahead variant, we use a fixed temperature of 0.5 and freeze the encoder and predictor when training the policy. We use $N_{\text{MC}}=4$ for computing the gradients. For a fair comparison, we train the greedy variant under the same procedure (including the same frozen components and gradient-estimation budget).

\subsubsection{Baselines}
We describe the baselines used in our experiments, noting several modifications made to improve computational tractability when evaluating across a large number of tasks.

\textbf{MLP (Random).} For each evaluation task, we train a two-layer multilayer perceptron 
(MLP) with hidden dimension 128. The model is trained on randomly selected feature subsets to predict the target label, using a batch size of 64 for 300 epochs. At test time, acquisition actions are chosen uniformly at random, and the MLP is used to make predictions. 
This task-specific MLP serves as the predictor model for the remaining baselines.

\textbf{GDFS} \citep{covert2023learning}. The policy network is also a two-layer MLP with hidden dimension 128. In contrast to the original paper, which trains the selector policy using Gumbel-softmax with a temperature decay schedule, we train using the straight-through Gumbel-softmax estimator with a fixed temperature of 0.5.

\textbf{DIME} \citep{gadgil2024estimating}. The reward predictor is also a two-layer MLP with hidden dimension 128. We train the reward predictor using random 
acquisitions, rather than the $\epsilon$-greedy acquisition strategy with decay 
as described in the original paper.

\DQN \citep{janisch2019classification} We adopt the Q-learning framework, where the action-value function 
$Q(s_t,a)$ estimates the expected return from state $s_t$ after taking action $a$. The optimal acquisition actions are selected by taking the action with the largest Q-value estimate $Q(s_t,a)$. 

We consider a dueling network architecture \citep{wang2016dueling}. The dueling network consists of two 
MLPs with two hidden layers of dimension 128: one head outputs $V(s_t)$, 
and the other outputs $A(s_t,a)$ for all actions $a$. 
The final Q-value estimate is computed as
\[
    Q(s_t,a) = V(s_t) + \Big(A(s_t,a) - \tfrac{1}{|\mathcal{A}|} \sum_{a'} A(s_t,a')\Big),
\]
Because the final log likelihood is identical across complete trajectories, we apply a strong discount factor to prioritize early acquisitions that reduce predictive loss.
The one-step temporal-difference (TD) target is defined as
\[
    y_t = r_t + \gamma \max_{a'} Q_{\theta^-}(s_{t+1}, a'),
\]
where $r_t$ is the immediate reward, 
$Q_{\theta^-}$ denotes the target Q-network, and $\gamma=0.9$ is the discount factor. We train for 200 episodes using an experience buffer of size 10{,}000, 
with samples collected via an $\epsilon$-greedy strategy. Training updates use mini-batches of 128 samples, and the target Q-network is synchronized every 4 episodes.

\subsubsection{Source code}
\label{app:source_code}
\url{https://github.com/reAIM-Lab/Learning-To-Measure}

\newpage
\subsection{Additional Results}

\subsubsection{Uncertainty Quantification}
We perform evaluations of uncertainty estimation for the classification tasks. We first evaluate the ability for \LTM to recover the ground true probabilities in a set of evaluation tasks randomly sampled from the same semi-synthetic BNN task prior used during training. To evaluate uncertainty quantification for classification, we compute the KL divergence and brier score (MSE) between the predicted probabilities and ground truth probabilities. We also additionally show the AUROC to assess the ability for the model to rank samples.

\begin{figure}[H]
  \centering

  \begin{tabular}{c}
    \includegraphics[width=0.6\textwidth]{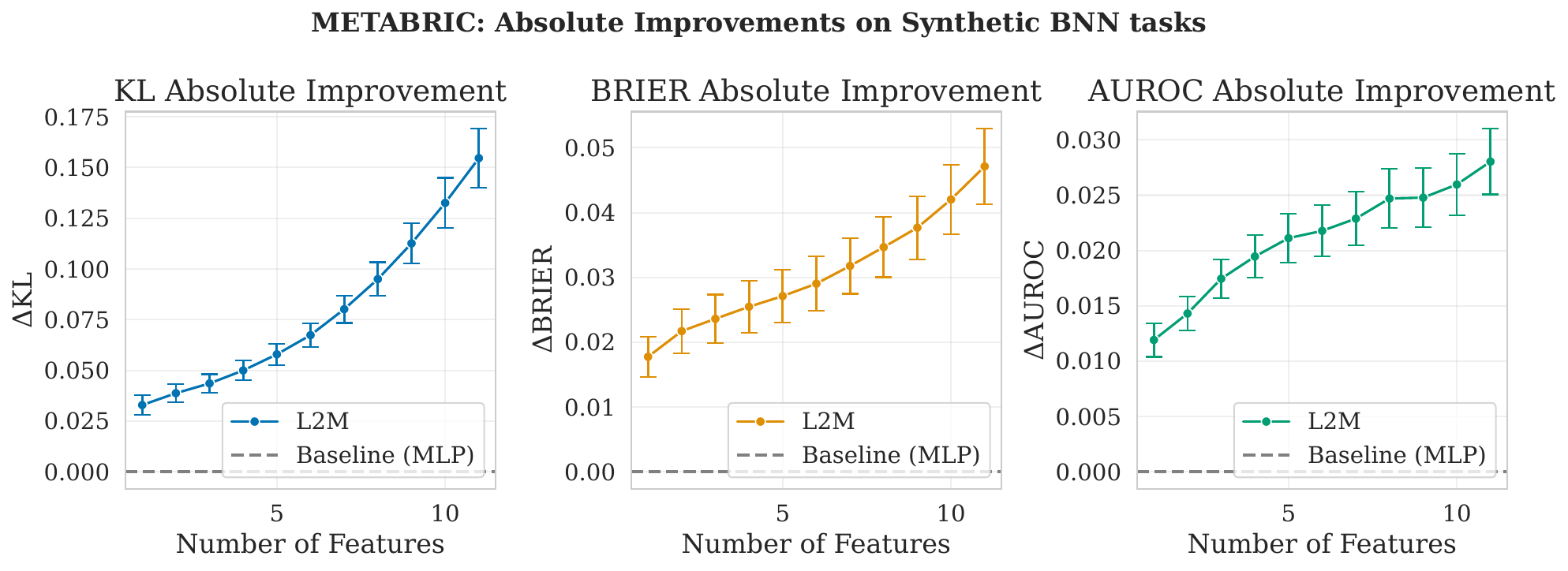} \\
    \includegraphics[width=0.6\textwidth]{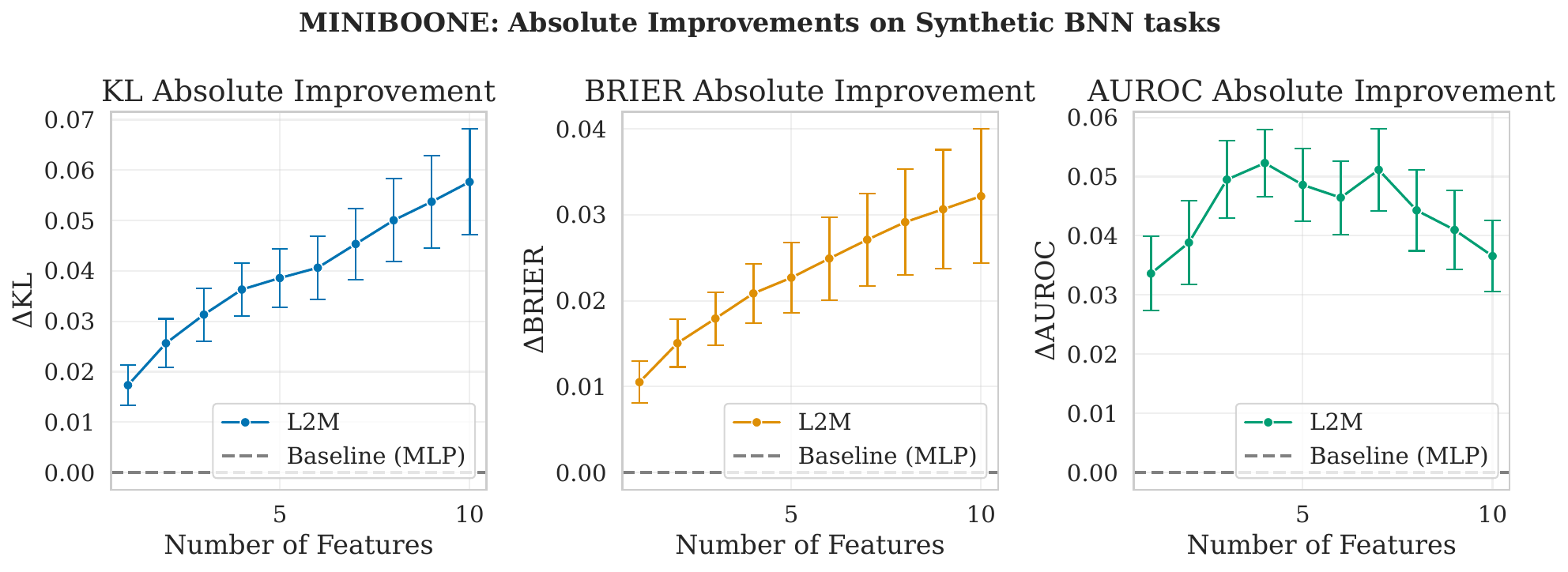} \\
    \includegraphics[width=0.6\textwidth]{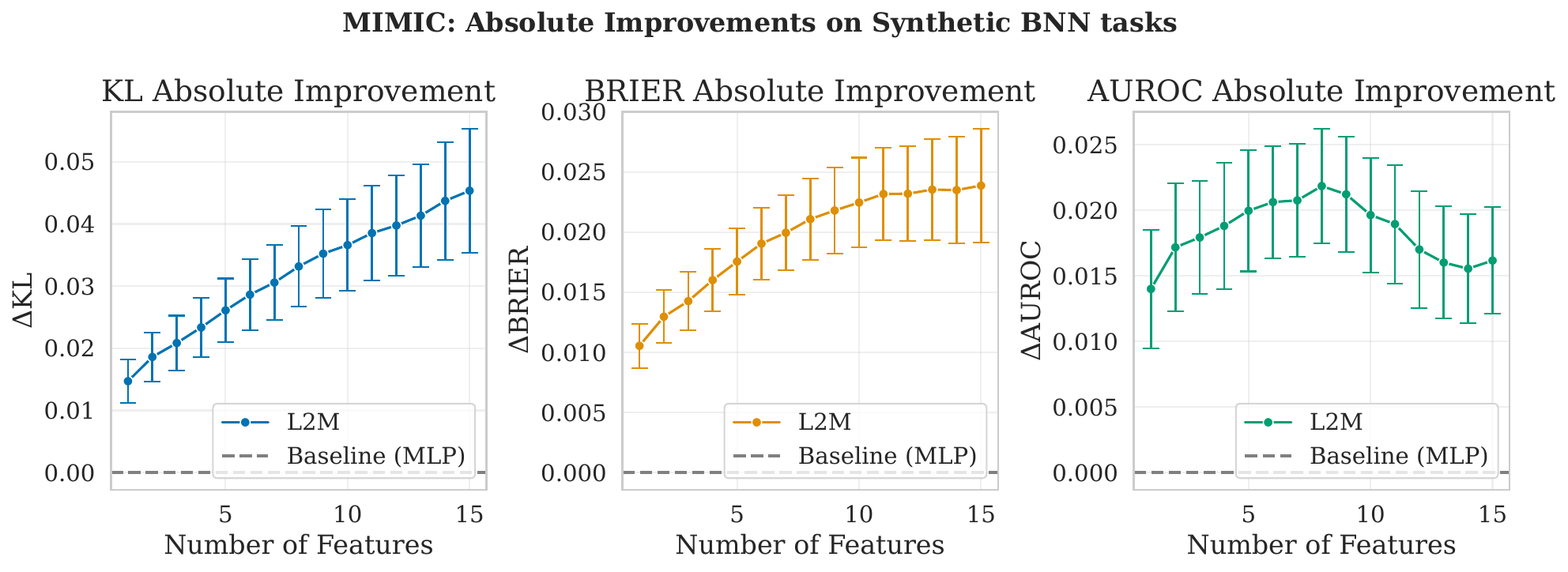}
  \end{tabular}
  \caption{We identify a similar pattern where the quality of uncertainty quantification is better than the task-specific MLP, and the gains are larger as we acquire more features.}
  \label{fig:triple_uq_classification}
\end{figure}

\newpage
Next, we evaluate \LTM on classification using semi-synthetic tasks built from real labels that were unseen during training. Because these tasks provide hard (binary) labels rather than ground-truth probabilities, we report negative log-likelihood (binary cross-entropy) and Brier score to assess uncertainty. Accordingly, we do not use KL divergence in this setting.

\begin{figure}[H]
  \centering
  \begin{tabular}{c}
    \includegraphics[width=0.6\textwidth]{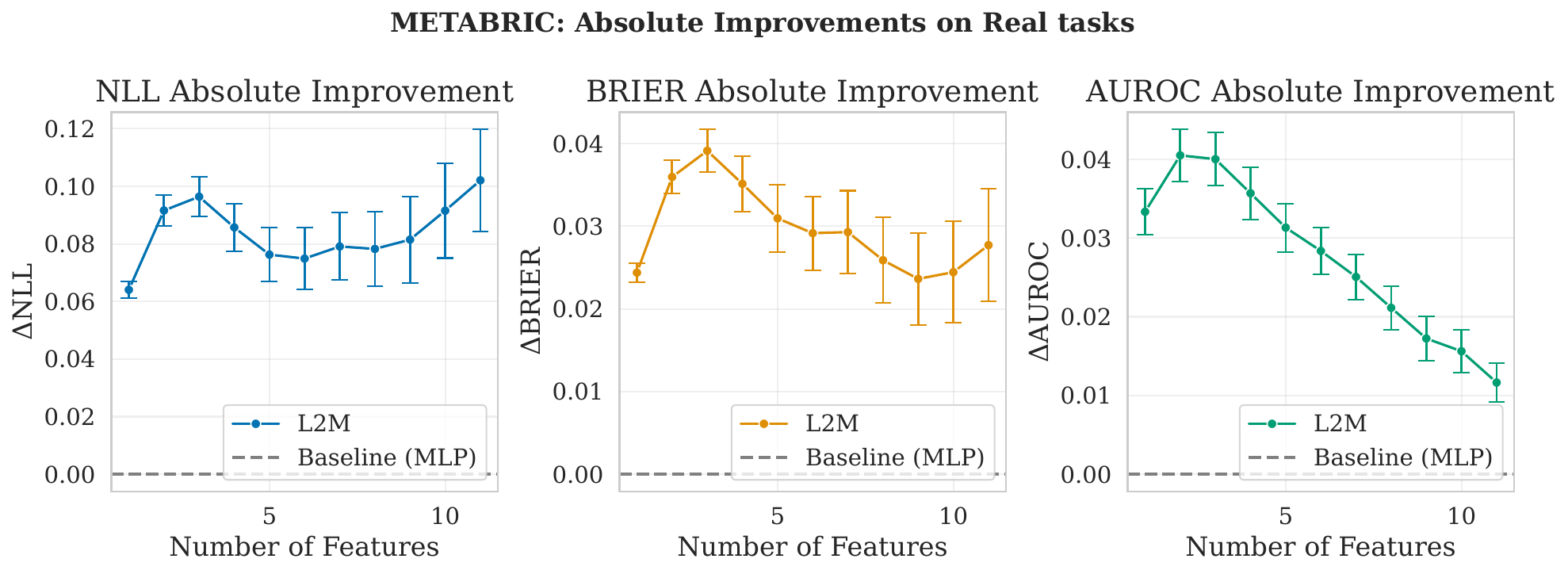} \\
    \includegraphics[width=0.6\textwidth]{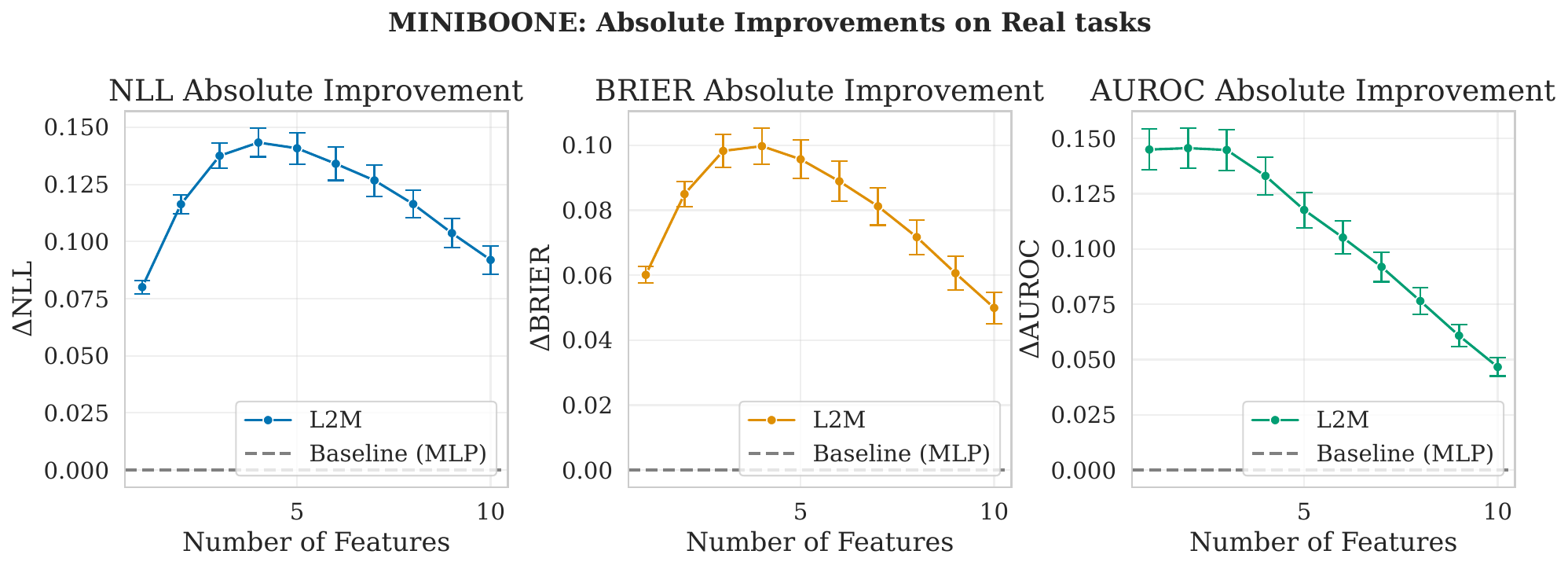} \\
    \includegraphics[width=0.6\textwidth]{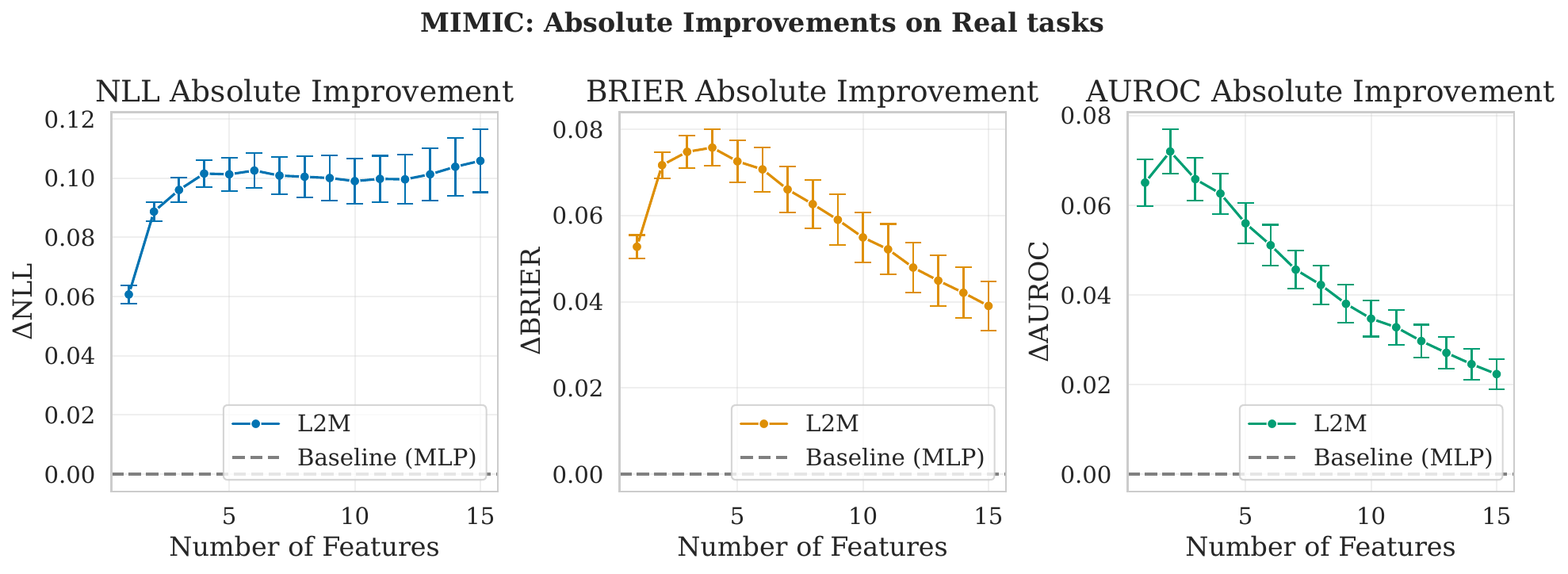}
  \end{tabular}

  \caption{The performance on real tasks is mixed, and dependent on various factors such as whether the pretraining BNN tasks are closely aligned to the real unseen tasks.}
  \label{fig:triple_uq_classification_bnn}
\end{figure}

Finally, we show the analogous result for the GP regression tasks using the same set of tasks in Figure~\ref{fig:UQ_GP}. We also use mean squared error (MSE) of the predicted mean to assess the quality of \LTM estimates.
\begin{figure}[!ht]
    \centering
    \includegraphics[width=0.5\linewidth]{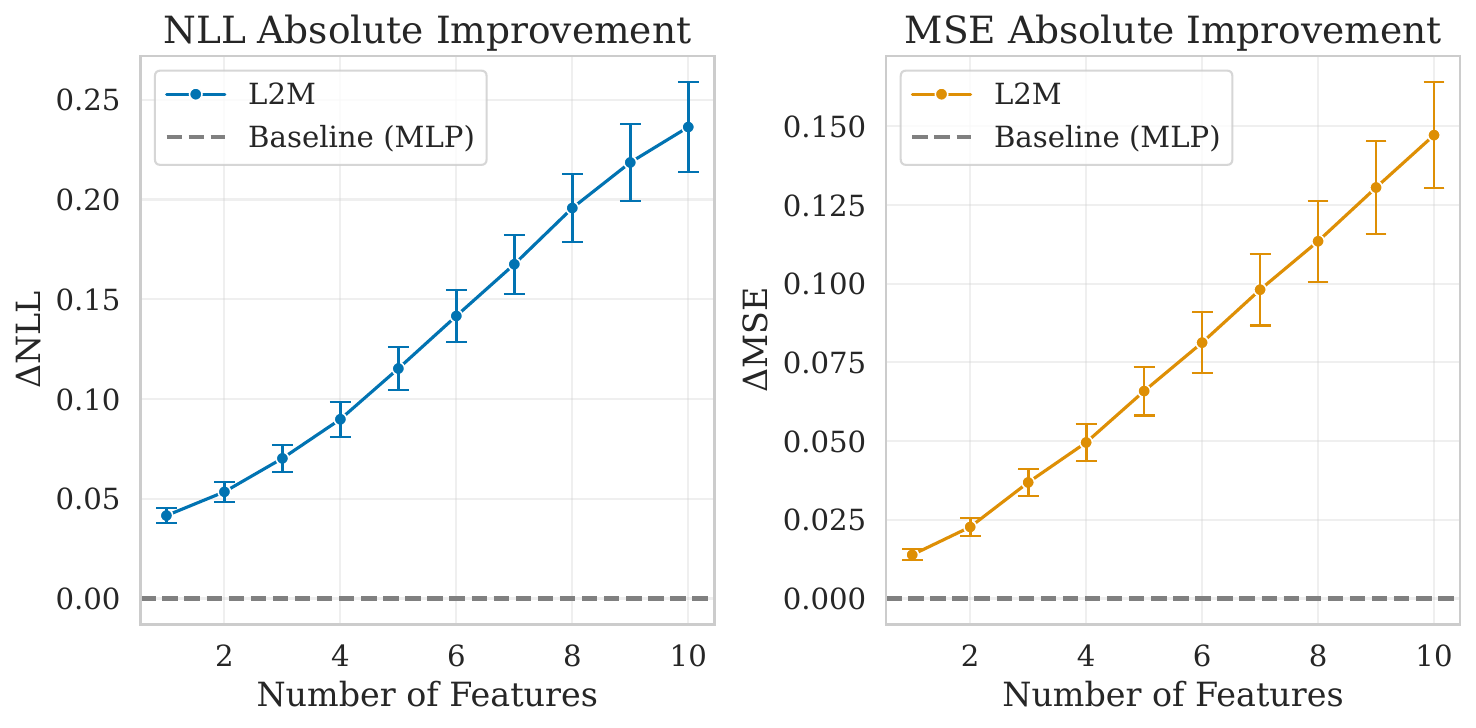}
    \caption{NLL and MSE improvements across acquisition step. The \LTM\ approach shows progressively increasing gains as more features are acquired in the trajectory.}
    \label{fig:UQ_GP_improvements}
\end{figure}

\newpage
\subsubsection{AFA Policy Evaluation}

\paragraph{Performance}
While the main text focuses on the policy’s ability to reduce uncertainty, we report additional performance metrics for AFA: MSE for regression tasks and AUROC for classification tasks. We evaluate on the same set of tasks and samples as in Figure~\ref{fig:AFA_performance}.
\begin{figure}[!ht]
\centering
\includegraphics[width=0.8\textwidth]{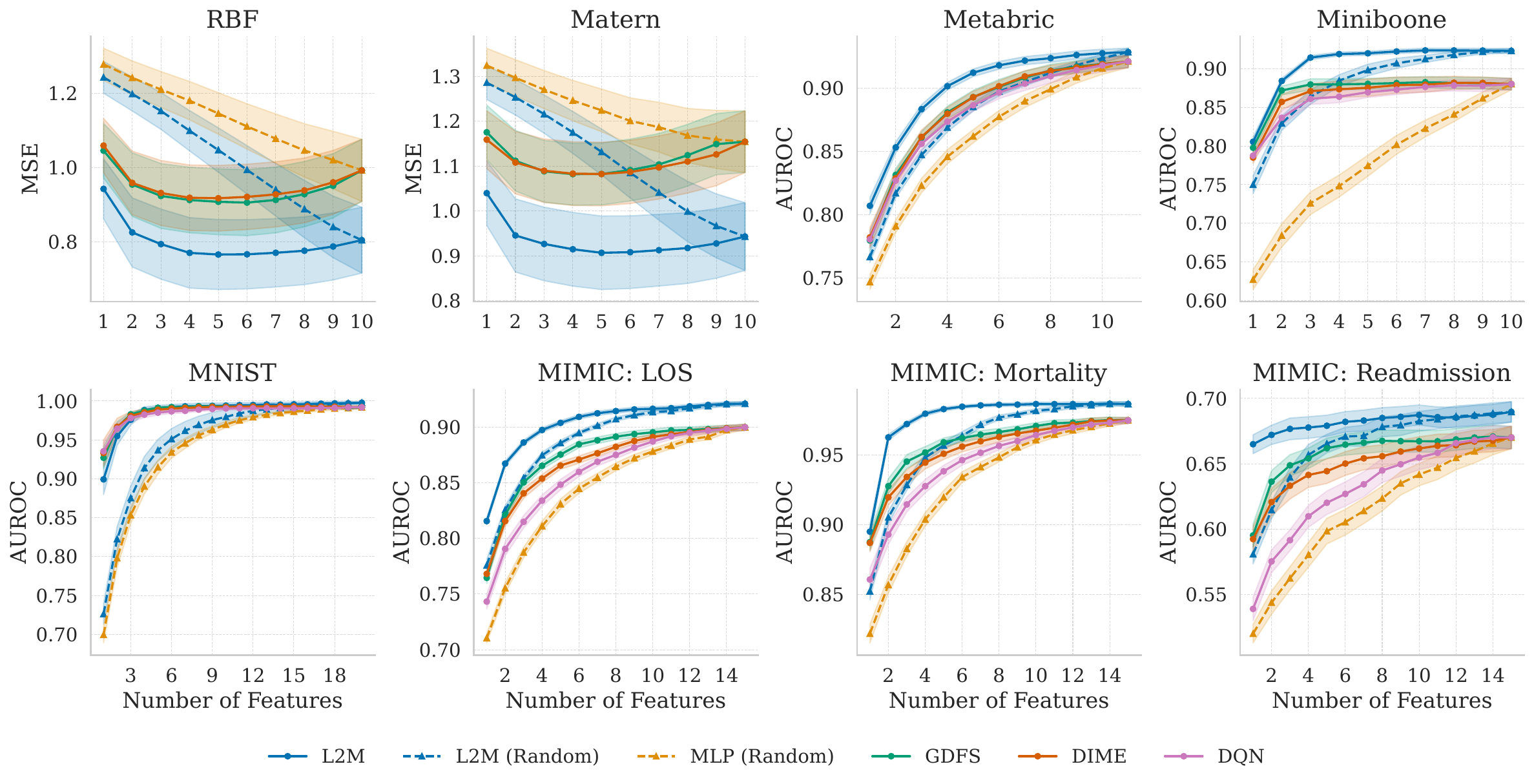}
\caption{We show that \LTM maintains improved performance under this evaluation, achieving lower MSE on regression tasks and higher AUROC on classification tasks.}
\label{fig:AFA_auroc}
\end{figure}
\FloatBarrier

\paragraph{Context Length and Missingness Rates} We demonstrate that the \LTM provides the largest benefits in scenarios where there is lower number of context samples, and higher rates of missingness. 

\begin{figure}[!ht]
    \centering
    \begin{subfigure}[t]{0.49\columnwidth}
        \centering
        \includegraphics[width=\linewidth]{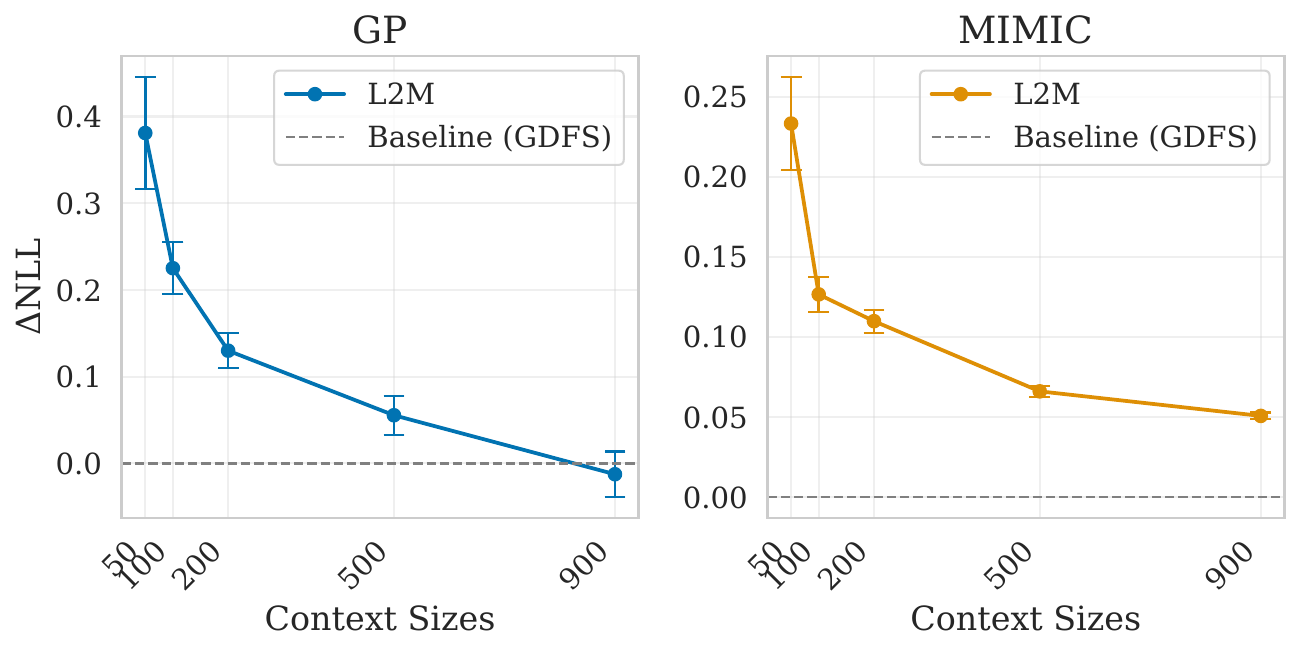}
        \caption{NLL improvement across context lengths}
        \label{fig:context_length}
    \end{subfigure}
    \hfill
    \begin{subfigure}[t]{0.49\columnwidth}
        \centering
        \includegraphics[width=\linewidth]{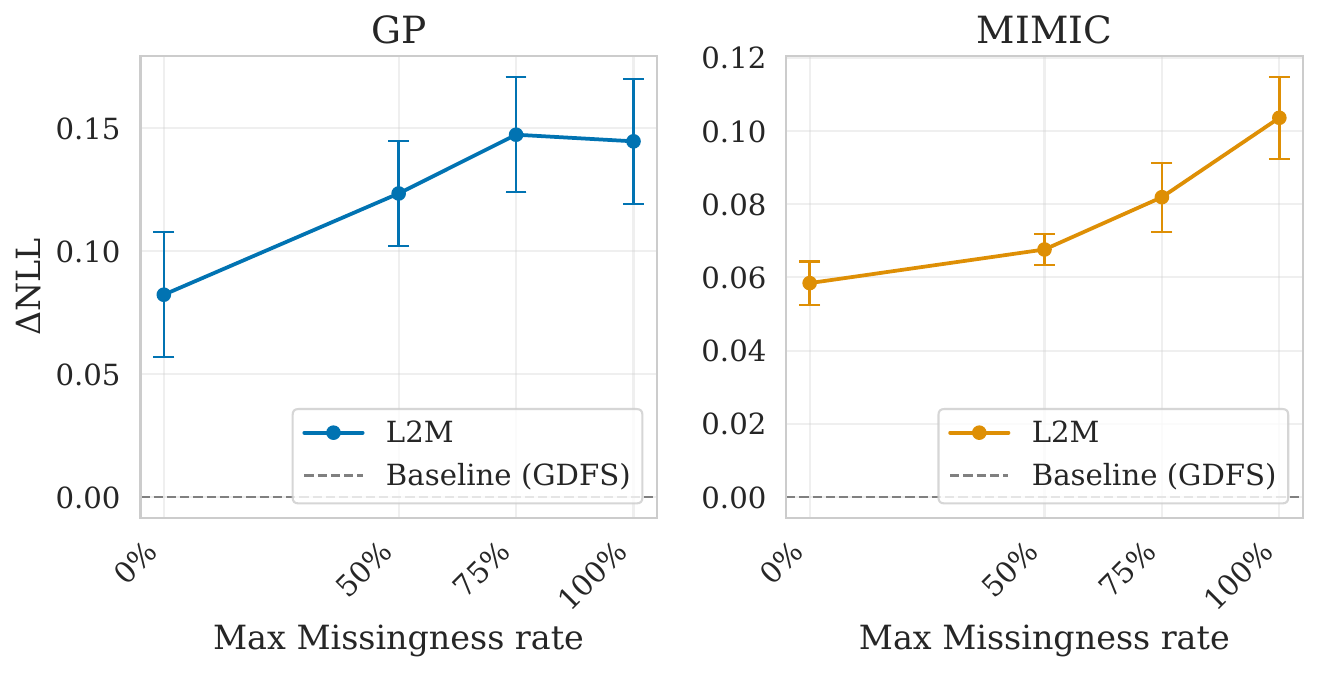}
        \caption{NLL improvement across missingness rates}
        \label{fig:missingness}
    \end{subfigure}
    \caption{Average improvement in log loss (y-axis) for the GP tasks and MIMIC-LOS for a fixed set of evaluation tasks while varying the number of context samples, and levels of retrospective missingness in each task (x-axis). Task-specific improvements are averaged over 100 (GP) and 50 (MIMIC) tasks. \LTM is robust to settings with fewer shots (labeled samples) and with higher rates of feature missingness. \LTM also achieves length generalization for the GP task, as it is only trained on sequences of 500 samples.}
    \label{fig:insights}
\end{figure}

\FloatBarrier

\paragraph{Acquisition on simulated tasks} We evaluate our model using tasks sampled from our synthetic pretraining prior where underlying feature informativeness is known apriori. We plot the precision and recall achieved by the acquisition method at each budget $k$, as well as the log loss metrics. We use the \MIMIC test distribution for the covariates $X$, and sample synthetic labels $Y$ using the synthetic prior. We sample 50 test tasks and evaluate with different context lengths.
\begin{figure}[!ht]
\label{fig:mimic_recall}
\centering
\begin{minipage}[t]{0.48\textwidth}
    \centering
    \includegraphics[width=\textwidth]{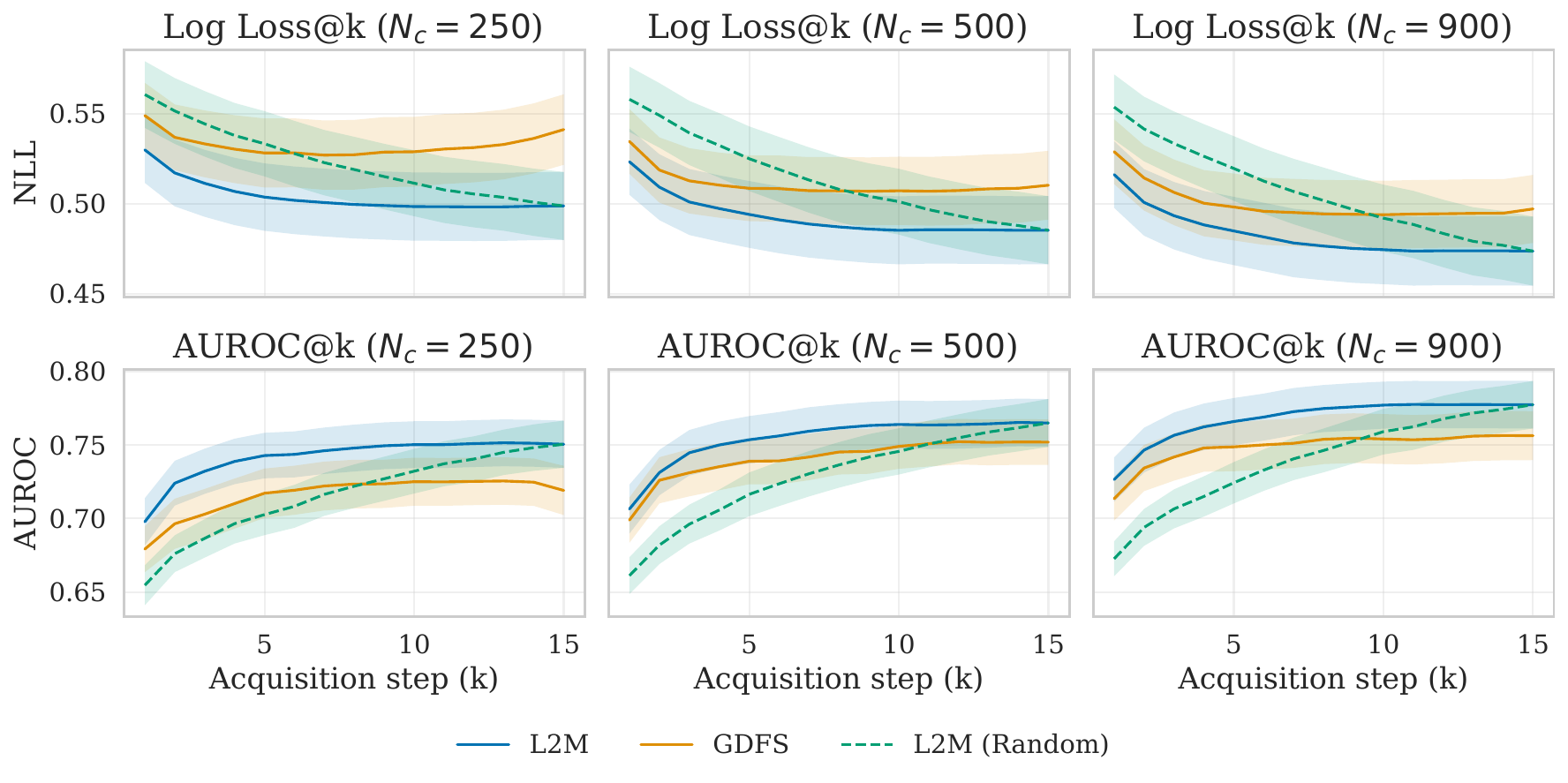}
    \caption*{\small Log Loss and AUROC}
\end{minipage}
\hfill
\begin{minipage}[t]{0.48\textwidth}
    \centering
    \includegraphics[width=\textwidth]{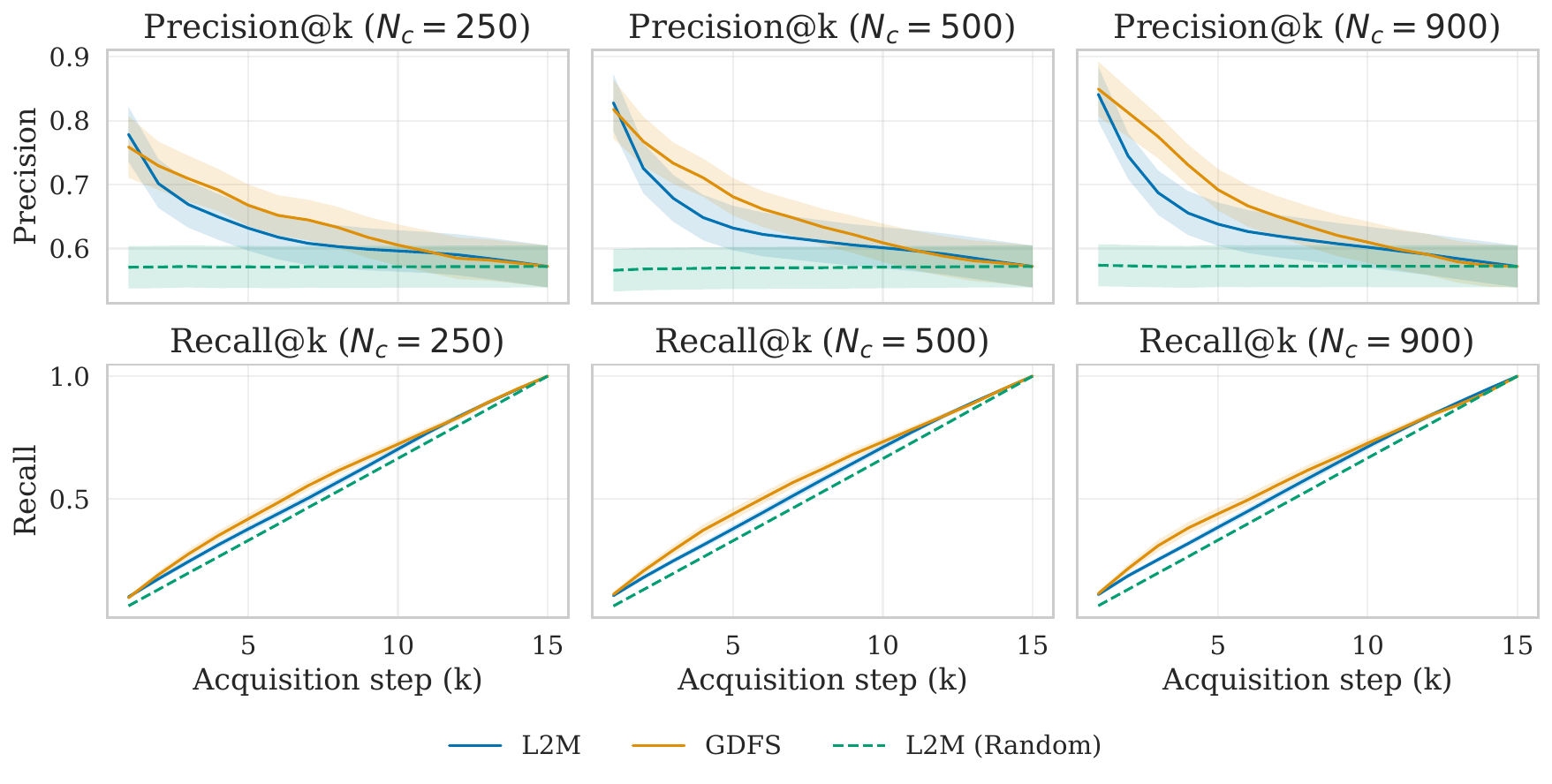}
    \caption*{Precision and Recall}
\end{minipage}

\vspace{0.2em}
\caption{While the meta-learning approach demonstrates stronger uncertainty quantification (log loss) and discrimination (AUROC) performance at each decision budget, the task-specific approach is slightly more precise when identifying informative features. The task-specific model may be learning a sharper ranking over the features, while the meta-learning approach maintains uncertainty over the optimal acquisition actions.}
\end{figure}

\paragraph{Fully synthetic variant} We further evaluate the fully synthetic variant of \LTM. This setup trains on a much more diverse task prior, where each task is defined by labels sampled from \BNN and covariates sampled from multivariate Gaussian distributions with randomly sampled means and covariances. Across tasks, we allow the feature dimensionality to vary. For evaluation, we sample only simulated tasks with at least 5 features available for acquisition, while leaving the total number of features per task flexible. We note that this setting is particularly challenging for meta-learning, as the task prior must encompass heterogeneous feature distributions with varying dimensionalities. We sample 100 test tasks and evaluate with different context lengths.

\begin{figure}[H]
\centering
\begin{minipage}[t]{0.48\textwidth}
    \centering
    \includegraphics[width=\textwidth]{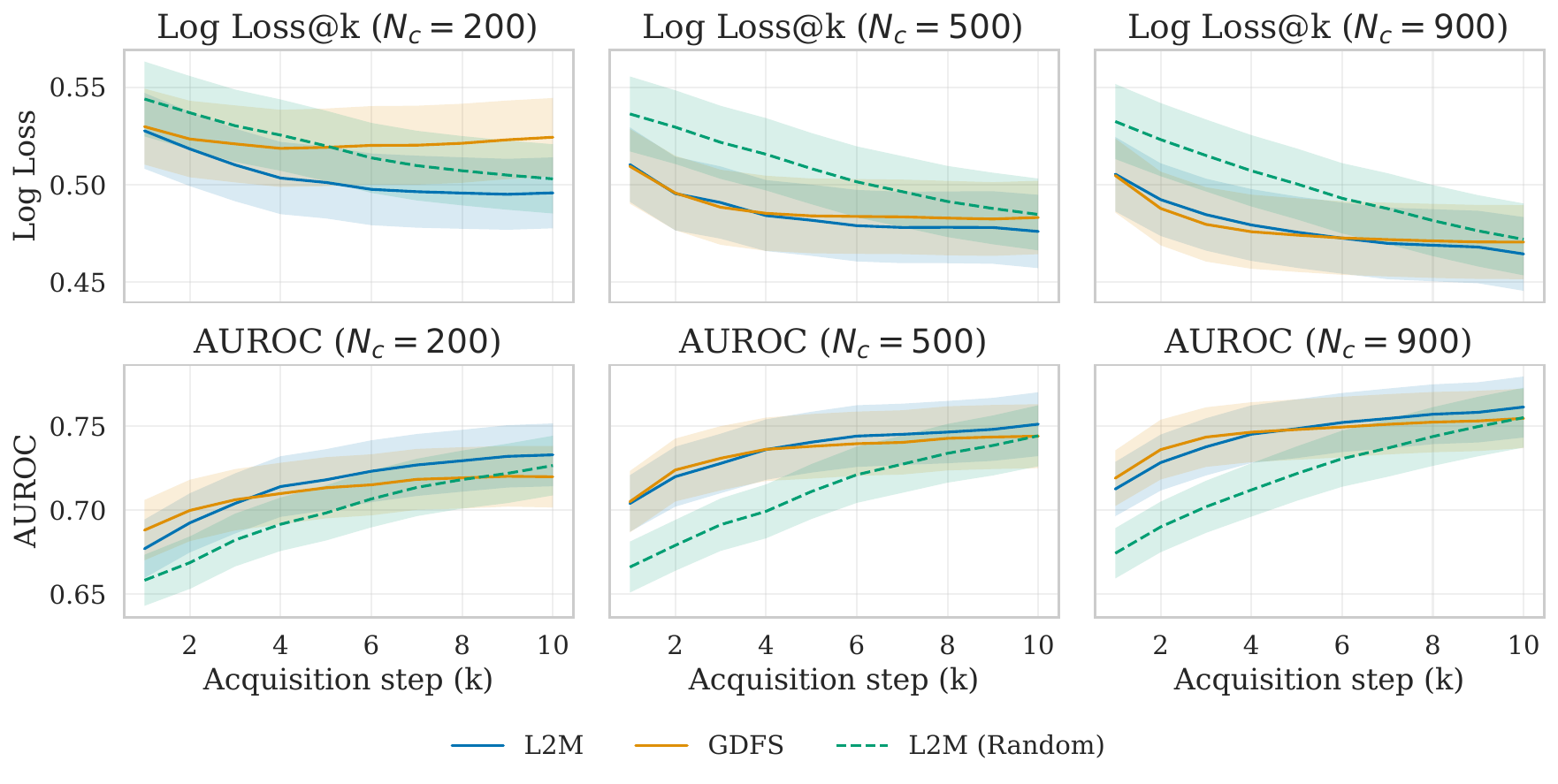}
    \caption*{\small Log Loss and AUROC}
\end{minipage}
\hfill
\begin{minipage}[t]{0.48\textwidth}
    \centering
    \includegraphics[width=\textwidth]{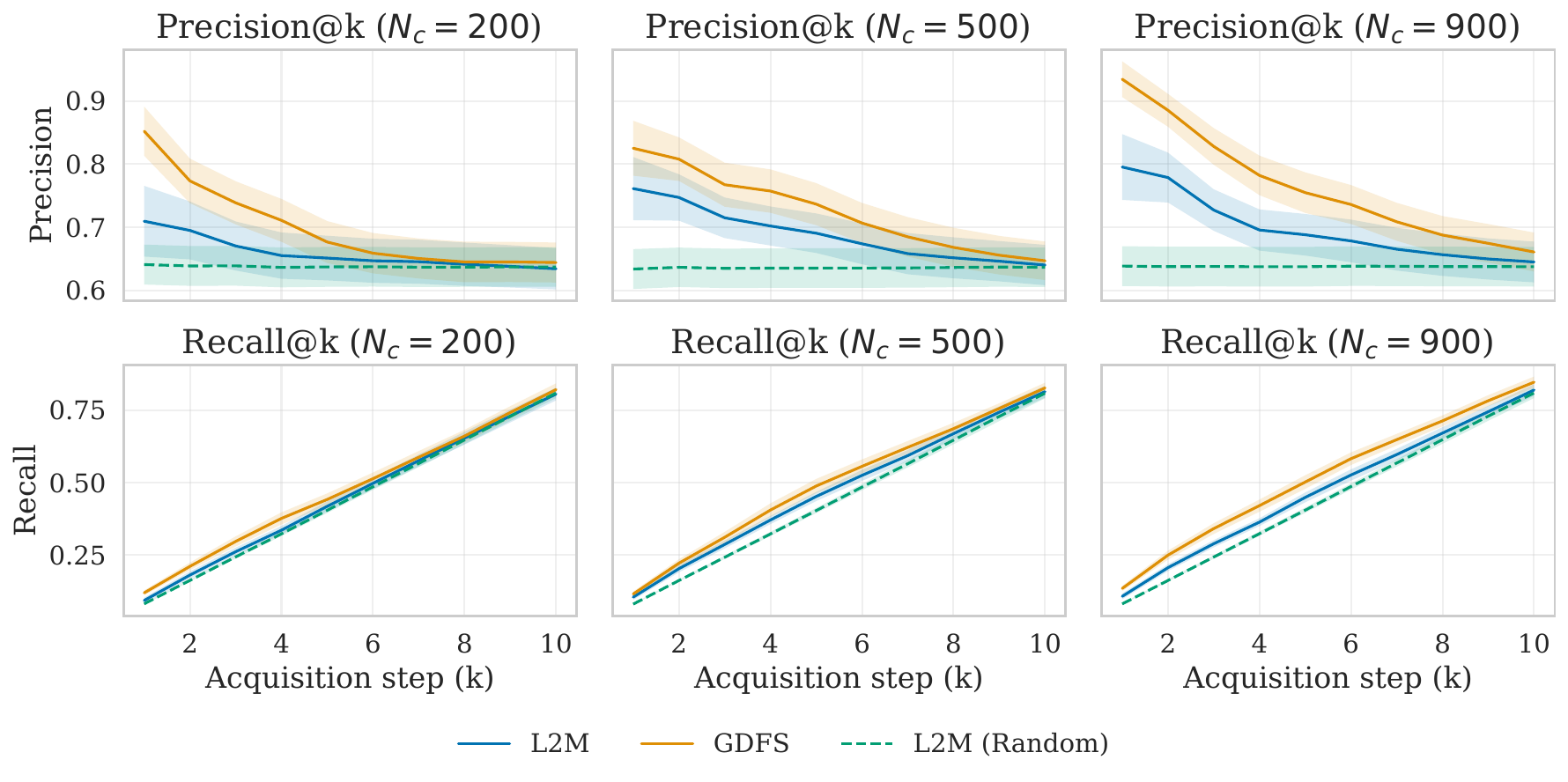}
    \caption*{Precision and Recall}
\end{minipage}

\vspace{0.2em}
\caption{\LTM is able to match the performance of the task-specific baseline even in this challenging scenario. However, we find similarly that the task-specific approach is slightly more precise when identifying informative features.}
\label{fig:sim_recall}
\end{figure}

We also evaluate the fully synthetic \LTM model on real-world binary classification tasks. This is a particularly challenging setup, since the model never observes the real-world task’s covariate distribution or label mechanism during pretraining. We denote this baseline as \textbf{L2M-Syn}, as the model is trained on fully synthetic data and applied in a zero-shot fashion to an out-of-distribution setting. We find that \textbf{L2M-Syn} acquires features that outperform random selection on the real-world tasks, but its performance degrades when compared to an \LTM model trained directly on the real-world covariate distributions. We additionally run an ablation where, for the fully synthetic variant, we perform task-specific fine-tuning using training samples from the target (real) task. Using a synthetic pretrained model as a starting point for continual training on real data is a known strategy \citep{garg2025real}. While this approach can substantially improve performance, it comes at the cost of the intended efficiency benefits of in-context learning. Ideally, with a diverse prior specification and increased scale, the model can perform zero-shot inference and adaptation to unseen tasks without additional gradient-based training.

\begin{figure}[!ht]
    \centering
    \includegraphics[width=0.8\linewidth]{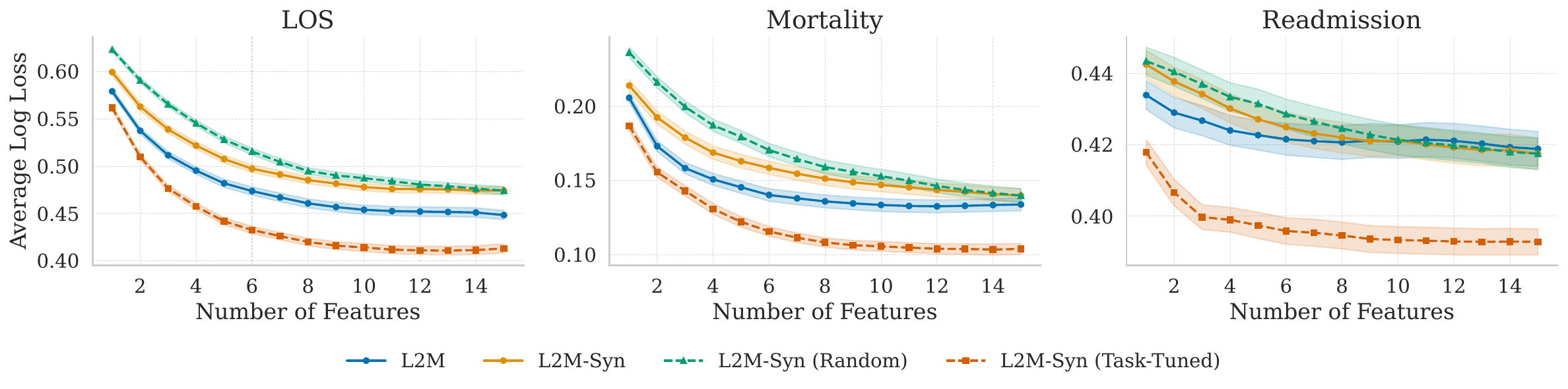}
    \caption{The fully synthetic variant is less effective on real-world tasks, largely because the covariate distribution it was trained on differs substantially from the real-world data. Continual training on the real-world task distribution improves performance.}
    \label{fig:zero-shot-l2m}
\end{figure}

\paragraph{Benefit of Adaptivity} Real-world tasks such as LOS and mortality prediction may depend on different mechanisms across patients. We provide an ablation where we use the same \LTM model but instead of the per-instance optimal action predicted by the model, we take the most frequently selected action across the entire test set at each step (majority vote). We find that the instance-wise adaptive feature selection is beneficial for some tasks, but other simpler tasks do not require granular acquisition. 
\begin{figure}[!ht]
    \centering
\includegraphics[width=0.6\linewidth]{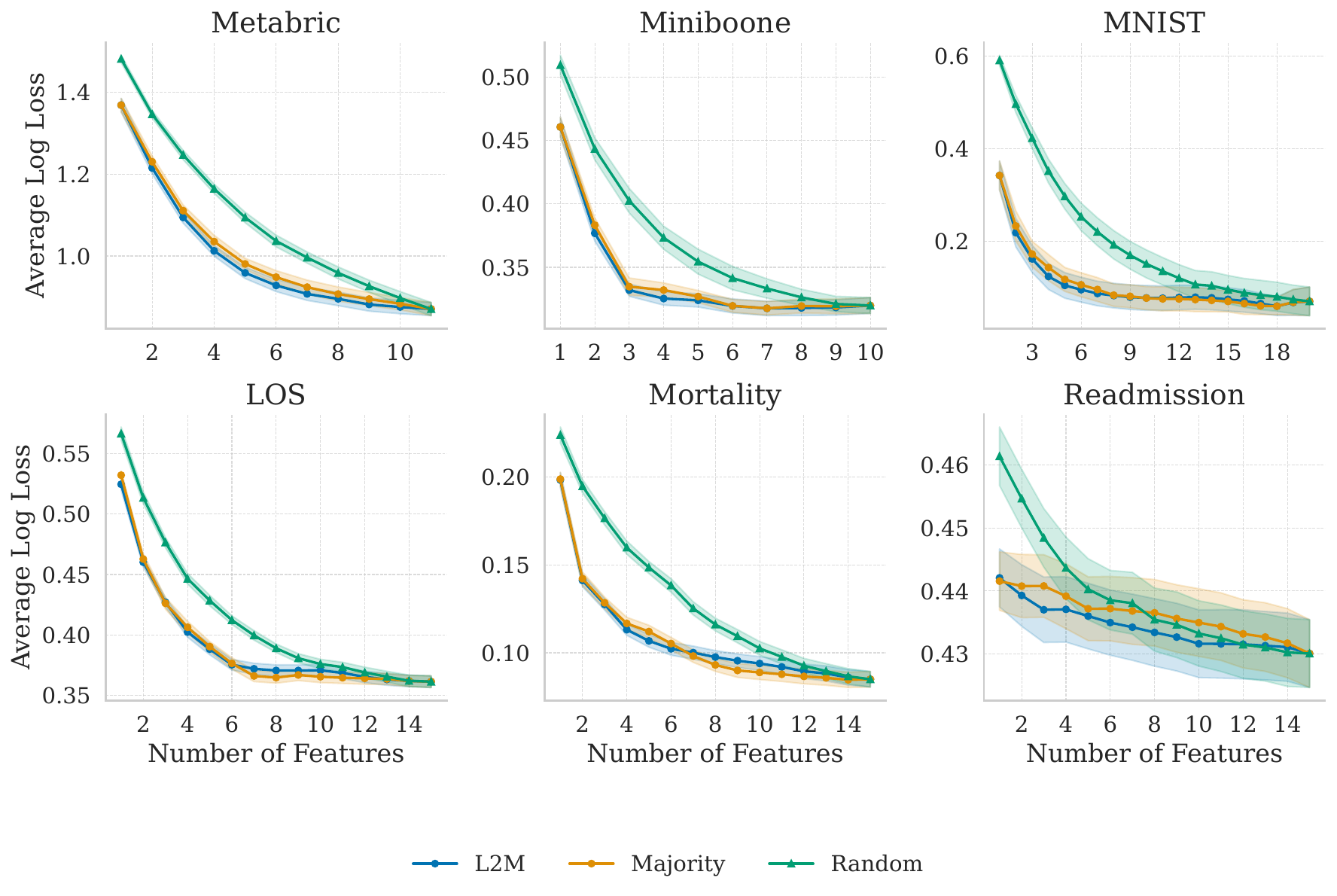}
    \caption{Simpler tasks such as Miniboone do not benefit from instance-wise adaptive selection. We also note that it is difficult to demonstrate the benefit of per-instance adaptivity in small-scale MIMIC experiments.}
\label{fig:adaptivity_tabular}
\end{figure}
\FloatBarrier

\paragraph{Real missingness patterns} We evaluate \LTM when the context set exhibits real-world missingness patterns not seen during pretraining. These results should be interpreted with caution. The model was pretrained only on complete cases with synthetically injected missingness, so the evaluation introduces a distribution shift in both patient characteristics and missingness mechanisms. In addition, the query patients are complete cases while the context patients may be partially observed, creating a context–query shift. We therefore present these results as a stress test, not a definitive measure of deployment performance.

\begin{figure}[H]
    \centering
\includegraphics[width=0.7\linewidth]{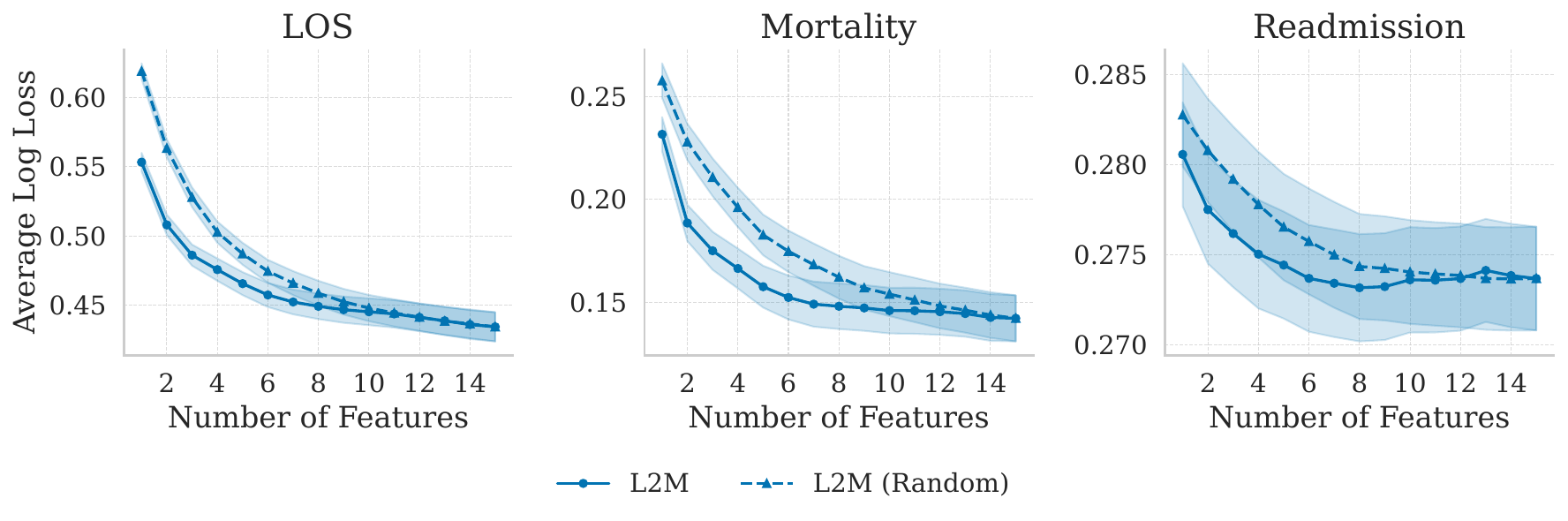}
    \caption{LTM is able to learn policies that improve on random acquisition on unseen natural missingness patterns in real-world data.}
    \label{fig:real_missingness_mimic}
\end{figure}

\subsubsection{Gradient Estimation Evaluation}
We show that pathwise gradient estimation substantially improves sample efficiency for estimating policy gradients. We first construct a high-accuracy reference gradient using REINFORCE. To construct the evaluation, we use the pretrained \LTM predictor model but keep the policy randomly initialized, ensuring equal probability mass (support) over all available actions. We sample evaluation tasks and for each task, draw 500 context samples and 500 query samples with randomly masked features. For each partially observed query, we estimate the REINFORCE policy gradient by averaging over 1000 sampled acquisition trajectories, yielding a reference gradient with respect to the policy parameters $\theta$. 

For the same evaluation samples, we compute the Monte-Carlo estimate of the policy gradient using varying number of trajectories, using both the REINFORCE estimator and our pathwise estimator at varying temperatures. We report the $l_2$ error/norm of the gradient with respect to the reference in Figure~\ref{fig:Gradient_bias}.

We find that the pathwise estimator, while biased due to the straight-through relaxation, is able to estimate the gradient with low error due to lower variance. We find that the gains are larger for the \textbf{BNN-Plan} which draws from a more diverse distribution of tasks. We also find that pathwise estimation with lower temperatures seem to struggle possibly due to higher variance.

\begin{figure}[!ht]
    \centering
    \begin{subfigure}{\columnwidth}
        \centering
        \includegraphics[width=0.6\linewidth]{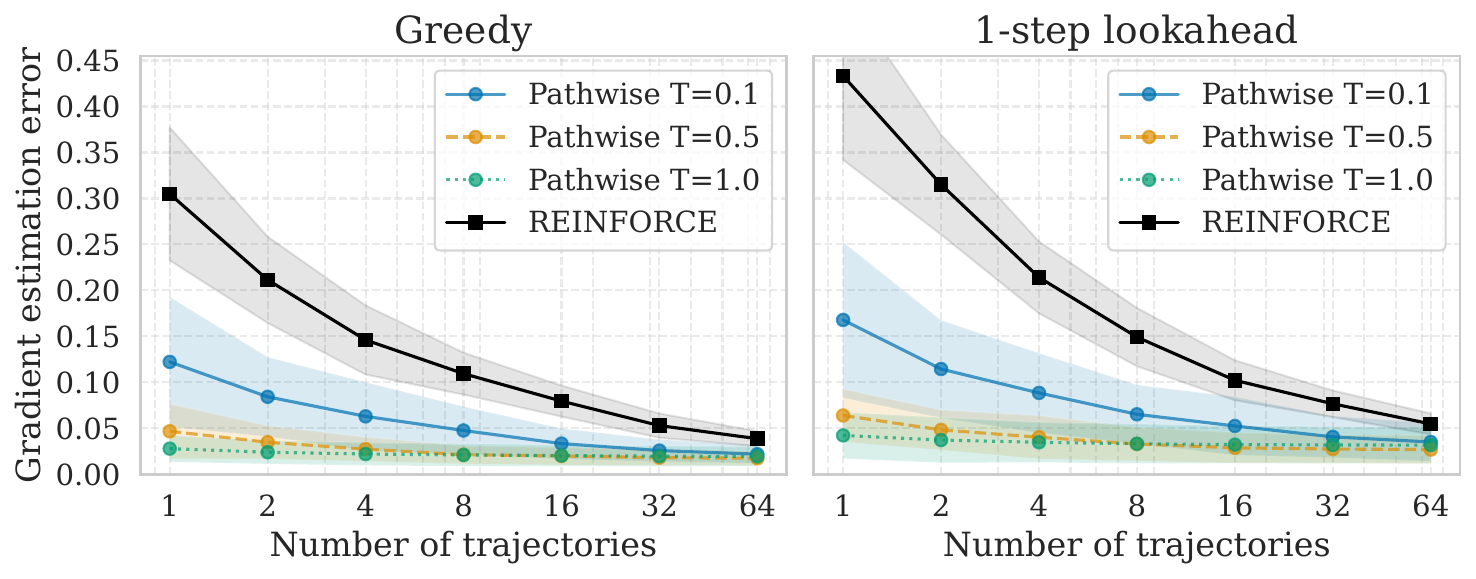}
        \caption{Mean and Standard deviation from 50 fully synthetic evaluation tasks from \textbf{BNN-Plan}}
        \label{fig:Context Length}
    \end{subfigure}
    \begin{subfigure}{\columnwidth}
        \centering
        \includegraphics[width=0.6\linewidth]{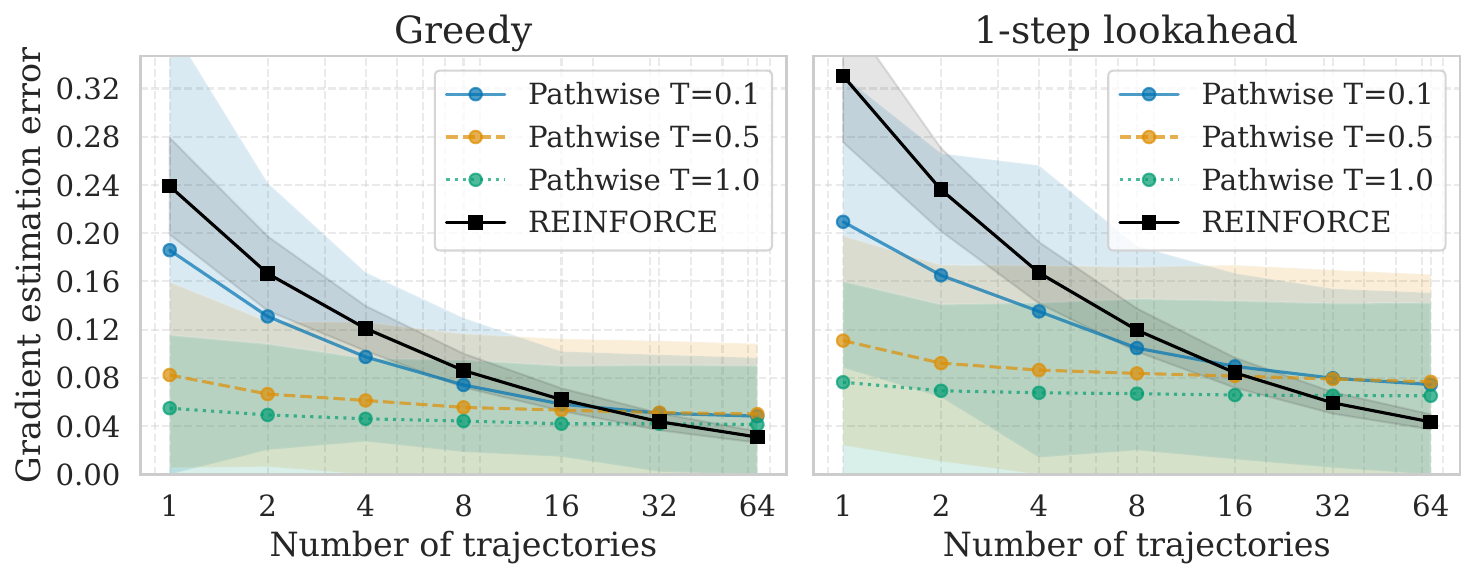}
        \caption{Mean and Standard deviation from 50 semi-synthetic tasks from \MIMIC}
        \label{fig:Context Length}
    \end{subfigure}
    \caption{Across 50 sampled tasks, we demonstrate that pathwise estimation of gradients using our approximation leads to low error. We evaluate compared to an unbiased reference gradient estimated using REINFORCE with 1000 trajectories per sample.}
    \label{fig:Gradient_bias}
\end{figure}
\FloatBarrier

\subsubsection{Quality of the Policy Gradient Estimator}
\label{app:gradient_estimator_quality}

To clarify why our approach may be more desirable for policy learning from a practical standpoint, we include an additional experiment evaluating the quality of the policy gradient estimator. Since we cannot directly provide policy performance comparisons in this specific setting, we instead compare the quality of the policy gradient estimates themselves.

We compute a reference policy gradient estimate using an unbiased REINFORCE estimator on the fully observed data over 5 actions. We then compare this reference against the estimates produced by our proposed surrogate pathwise estimators, as well as the offline and semi-offline Inverse Propensity Weighting (IPW) estimators introduced in \citet{von2023evaluationstatic}, which only have access to partially observed data.

We evaluate the quality of the gradient estimates using three metrics relative to the full-data REINFORCE reference:
\begin{itemize}
    \item \textbf{Bias norm:} Measures the absolute estimation error.
    \item \textbf{Relative bias:} Normalizes the error by the reference gradient magnitude.
    \item \textbf{Cosine similarity:} Measures the directional alignment, independent of magnitude.
\end{itemize}

\begin{table}[htbp]
    \centering
    \caption{Bias versus full-data REINFORCE 5-step reference ($N=500$ samples). Results are reported as mean $\pm$ standard deviation across 50 sampled synthetic tasks. $n$ denotes the number of Monte Carlo samples used for estimation, and $T$ denotes the temperature parameter.}
    \label{tab:grad_estimator_bias}
    \begin{tabular}{lccc}
        \toprule
        \textbf{Estimator} & \textbf{Bias norm} ($\downarrow$) & \textbf{Rel. bias} ($\downarrow$) & \textbf{Cosine sim} ($\uparrow$) \\
        \midrule
        Pathwise 1-step ($T=0.5, n=4$) & $0.068 \pm 0.037$ & $0.88 \pm 0.33$ & $0.58 \pm 0.29$ \\
        Pathwise 2-step ($T=0.5, n=4$) & $0.075 \pm 0.049$ & $0.95 \pm 0.41$ & $0.67 \pm 0.26$ \\
        \midrule
        Offline IPW REINFORCE 5-step ($n=64$) & $0.094 \pm 0.022$ & $1.56 \pm 1.07$ & $0.55 \pm 0.32$ \\
        Offline IPW REINFORCE 5-step ($n=128$) & $0.072 \pm 0.022$ & $1.16 \pm 0.71$ & $0.62 \pm 0.31$ \\
        Semi-offline IPW REINFORCE 5-step ($n=64$) & $1.215 \pm 0.375$ & $19.98 \pm 14.29$ & $0.29 \pm 0.29$ \\
        Semi-offline IPW REINFORCE 5-step ($n=128$) & $1.128 \pm 0.345$ & $18.70 \pm 14.43$ & $0.31 \pm 0.29$ \\
        \bottomrule
    \end{tabular}
\end{table}

As in \citet{von2023evaluationstatic}, we use logistic regression models to estimate the propensities. We find that the IPW-based methods can suffer from high variance and are highly susceptible to model approximation errors. This is especially true for the semi-offline IPW approach, which requires computing products over marginal inverse propensity weights along the entire trajectory. In contrast, our proposed pathwise estimators are significantly more data-efficient, requiring only $n=4$ samples to perform comparably to, or better than, the baseline IPW estimators that utilize $n=64$ or $128$ samples.

\subsubsection{Meta-learning ablations}
We provide additional results investigating alternative meta-learning strategies. One common method uses a conditional neural process (CNP) -like \citep{garnelo2018conditional} approach to learn compact task representations of variable length context, and use these embeddings for downstream decision-making \citep{rakelly2019efficient, wang2024meta}. To investigate whether this is an effective approach, we use our same transformer architecture, but perform multihead-attention pooling to learn a task-level embedding instead of full context attention. We use the same pretraining priors, and greedy differentiable policy learning. Additionally, we perform MAML-like \citep{finn2017model} inner gradient updates for the policy and predictor head for each unseen query task encountered during inference, based on the learned CNP task representation. In other words, the model weights are updated using gradients at test-time on the set of context samples for a given query tasks.

The results in \ref{fig:meta-ablations} show that a CNP-like approach leads to degradation in the quality and stability of uncertainty estimates. However, when pretraining on a small, fixed task family (\MNIST) the task embedding leads to slightly improved performance and data efficiency during earlier acquisition steps. We hypothesize that this is due to the compact representation acting as an information bottleneck and regularizing the model, simplifying greedy policy learning.

We find that gradient-based adaptation at inference offers only modest improvements and rapidly overfits, indicating that the compact task representation acts as a bottleneck that the test-time adaptation cannot overcome. The main exception is the \MIMIC mortality task, where we see consistent gains. We hypothesize this is because very low positive prevalence is underrepresented in the pretraining prior, so adaptation corrects this mismatch.

\begin{figure}[!ht]
    \centering
\includegraphics[width=0.8\linewidth]{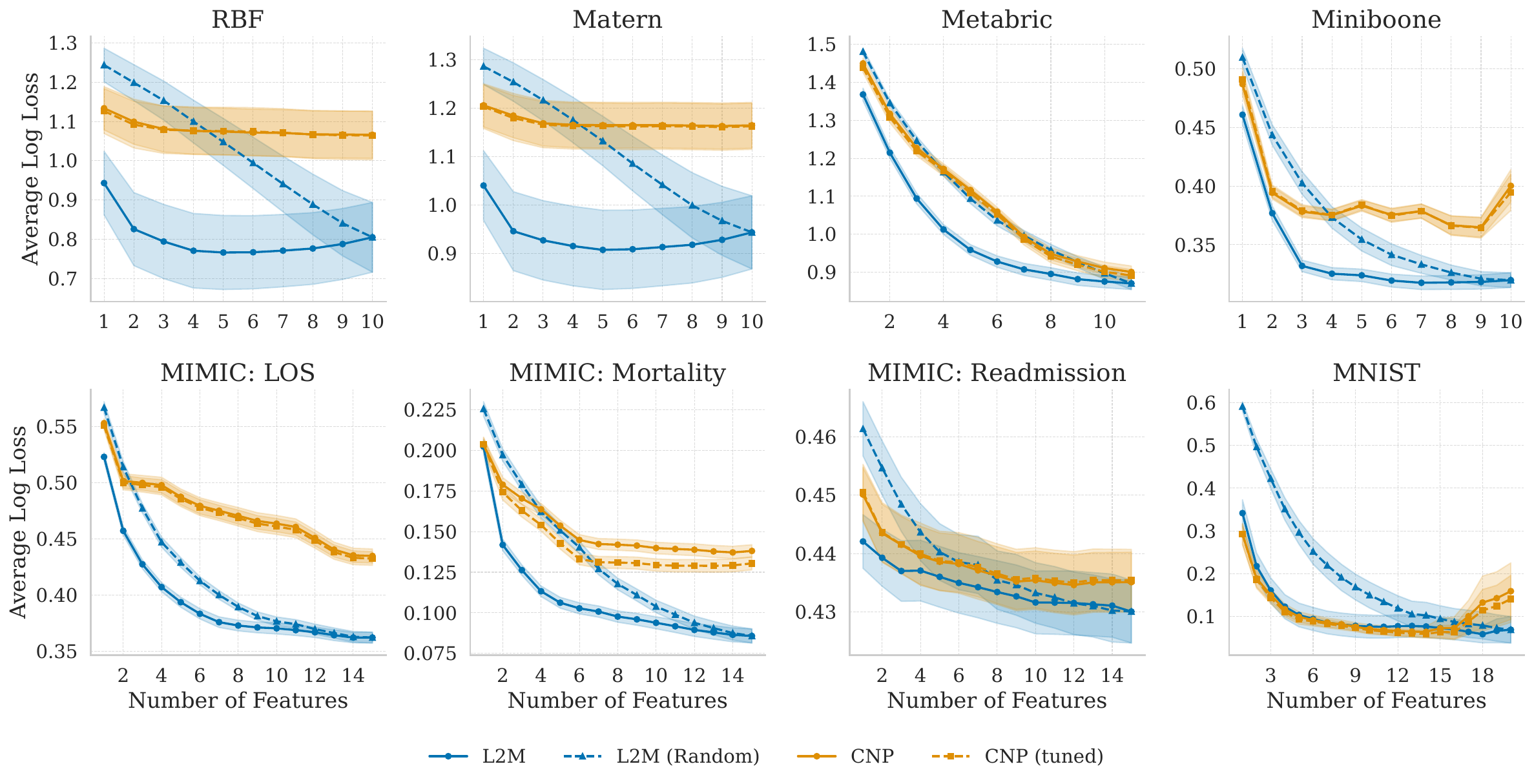}
    \caption{A CNP-like approach that learns a compact task representation via attention pooling leads to less stable uncertainty estimates, especially when pretrained on a diverse task prior.}
    \label{fig:meta-ablations}
\end{figure}
\newpage
\subsubsection{Policy Visualizations}
We demonstrate how our sequence modeling approach is able to learn task-specific policies. We visualize the selected actions by the greedy policy in example evaluation tasks. We show both synthetic tasks sampled from the \BNN prior, and tasks using real labels from \MIMIC.

\begin{figure}[!ht]
\centering
\includegraphics[width=0.9\textwidth]{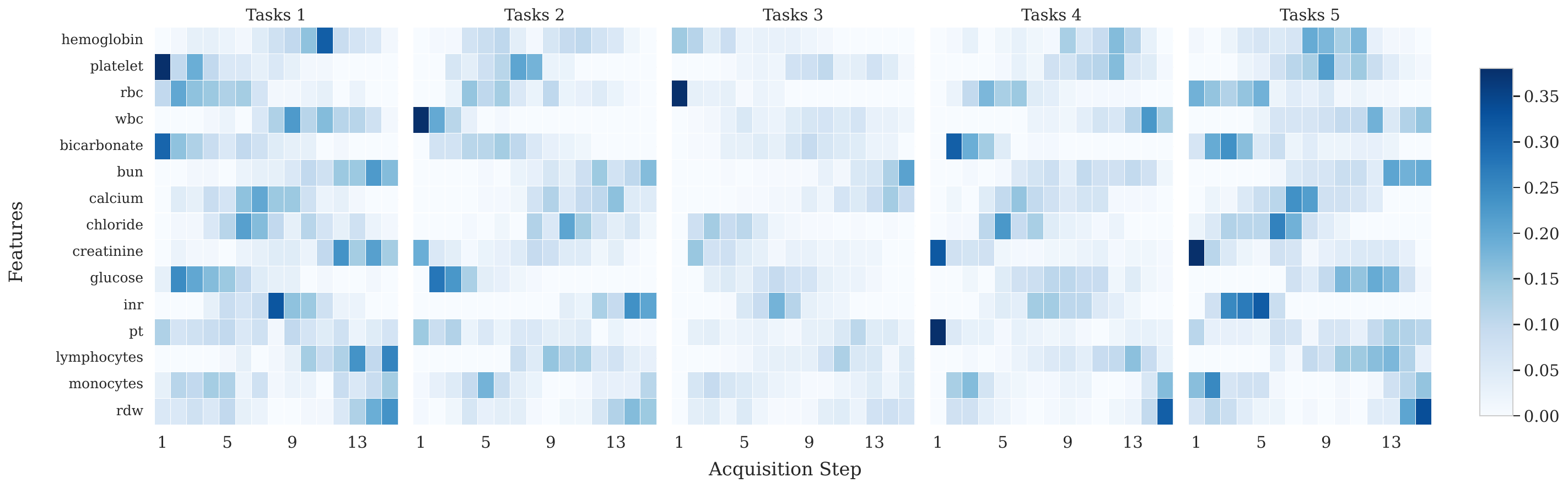}
\caption{\MIMIC Dataset: Example feature acquisition patterns on semi-synthetic evaluation tasks constructed from the BNN prior. Each task contains 500 query samples, and the heatmap intensity indicates the fraction of query samples for which a given feature is selected at each acquisition step.}
\end{figure}

\begin{figure}[!ht]
\centering
\includegraphics[width=0.65\textwidth]{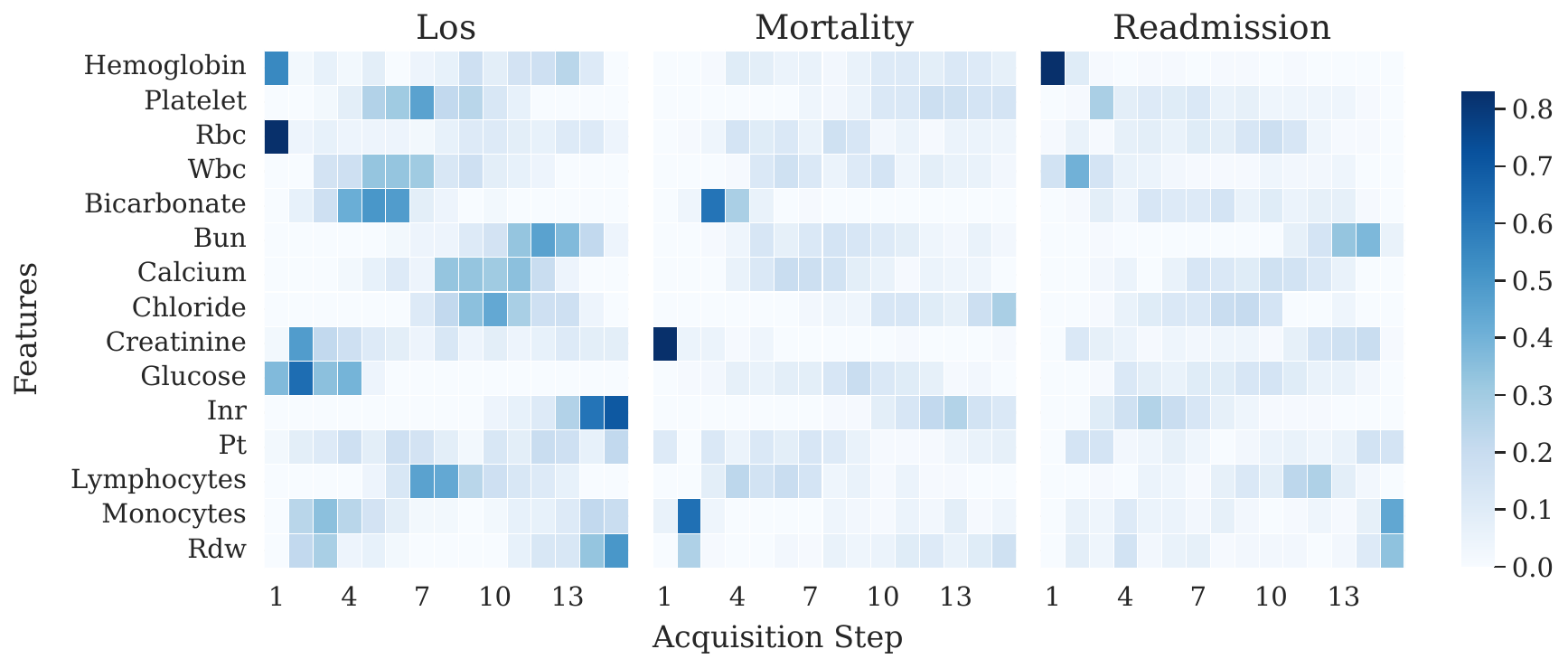}
\caption{\MIMIC Dataset: Example feature acquisition for a set of 500 query samples on the semi-synthetic tasks with real labels.}
\end{figure}
\FloatBarrier

\end{document}